\DeclareMathOperator\erf{erf}
\newtheorem{remark-star}{Remark}
\newtheorem{remark-star-1}{Remark}
\newtheorem*{proof-sketch}{Proof Sketch}
\newtheorem{theorem}{Theorem}
\newcommand{\diag}{\operatorname{diag}}
\numberwithin{equation}{section}
\newcommand{\cbeal}{\textsc{CbeAL}\xspace}
\newcommand{\norm}[1]{\left\lVert#1\right\rVert}
\newcommand{\RN}[1]{%
  \textup{\lowercase\expandafter{\romannumeral#1}}%
}
\author[1]{Yingyan~Zeng}
\affil[1]{Virginia Tech}
\author[2]{Xiaoyu~Chen}
\affil[2]{University of Buffalo}
\author[1]{Ran~Jin}
\title{Ensemble Active Learning by Contextual Bandits for AI Incubation in Manufacturing}
\begin{document}



\maketitle

\begin{abstract}
An Industrial Cyber-physical System (ICPS) provide a digital foundation for data-driven decision-making by artificial intelligence (AI) models.
However, the poor data quality (e.g., inconsistent distribution, imbalanced classes) of high-speed, large-volume data streams poses significant challenges to the online deployment of offline-trained AI models.
As an alternative, updating AI models online based on streaming data enables continuous improvement and resilient modeling performance.
However, for a supervised learning model (i.e., a base learner), it is labor-intensive to annotate all streaming samples to update the model. 
Hence, a data acquisition method is needed to select the data for annotation to ensure data quality while saving annotation efforts.
In the literature, active learning methods have been proposed to acquire informative samples.
Different acquisition criteria were developed for exploration of under-represented regions in the input variable space or exploitation of the well-represented regions for optimal estimation of base learners.
However, it remains a challenge to balance the exploration-exploitation trade-off under different online annotation scenarios.
On the other hand, an acquisition criterion learned by AI adapts itself to a scenario dynamically, but the ambiguous consideration of the trade-off limits its performance in frequently changing manufacturing contexts.
To overcome these limitations, we propose an ensemble active learning method by contextual bandits (\cbeal).
\cbeal incorporates a set of active learning agents (i.e., acquisition criteria) explicitly designed for exploration or exploitation by a weighted combination of their acquisition decisions.
The weight of each agent will be dynamically adjusted based on the usefulness of its decisions to improve the performance of the base learner.
With adaptive and explicit consideration of both objectives, \cbeal efficiently guides the data acquisition process by selecting informative samples to reduce the human annotation efforts.
Furthermore, we characterize the exploration and exploitation capability of the proposed agents theoretically.
The evaluation results in a numerical simulation study and a real case study demonstrate the effectiveness and efficiency of \cbeal in the manufacturing process modeling of the ICPS.
\end{abstract}

\section{Introduction}\label{Sec:Intro}
Industrial Cyber-physical Systems (ICPSs) integrate the cyber and physical worlds, which serve as the backbone of the Fourth Industrial Revolution \citep{colombo2017industrial}. 
By embracing the Internet of Things (IoT), an ICPS interconnects manufacturing equipment with ubiquitous sensors, actuators, and computing units, forming a low-cost, high-availability, and high-accessibility network \citep{zhang2019fog}. 
The high-speed and large-volume sensing data collected from such a network have advanced many data-driven decision-making methods to support manufacturing efficiency, quality improvement, and cost reduction. 
For example, artificial intelligence (AI) models such as support vector machine (SVM) and deep neural networks have been employed for quality modeling and process monitoring of fused deposition modeling (FDM) processes \citep{ gobert2018application, wang2021pyramid}, Aerosol® Jet Printing processes \citep{sun2017quality}, etc.
However, most AI models are proposed following an offline-training-online-deployment (OTOD) strategy, which is only effective when the quality of the training data set is guaranteed (e.g., training data can provide adequate estimations of the underlying true model of the variable relationships in supervised learning).
In practice, various factors can change such an underlying model in manufacturing processes (e.g., the degradation of manufacturing equipment or change of product design), which results in an erratic performance of AI models during their online deployment.

To improve the modeling performance, one can either build a dynamic model \citep{jin2019dynamic} or investigate an online model training mechanism to adapt existing models to online data streams.
However, constructing a dynamic model highly depends on the prior knowledge of the distribution changing patterns and root causes.
Instead of focusing on creating a better model, data-centric AI has been proposed as a more general approach to engineer the data needed to successfully build an AI model \citep{datacentric}.
From a holistic view, we envision a resilient AI system to identify and mitigate the performance fluctuation of AI models caused by abrupt changes (i.e., data distribution, learning algorithms, and computational resources), which jointly consider managing the data quality as well as adapting the existing models during the online deployment.
Therefore, in this paper we focus on online model training by actively acquiring samples to ensure data quality such that the resilient AI performance can be achieved.

Here, we focus on the supervised learning model as the base learner, where the data quality is considered from the aspect of representativeness (i.e., consistent distribution between the training and testing data sets) and class imbalance \citep{gupta2021data}.
In the offline-training step, the high-quality data set can be defined when there are sufficient representative samples for training, such as sufficient samples for multimodal distributions \citep{he2007fault}, a distribution of the training samples close to that of the testing samples, and when there are balanced class distributions \citep{branco2016survey}.
However, OTOD cannot support the AI modeling since in the context of high-speed, large-volume streaming data, the data quality of training data sets needs to be evaluated continuously.
\begin{figure}[!htb]
    \centering
    \captionsetup{justification=centering}
    \includegraphics[width=0.6\textwidth]{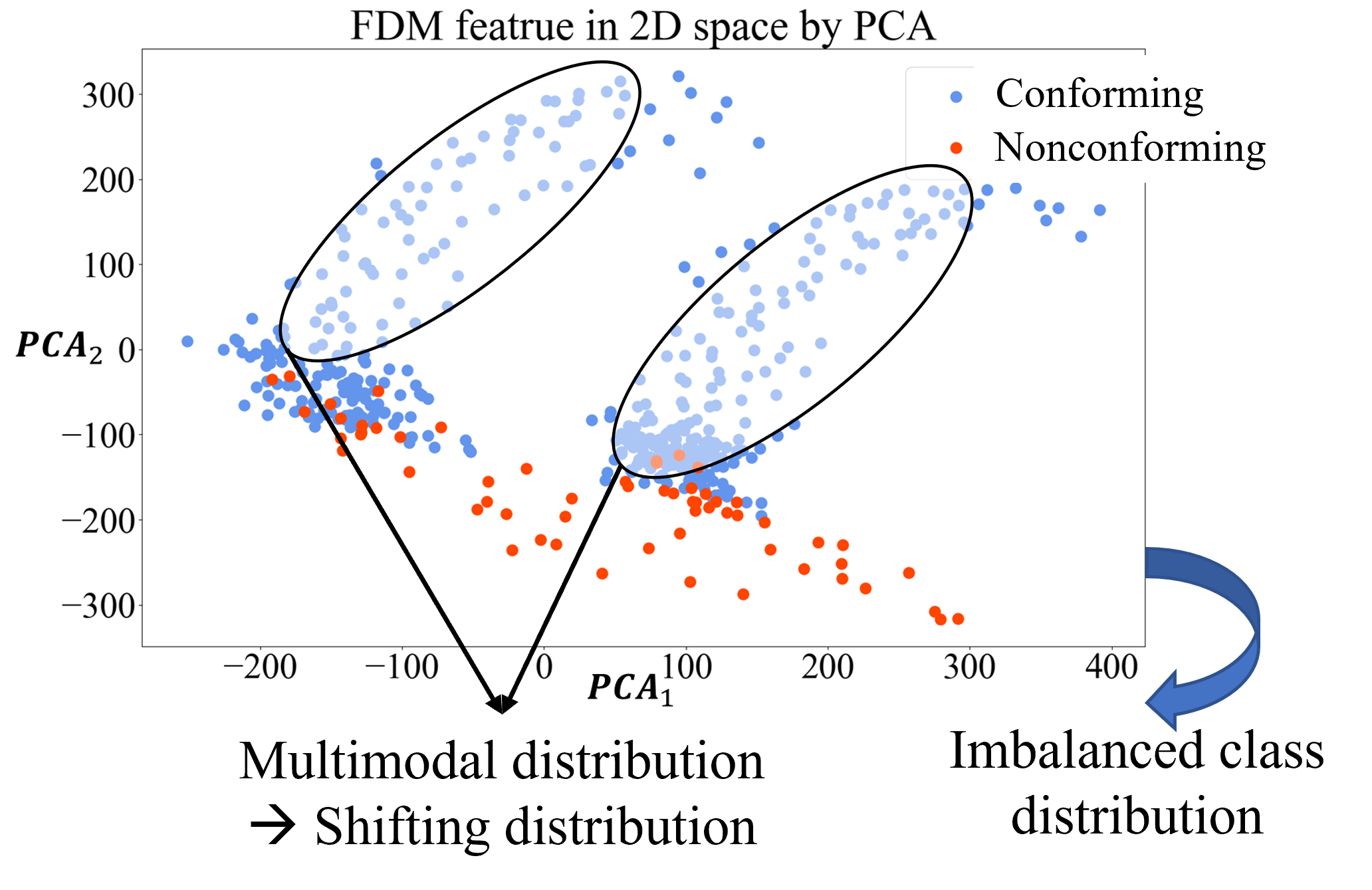}
    \caption{Distribution of FDM input data with reduced dimensions by Principle Component Analysis (PCA)} 
    \label{fig:motivation}
\end{figure}
As a motivation example, Fig.~\ref{fig:motivation} demonstrates the multimodal distribution and the imbalanced class of samples collected from a highly personalized FDM process, where the samples are projected to the principle component directions of the input variable space by Principle Component Analysis (PCA).
The two clusters are generated from two layers in FDM due to different product geometric designs.
The objective of the data analysis in Fig.~\ref{fig:motivation} is to use \textit{in situ} process variables to predict the layer-to-layer binary quality variable, which indicates the surface roughness as a classification problem.
Assume that the samples collected until time $t$ will be used to train the model, where the samples from only one cluster have been observed.
After time $t+1$, the streaming samples are from the other cluster, i.e., the FDM goes to the next layer with another design in that slice. 
The collected training data set is not representative due to the shift of distributions, thus resulting in a sudden decrease of the prediction performance of the pre-trained model.

Motivated by Fig. 1, it is important to actively select the streaming samples for annotation which ensures data quality.
In earlier studies, Design of Experiments (DoE) was proposed to improve the supervised learning models by generating samples to identify significant variables \citep{fisher1936design, jin2015ensemble}.
Recent efforts in data filtering, either model-based \citep{li2021cluster} or model-free methods \citep{trost1986statistically, liberty2016stratified} aim at accurately modeling the underlying system by sampling a subset with good representativeness of the population distribution.
However, DoE focus on actively generating the data while data filtering methods require a completely collected data set before the selection.
Neither method can be directly applied to acquire the streaming data in the ICPS.

To obtain high-quality data, humans also play an indispensable role in the data annotation with their domain knowledge.
In particular, the online data annotation requires real-time experimentation and a human-machine interface for domain experts to interact with \citep{tu2020better}, which is time-consuming and labor-intensive.
While automatic annotation methods employ semi-supervised learning methods to annotate the samples by the most confident predictions \citep{kostopoulos2018semi},
one potential disadvantage is the deteriorating performance of AI models caused by mislabelled cases \citep{carlini2021poisoning}.

Recognizing the significance of human-in-the-loop, we proposed an AI incubation framework \citep{chen2022inn}, designed to foster interaction between domain experts and AI systems throughout the training and deployment phases. 
This framework facilitates AI model development through human-AI collaboration, including model structure inspired by human decision-making processes \citep{chen2022inn}, and feature generation based on human visual searching patterns \citep{chen_2021}.
In this work, we further explore the aspect of training data selection and annotation within AI incubation.  
Here, the human annotator acts as an `incubator', labeling the acquired samples for training AI models.
Our aim is to reduce human efforts while efficiently improving the learning performance of the base learner.

To create an online data acquisition method, an acquisition criterion needs to be designed to determine whether a sample should be selected for human annotation in order to acquire only informative samples and to provide high-quality training data sets for the base learner.
This acquisition decision can be viewed as a dilemma between exploration and exploitation of the input variable space \citep{bondu2010exploration}.
Here, exploitation is defined as acquiring a sample around the conceptual boundary for boundary learning, whereas exploration is defined as acquiring a sample located in the under-represented region for the input variable space discovery. Exploitation-oriented criteria work well when the base learner can easily detect the important regions \citep{osugi2005balancing}.
Otherwise, exploration is required for more complex scenarios such as the exclusive XOR problem.
For the scenario of motivation example shown in Fig.~\ref{fig:motivation}, if we concentrate exclusively on the samples near the decision boundary for exploitation, samples from another cluster may be overlooked throughout the streaming process, leading to an inaccurate estimate of the decision boundary and poor performance in under-represented regions.
On the other hand, exclusive exploration will easily lead to a base learner with high uncertainty.
Therefore, a well-balanced exploration-exploitation trade-off is essential for guiding the online annotation of AI models with complex distribution in the input variable space \citep{loy2012stream}.

In this paper, we propose an ensemble active learning method by contextual bandits (\cbeal) to improve the exploration and exploitation trade-off under various scenarios,  thus guiding an efficient and effective human annotation process. 
In \cbeal, a set of active learning agents with human-designed criteria is incorporated by contextual bandits \citep{beygelzimer2011contextual}, where a joint acquisition decision is made by the weighted combination of individual decisions.
Here, we use "agents" as an umbrella term to refer the acquisition criteria in active learning methods.
The candidate active learning agents incorporated are designed to pursue an explicit objective of exploration or exploitation with theoretical justification, respectively.
Thus, during the annotation process, the weight (i.e., decision power) of each agent indicates the current tendency for exploration or exploitation, which will be updated dynamically by the bandits solver subject to the historical reward.
To improve the learning performance of the base learner, the reward is defined as the usefulness of the acquisition behaviour of \cbeal where acquiring a sample which would be wrongly predicted by the base learner is considered useful.
Therefore, the online data acquisition problem can be effectively addressed by \cbeal, which pursues the exploration and exploitation trade-off through the ensemble of a set of active learning agents.
In this sense, \cbeal is a generic active learning framework that reduces manual adjustment of active learning agents under frequently changed manufacturing data distributions.
In addition to improving learning performance, \cbeal also increases the interpretability of AI models from the data perspective \citep{zhao2019oui}, as the weight of each agent in each acquisition step explains whether the sample is annotated for exploration or exploitation.


The remainder of this paper is organized as follows. Section \ref{sec:reveiw} summarizes the related work. 
Section~\ref{sec:Method} introduces the proposed \cbeal method and provides the theoretical justification. 
Section~\ref{sec:sim} evaluates the performance of \cbeal by simulation studies. 
Section~\ref{sec:case} validates \cbeal via a real case study of online quality modeling of FDM in the ICPS. 
We conclude this work with some discussion of future work in Section~\ref{sec:conclusion}.

\section{Related Work}
\label{sec:reveiw}
\subsection{Online Model Updating in Industrial Cyber-physical Systems}
In the past decades, the ICPS has integrated physical manufacturing equipment with sensing and actuation networks as well as ubiquitous computational resources, which provides the digital foundation for the online updating of AI models \citep{rao2015online, wang2020online}.
With the streaming observational data and online computational resources, the online updating techniques of AI models have been investigated to enable the close modeling of manufacturing processes and facilitates the efficient decision-making in ICPSs.
For example, \citet{bastani2016online} proposed an online classification model for real-time monitoring in additive and semiconducting manufacturing processes.
\citet{wang2016large} developed a large-scale online multitask learning model to coordinate machine actions in the ICPS online.
Online model updating strategies have also been developed for model calibration and predictive maintenance \citep{li2018integration, xia2018recent}.
However, the aforementioned studies focus on developing the online updating algorithm of the AI model via Bayesian methods or distributed optimization methods to reduce the computational burden with large-volume streaming data, which are effective for unsupervised learning problems or supervised learning scenarios with easily collected responses.
Yet for many supervised learning scenarios in the ICPS, the passively collected data need to be annotated via real-time experimentation by domain experts, e.g., the inspection of a batch of 400 wafers may take more than 8 hours \citep{jin2012sequential}.
The lack of consideration of human annotation efforts renders these online updating methods inefficient for supervised AI models, especially in highly personalized manufacturing environments with rapid product and process changes \citep{alexopoulos2020digital}.


\subsection{Data Quality and Data Acquisition Methods}
Compared to the accuracy and efficiency of learning algorithms, validating and monitoring the quality of data fed to AI models is an equally important problem \citep{caveness2020tensorflow}.
Metrics for assessing the data quality for classification tasks include outlier detection, boundary complexity, label noise, shifting distribution, class imbalance, etc \citep{gupta2021data}. 
In the context of streaming data, \citet{caveness2020tensorflow} developed a data analysis and validation system to monitor significant changes between successive batches of the training data by summary statistics (i.e., mean, variance, etc.) with human investigation for a machine learning pipeline. 
However, without considering the informativeness of the data related to the AI model, the data collected by such a system cannot effectively improve the modeling performance.

On the other hand, to improve the performance of supervised learning models and reduce human annotation labor efforts, methods have been developed to facilitate effective data acquisition for high-quality informative data.
These methods include providing acquisition recommendations for human annotation (e.g., sequential design and active learning) and automatic annotation (e.g., semi-supervised learning) \citep{diete2017smart}, where limited approach suits the online streaming data. 
Sequential design focuses on selecting the samples in a sequential manner to achieve certain optimality criteria such as maximum entropy, maxmin distance \citep{lam2008sequential, stinstra2003constrained}.
For example, \citet{yan2020akm2d} proposed an adaptive sequential sampling method to balance sampling efforts between the exploration and exploitation of anomalous regions for anomaly detection in the ICPS.
However, these methods provide active recommendations that require experiments to be conducted at selected points in the input variable space, which cannot address passively collected data.
Semi-supervised learning has been employed to automate the annotating process such that the base learner can learn from both labeled and unlabeled data. \citep{zhu2009introduction}.
However, adding mislabelled cases by a semi-supervised learner to the training set may hamper the base learner's learning performance \citep{carlini2021poisoning}.
Due to the aforementioned limitations, we focus on active learning methods which provide acquisition recommendations for the passively collected data.
Active learning has been leveraged for minimizing the human effort as well as improving the modeling performance in various applications including human activity recognition \citep{adaimi2019leveraging}, threatening surveillance event detection \citep{loy2010stream}, wearable sensing platforms annotation \citep{solis2019human}, etc.

\subsection{Exploration and Exploitation in Active Learning}
Active learning reduces annotation efforts for supervised learning models by evaluating the informativeness of samples and acquiring the most informative ones \citep{settles2012active}.
The decision on whether one sample should be labelled can be viewed as a dilemma between the exploration and exploitation of the input variable space.
In the earlier work, most efforts exploit samples with large amount of information about the base learner as an exploitation-oriented strategy.
Metrics such as classification uncertainty\citep{lewis1994sequential}, margin\citep{balcan2007margin}, and entropy \citep{fu2013survey} of the base learner have been adopted to measure the informativeness and compared with corresponding thresholds to make the acquisition decision.
To achieve exploration for online streaming data, \citet{ienco2014high} modeled the local density of an sample to acquire the sample lying in a dense region with a small classification margin.
For trivial scenarios where only parts of the input variable space have to be known in order to perform optimally, exploitation-oriented acquisition criteria can be more effective to avoid exploring regions that are irrelevant for the decision boundary estimation \citep{thrun1995exploration}.
However, in nontrivial scenarios,  exploration is crucial to uncover relevant, unknown regions when the base learner's estimation of the decision boundary is imprecise.
Thus, the exploration-exploitation trade-off becomes vital under scenarios with exclusive XOR problem, clusterwise structure, imbalanced class distribution, etc \citep{osugi2005balancing, loy2012stream}.
Relying solely on either exploration or exploitation falls short in achieving optimal learning outcomes due to incompatibility with varied online annotation scenarios.
To achieve a compromise, a common acquisition strategy is to conduct exploration and exploitation simultaneously.
Considering two acquisition criteria which are dedicated to exploration (e.g., random sampling) and exploitation (e.g., uncertainty sampling) respectively, the compromise can be achieved by selecting one criterion with a certain probability for each streaming sample.
One typical example is the $\epsilon$-greedy policy which enforces the input variable space exploration with probability $\epsilon$ in each round \citep{thrun1995exploration}.
Representative sampling methods have also been designed as the combination of exploitation-oriented and exploration-oriented criteria \citep{wang2011active}.
However, the ambiguity of the weight of each objective requires further fine-tuning for each learning scenario.
To avoid ambiguity, \citet{loy2012stream} extended the Query-by-Committee (QBC) \citep{seung1992query} paradigm to a nonparametric Bayesian model to address unknown class discovery and imbalanced class distribution for the  online annotation.
However, without taking into account the informativeness (i.e., uncertainty) of a sample about the base learner, the proposed \textsc{QBC-PYP} cannot adjust the exploration-exploitation to the learning performance of the base learner.
In brief, the simple combination of active learning criterion
cannot handle challenging online data acquisition scenarios even with fine-tuning, which is due to the lack of compatibility with the data stream and the lack of adaptiveness to the learning performance of the base learner \citep{elreedy2019novel}.
Therefore, an ensemble of multiple criteria is desired to guide the dynamic exploration-exploitation trade-off in an adaptive and data-dependent manner.

As a promising approach, active learning has been recently formulated in the framework of reinforcement learning (RL) and multi-armed bandits where the objective is to learn the optimal acquisition criterion as a policy to maximize the cumulative reward \citep{ebert2012ralf}.
However, these methods can also lack systematic and explicit considerations for both exploration and exploitation objectives. 
\citet{wassermann2019ral} proposed Reinforced Active Learning (RAL) which modeled the stream-based active learning as a contextual bandits problem.
In RAL, a set of base learners was gathered as the committee to provide acquisition advice based on the certainty degree of the sample to each learner.
The acquisition criterion can be viewed as the weighted combination of different uncertainty sampling policies.
In spite of its adaptiveness to the data stream, RAL is highly exploitation-oriented since it mainly focuses on decision boundary learning for each learner.
\citet{baram2004online} first proposed \textsc{COMB} to blend multiple acquisition criteria as experts and consider samples as arms in multi-armed bandits.
Later, \citet{hsu2015active} refined \textsc{COMB} with \textsc{ALBL} using the bandits analogy, treating acquisition criteria as experts. While the bandits framework allows for dynamic adjustment of the exploration-exploitation trade-off, neither method provided guidance on selecting criteria nor addressed the explicit objectives of exploration and exploitation. 
A random selection of general acquisition criteria with a small size may not well address different online annotation scenarios, while a large size of experts may cause problematic performance of the bandits solver.

\section{Methodology}
\label{sec:Method}
To develop the active learning agents for \cbeal and derive the theoretical characterization of the agents, we make the following assumptions:
(i) The sample size of the initial training set $\mathcal{D}_0$ is not large enough  to guarantee satisfactory modeling performance and the samples in $\mathcal{D}_0$ are not uniformly distributed in the input variable space.
(ii) The streaming data have highly imbalanced class distribution.
(iii) There are multiple clusters in the input data distribution.
One common example is that the input data follow a Gaussian mixture distribution.
This assumption is validated by the simulation setup and validated in the case study.
Note that the proposed \cbeal framework is designed for general online annotation scenarios and does not require the assumptions on the input data distribution.

\subsection{Overview of the Proposed Methodology}
Consider the online data annotation scenario with a sample $\bm{x}_t$ collected at time $t, t=1,2,..., T$, where $\bm{x}_t \in \mathbb{R}^p$ is the input for the base learner (i.e., the classification model) $f_t$.
We assume that the classification problem has $c$ classes, and $y_t \in \mathcal{C}=\{1, 2,\dots, c\}$ is the label of the sample $\bm{x}_t$.
Denote the labelled data pool at time $t$ as $\mathcal{D}_t=\{(\bm{x}_1, y_1),...,(\bm{x}_{n_t}, y_{n_t})\}$ with $|\mathcal{D}_t| = n_t$. 
The base learner $f_0$ is pretrained by an initial $\mathcal{D}_0$, which contains a limited number of labelled samples.
Under the aforementioned setting, we propose an active learning strategy to make the acquisition decision \citep{wang2016learning}.
The strategy is applied with (\romannumeral 1) a data source from which one unlabelled sample $\bm{x}_t$ streams at each time stamp without a cost, (\romannumeral 2) a labelled data pool $\mathcal{D}_t$, (\romannumeral 3) human annotators who can provide the label of a sample $\bm{x}_t$ if an acquisition decision is made to acquire it,  (\romannumeral 4) a proposed ensemble acquisition method, and  (\romannumeral 5) the base learner $f_t$ to be updated online with the lableled data set $\mathcal{D}_t$.
We have a budget of $B$ samples for annotation during the streaming process.
\begin{figure}[!htb]
    \centering
    \captionsetup{justification=centering}
    \includegraphics[width=0.7\textwidth]{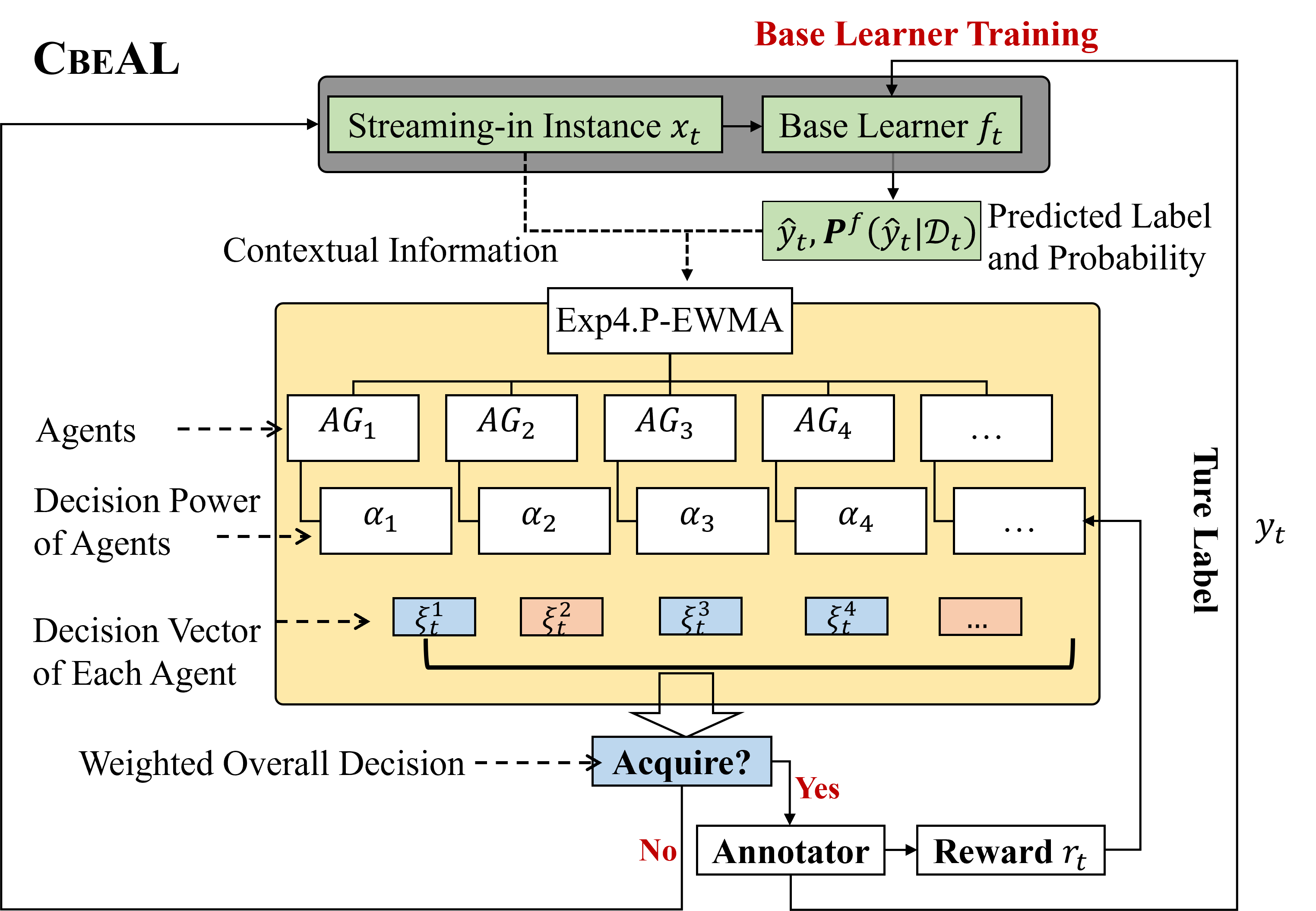}
    \caption{Overview of the proposed \cbeal framework}
    \label{fig:overview}
\end{figure}

As an overview (Fig.~\ref{fig:overview}), our key idea is to ensemble the acquisition decisions made by the exploration- and exploitation-oriented agents and adaptively balance the two aspects based on the context of incoming samples, the learning performance of the base learner, and the historical performance of the agents.
During the online annotation process, at each time point $t$, (i) we receive a sample $\bm{x}_t$. 
(ii) Afterwards, one can obtain the predicted label $\hat{y}_t$ and the side information, such as the predicted probability $\bm{P}^f(\hat{y}_t|\mathcal{D}_t)$ of each class from $f_t$ and take $\bm{x}_t$ and other information as the context input for the proposed contextual bandits solver \textsc{Exp4.P-EWMA}. 
And
(iii) we make the acquisition decision as a weighted majority of the decisions obtained from the set of candidate agents $\{AG1, AG2, ...\}$.
If the decision is to acquire the sample,  we acquire $y_t$ from human annotation and obtain the reward $r_t$, update \cbeal with $r_t$, and retrain the classifier with $(\bm{x}_t, y_t)$; otherwise, we pass this sample without annotation.
The advantage of the proposed framework lies in three aspects:
(i) \cbeal explicitly pursues the input variable space discovery and the decision boundary learning via incorporating exploration- and exploitation-oriented agents while it balances the overall exploration-exploitation trade-off adaptive to the data stream and the learning performance of the base learner by contextual bandits.
(ii) The systematic ensemble of multiple pairs of agents save the efforts for agents selection under various learning scenarios with different input data distribution, feature dimension, signal-to-noise ratio, etc. Therefore, \cbeal is a developed as a generic active learning framework to achieve an adaptive and well-balanced exploration-exploitation trade-off for the incubation of classification models with streaming data.
(iii) \cbeal is scalable in terms of the number of active learning agents to enhance exploration or exploitation.

\subsection{The Ensemble Active Learning by Contextual Bandits}


During the online annotation process, the needs for exploration and exploitation change over time, which depend on the observed samples, the incoming sample, and the updated learning performance of the base learner. 
Since there does not exist a consistent optimization criterion to adjust the trade-off, the shift between the two aspects is nontrivial.

To address the challenge of achieving a good exploration-exploitation trade-off adaptive to various online annotation scenarios,
we formulate the shift between two objectives as a contextual multi-armed bandits problem.
In the bandits problem,
the bandits solver needs to make the decision of pulling one of the $K$ arms as the action for time point $t$ based on the received contextual information.
With each arm characterized by an unknown reward distribution, the objective of the solver is to gain the highest cumulative reward $R = \sum_{t=1}^{\infty}r_t$.
Under the framework of \cbeal, we consider the decision to acquire or not acquire a sample as the arms and propose to ensemble exploration- and exploitation-oriented agents as candidate policies (i.e., experts).
The acquisition decision (i.e., decision of pulling one arm) is jointly made by a weighted combination of the individual decisions from each agent.
Thus, the adjustment of exploration-exploitation trade-off is converted to the selection among different types of agent with the goal of gaining a higher reward, which can be solved by some well-developed bandits solvers.

In \cbeal, the arm pulled at time $t$ (i.e., $a_t \in A = \{1, 2\}, |A|=K=2$) represents the overall acquisition decision, where $a_t=1$ refers to acquiring the sample $\bm{x}_t$ and otherwise $a_t=2$.
At each time point $t$, the incoming sample $\bm{x}_t$, prediction $\hat{y_t}$, and the predicted probability $\bm{P}^f(\hat{y}|\bm{x}_t) \in \mathbb{R}^c$ obtained by the base learner $f_t$ are considered as the observed contextual information that affects the exploration-exploitation trade-off.
Each incorporated active learning agent ($AG_i, i\in\{1, ..., N\}$) makes its own decision $\bm{\xi}^i_t \in \mathbb{R}^K$ based on these contextual information $\{\bm{x}_t, \hat{y_t}, \bm{P}^f(\hat{y}|\bm{x}_t)\}$.
Here ${\xi}^i_{a,t}$ represents the probability of the $i$-th agent taking the $a$-th action.
Specifically,  we have the decision vector $\bm{\xi}^i_t = [p^i_t, 1-p^i_t]$, where $p^i_t$ is the acquisition probability for sample $\bm{x}_t$.
Simultaneously, each agent is assigned with a decision power $\alpha_{i, t}$ at time $t$.
The overall decision is a majority voting of all agents' decisions weighted by their decision power, which leads to a decision vector $\bm{P}_t\in \mathbb{R}^K$.
If the overall decision asks for the ground-truth label, a reward $r_t$ is received after execution.
Afterwards, the proposed bandits solver \textsc{Exp4.P-EWMA} updates the decision power of each agent based on its decision $\bm{\xi}^i_t$ in this iteration and the reward $r_t$.
With the objective of gaining a high cumulative reward, the ensemble of agents is the same as to combine the decisions made by each agent such that the reward gained in each iteration is close to the highest we can get from the best agent in the agent set.
The execution of \cbeal is summarized as Algorithm~\ref{alg:CBEAL}. 
\begin{algorithm}[!htb]
    \textbf{Input}: set of agents $\{AG_1,\dots, AG_N\}$, $\mathcal{D}_0, f_0, B$ \\
    \textbf{Initialize}: $t=0$, $budget\_used=0$\\
    \While{budget\_used $< B$}{
    Receive sample $\bm{x}_t$ and contextual information $\{\bm{x}_t, \hat{y_t}, \bm{P}^f(\hat{y}|\bm{x}_t)\}$\\
    Obtain the decision vector $\bm{\xi}^1_{t}, \dots, \bm{\xi}^N_{t}$ from agents\\
    Execute \textsc{Exp4.P-EWMA solver} for one iteration and obtain the action $a_t$\\
    \uIf{$a_t=1$}{
    Acquire $y_t$, $\mathcal{D}_{t+1} = \mathcal{D}_{t}\cup (\bm{x}_t, y_t)$\\ 
    Train the base learner $f_{t+1}\xleftarrow{}D_{t+1}$\\
    $budget\_used=budget\_used+1$}
    \Else{
    $D_{t+1} = D_{t}, f_{t+1}=f_{t}$
    }
    Update  $\{AG_1,\dots, AG_N\}$\\
    $t = t+1$
    }
    \textbf{Output}: $\mathcal{D}_{t+1}, f_{t+1}$
    \caption{\cbeal}\label{alg:CBEAL}
\end{algorithm}

Notably, the online updating of the base learner is not the focus of this study and we simply retrain the base learner based on all annotated samples from $\mathcal{D}_{t+1}$.
For a more efficient updating, online learning algorithms, such as first-order algorithms \citep{zinkevich2003online} and Bayesian-based approaches \citep{chai2002bayesian}, can be adopted depending on the base learner.  

To integrate the goals of active learning and multi-armed bandits in order to provide informative acquisition, the design and characterization of the reward are critical.
We define the reward $r_t$ as suggested in \citep{wassermann2019ral}:
\begin{equation}
    r_t = \begin{cases}
    \rho^+,& \text{if } \hat{y_t}\neq y_t\\
    \rho^-, & \text{if } \hat{y_t} = y_t.
    \end{cases}\label{eq:CBEAL_reward}
\end{equation}
The reward $r_t$ can only be obtained if the overall decision made by \cbeal acquires the ground-truth label.
Otherwise, it is zero.
Intuitively, the acquisition action will be rewarded if the base learner would have made a wrong prediction, otherwise it will be penalized since this acquisition is considered unnecessary.  
Therefore, it measures both the informativeness and the usefulness of an acquisition decision.
Based on this design, the reward of acquiring a sample is determined by the performance of the current base learner $f_t$, the incoming sample $(\bm{x}_t, y_t)$, and also the performance of the bandits learner \cbeal.
Thus, the sequence of reward $\{r_1, r_2, ..., r_t\}$ is autocorrelated.
This characteristic is another reason that we adopt the setting of adversarial bandits \citep{auer2002nonstochastic}, where
one active learning agent ($AG_i$) is considered as one expert and decisions from each expert are simultaneously considered to make a joint decision, since no statistical assumption is made on reward generation in this setting.
Additionally, instead of considering one agent as one arm, incorporating agents as experts also makes the number of agents scalable. 
In summary, with the designed setting of contextual information and the reward, the updated decision power of each agent adjusts the exploration-exploitation trade-off to improve the learning performance of the base learner.

To solve the formulated contextual bandits problem in \cbeal, we propose the \textsc{Exp4.P-EWMA solver}, where we embed a control chart-based flipping mechanism to \textsc{Exp4.P}\citep{beygelzimer2011contextual}.
To balance the overall exploration-exploitation behaviour, the exploration- and exploitation-oriented agents are incorporated in \cbeal by pairs, which can be easily adjusted for a specific online annotation scenario.
However, with the pair ensemble, the direct application of \textsc{Exp4.P} can easily lead to a dominant agent (i.e., an agent consistently has the highest decision power) from an early stage, which makes \cbeal act no difference from a single active learning agent dedicated to exploration or exploitation.
This can be expected since the pure exploration strategy in the early stage may cause the acquisition of samples with low uncertainty, so that the decision power of exploration-oriented agents keeps decreasing until a level too low to contribute to the overall acquisition decision any longer.
To avoid the early convergence
in \textsc{Exp4.P}, a control chart-based flipping mechanism is integrated into the solver.
Denote the standardized weight (i.e., standardized decision power) of the $i$-th agent at time $t$ as $\alpha^s_{i, t}$, where $\alpha^s_{i, t} = \frac{\alpha_{i,t}}{\sum_{i=1}^N\alpha_{i,t}}$.
We monitor each standardized weight by an EWMA chart~\citep{hunter1986exponentially}, which detects weight drift over time.
The intuition is that if the decision power of one agent keeps decreasing or increasing from the beginning, the decision power of all pairs of agents will be flipped so that the agents with lower power have more chances to lead the decision in the following period.
Note this forced-exploration phase will only happen in a short period during the whole process, which is controlled by a hyperparameter $\gamma$.
 Denote the weighting factor for EWMA as $\lambda$, the size factor of shift to detect as $h$, and the estimated variance of $\alpha^s_{i, t}$ as $s^2_{i, t}$.
With the flipping mechanism, the proposed \textsc{Exp4.P-EWMA solver} is detailed as follows:
\begin{algorithm}[!htb]
    Parameters: $\delta, \gamma, h,  p_{\text{min}}\in[0, 1/K]$\\
    Initialization: Set $\alpha_{i,1}=1 ,\alpha^{EWMA}_{i, 1}=\frac{1}{N} $ for $ i=1, \dots, N, \mu=\frac{1}{N}$.\\
    \For{$t=1,2,\dots$}{
        \textbf{Input}: decision vector of each agent $\bm{\xi}^1_{t}, \dots, \bm{\xi}^N_{t}$\\
        \SetKwBlock{Fna}{\textnormal{\textbf{Step 1}: \textsc{Exp4.P}}}{}
        \Fna{
            For $a=1,\dots, K$ get the final decision probability $\bm{P}_{t}$:  $P_{t, a}=(1-Kp_{\text{min}})\sum_{i=1}^N\frac{\alpha_{i,t}\xi_{a,t}^i}{\sum_{i=1}^N\alpha_{i,t}}+p_{\text{min}}$\\
            Draw the action $a_t$ based on $\bm{P}_{t}$ and receive the reward $r_t$\\
            for $a=1,\dots, K$ set $\hat{q}_{a,t}=
            \begin{cases}
            r_t/P_{t, a},& \text{if } a=a_t\\
            0, &\text{otherwise}
            \end{cases}, \hat{\bm{q}}_t = [\hat{q}_{1,t}, ..., \hat{q}_{K,t}]\in\mathbb{R}^K$\\
            for $i=1,\dots, N$ set $\hat{g}_{i,t}=\bm{\xi}^i_{t}\cdot \hat{\bm{q}}_t^T$, $\hat{v}_{i,t}=\sum_{a}\xi^i_a/P_{t, a}$\\
            Update the decision power: $\alpha_{i, t+1} = \alpha_{i, t}\cdot \exp{(\frac{p_{\text{min}}}{2}(\hat{g}_{i,t}+\hat{v}_{i,t}\sqrt{\frac{\ln{N}}{KT}}))}$
            }
        \SetKwBlock{Fna}{\textnormal{\textbf{Step 2}: EWMA-based flipping mechanism}}{}
        \Fna{
            \For{$i=1,\dots,N$}{
                $\alpha^s_{i, t+1} = \frac{\alpha_{i,t+1}}{\sum_{i=1}^N\alpha_{i,t+1}}$\\
                $\alpha^{ewma}_{i, t+1} = \lambda \cdot \alpha^s_{i, t+1} + (1-\lambda)\alpha^{ewma}_{i, t}$\\
                $ LCL = \mu - h \cdot \frac{\lambda}{2-\lambda} \cdot s^2_{i,t} $,
                $ UCL = \mu + h \cdot \frac{\lambda}{2-\lambda} \cdot s^2_{i,t} $\\
                Set $\alpha^s_{i, t+1} = 
                \begin{cases}
                2\mu-\alpha^s_{i, t+1}, & \alpha^{ewma}_{i, t+1} > UCL \text{ or }  \alpha^{ewma}_{i, t+1} < LCL\\
                \alpha^s_{i, t+1}, & \text{otherwise}
                \end{cases}$,
                $\alpha_{i, t+1} = (\sum_{i=1}^N\alpha_{i,t+1})\cdot \alpha^s_{i, t+1}$
            }
            $h: = h\cdot \exp{\gamma}$
        }
        \textbf{Output}: $a_t$
    }
\caption{\textsc{Exp=4.P-EWMA solver}}
\label{alg:ModExp4P}
\end{algorithm}

As listed in Algorithm~\ref{alg:ModExp4P}, at each time point $t$, the solver will first execute \textsc{Exp4.P} to make the acquisition decision $a_t$ and update the decision power of each agent $\alpha_{i, t+1}$ based on its decision vector $\bm{\xi}^i_{t}$, the final decision probability $\bm{P}_t$, and the reward $r_t$.
In the second step, the flipping will be triggered if the standardized weight of any agent is outside the updated control limits.

\subsection{The Exploration and Exploitation Agents}
To balance the exploration-exploitation trade-off under different online annotation scenarios, we design distinguished active learning agents with exploration or exploitation objectives to be incorporated into \cbeal so that a systematic approach is developed without ambiguous selection.
Another advantage of this design is the tendency for exploration or exploitation can be directly implied by the decision power of different types of agent.
\subsubsection{Low-density Based Exploration Agent (LD-Agent)}
The objective of exploration is to identify the structure of the input data distribution during the learning process.
Two types of agents are proposed to encourage the exploration of the input variable space. 
The first type adopts a density-based criterion, which encourages the labelling efforts around
each cluster boundary to discover new clusters by annotating
samples lying in a sparse region with low density.  
We adopt the idea in \citep{ienco2014high}  to model the density of a sample.

Denote the set $\mathcal{W}$ as a sliding window of $L$ previously observed samples, $d(\cdot, \cdot)$ as the distance between two samples. 
Denote $MaxDist$ as a function, where $MaxDist(\bm{x}_i, \mathcal{W})$ returns the maximum distance between $\bm{x}_i$ and other samples in the sliding window $\mathcal{W}$.
To approximate the local density for a new coming sample, we define local sparsity (i.e., low-density) factor of a sample $\bm{x_i}$ as the number of times $\bm{x}_i$ is the farthest away from other samples in $\mathcal{W}$ as follows:
\begin{equation}
    lsf(\bm{x}_i) = \sum_{\bm{x}_j \in \mathcal{W}}\mathbb{I}\{MaxDist(\bm{x}_j,\mathcal{W} )<d(\bm{x}_i, \bm{x}_j)\}. \label{eq:ldf}
\end{equation}

Algorithm~\ref{alg:LDAL} provides the pseudocode to acquire samples with lower local density.
Given a streaming sample $\bm{x}_t$ at time $t$, low-density based exploration agent first calculates the local sparsity factor $lsf(\bm{x}_t)$ to determine the acquisition probability $p_t$ as the output. 
Then, the sliding window $\mathcal{W}$ and the maximum pairwise distance between each sample in $\mathcal{W}$ will be updated.
The sliding window mechanism is adopted to adjust the approximated density based on the most recent data stream.
Note the window length $L$, and the sparsity fraction $\delta_L$ are hyperparameters that affect the acquisition probability, which can be tuned to best suit the scenario.
\begin{algorithm}[!ht]
\caption{Low-density Based Exploration Agent (LD-Agent)}\label{alg:LDAL}
   \textbf{Input}: $\bm{x}_t, \mathcal{W}, L, \mathcal{D}_0, \delta_L$\\
   \text{Calculate} $lsf(\bm{x}_t)$\\
   \For{$j=1,2,\dots, L$}{
   \uIf{$d(\bm{x}_i, \bm{x}_j)> MaxDist(\bm{x}_j, \mathcal{W})$}{$MaxDist(\bm{x}_j, \mathcal{W})=d(\bm{x}_i, \bm{x}_j)$}}
    \textbf{Output}\text{: Acquisition probability } $p_t=\frac{lsf(\bm{x}_t)}{L\cdot \delta_L}$\\
    \uIf{$|\mathcal{W}|>L$}{$\mathcal{W}:=\mathcal{W}\setminus \bm{x}_{t-L}$}
    $\mathcal{W}: =\mathcal{W} \cup \bm{x}_t$
\end{algorithm}

\subsubsection{Space-filling Based Exploration Agent (SPF-Agent)}
The second type of exploration-oriented agents is based on a space-filling criterion.
In the DoE literature, space-filling designs are applied to fully explore the response surface of computer experiments \citep{shang2021fully}. 
Therefore, as an alternative strategy to explore the input variable space, a space-filling based exploration-oriented agent is developed to acquire samples uniformly distributed in the space.
We adopt the idea of minimum pairwise distance criterion \citep{kennard1969computer} and propose a corresponding criterion to minimize the pairwise distance between acquired samples during the online data acquisition.

Similarly, a sliding window $\mathcal{W}$ keeps the most recent $L$ samples.
Denote $MinDist$ as a function where $MinDist(\bm{x}_i, \mathcal{W})$ returns the minimum distance between $\bm{x}_i$ and all samples in $\mathcal{W}$. 

\begin{algorithm}[!htb]
\caption{Space-filling Based Exploration Agent (SPF-Agent)}\label{alg:SFAL}
    \textbf{Input}: $\bm{x}_t, \mathcal{W},L, \mathcal{D}_0$\\
    \For{i=1,2,\dots,L}{min$(d_i)=\min {d(\bm{x}_i, \bm{x}_j), \forall j \in \mathcal{W}}$}
    Calculate $MinDist(\bm{x}_t, \mathcal{W})$\\
    \textbf{Output}: Acquisition probability $p_t =\frac{MinDist(\bm{x}_t, \mathcal{W})}{\max_{i\in \mathcal{W}}{\text{min}(d_i)}}$\\
    \uIf{$|\mathcal{W}|>L$}{$\mathcal{W}:=\mathcal{W}\setminus \bm{x}_{t-L}$}
    $\mathcal{W}: = \mathcal{W} \cup \bm{x}_t$
\end{algorithm}

In Algorithm~\ref{alg:SFAL}, with a coming sample $\bm{x}_t$ at time $t$, its minimum distance from the samples in $\mathcal{W}$ is compared with the largest minimum pairwise distance of samples in $\mathcal{W}$ to obtain the acquisition probability, leaving a higher probability for samples distant from the observed ones in $\mathcal{W}$.

Intuitively, the density-based criterion will explore the boundary of the input variable space faster at an early stage, whereas the space-filling criterion allows for a more uniform exploration during the process.
The combination of two exploration criteria will enhance the compatibility and adaptiveness of \cbeal to various learning scenarios.
In practical, an $\epsilon$-greedy policy can also be embedded which forces a sample to be acquired with probability $\epsilon$ for further exploration.

\subsubsection{Reinforced Exploitation Agent (RAL-Agent)}
The goal of exploitation in active learning is to capture the decision boundary, which is generally achieved by acquiring samples with ambiguous class membership.
To enable the agent to intelligently identify the acquisition demand, we formulate it as a RL problem that aims at learning an adaptive threshold as the optimal policy to maximize the cumulative reward.
As suggested by~\cite{wassermann2019ral}, a RL-based controller is designed to adjust the certainty threshold $\theta$ based on the contribution of historical acquisition decisions.
In detail, upon receiving a sample $\bm{x}_t$ at time $t$, the prediction certainty $ct(\bm{x}_t)=\max{\bm{P}^f(\hat{y}|\bm{x}_t)}$ obtained by the base learner $f_t$ is compared with the current certainty threshold $\theta_t$ to make the acquisition decision.
The reward $r_t$ will be received if $\bm{x}_t$ is acquired, which
follows a consistent definition (i.e., $r_t\in\{0, \rho^+, \rho^-\}$) as defined in \eqref{eq:CBEAL_reward}.
Afterwards, the certainty threshold will be updated as:
\begin{equation}
    \theta_{t+1} = \min\left\{\theta_t (1+\eta\cdot(1-2^{\frac{r_t}{\rho^{-}}})), 1\right\}.
\end{equation}
Note that the threshold will increase slightly with a positive reward and vice versa, which enables a policy adaptive to the decision boundary learned by the base learner.
The algorithm of the reinforced exploitation agent is detailed in Algorithm~\ref{alg:RAL}.
\begin{algorithm}
\caption{Reinforced Exploitation Agent (RAL-Agent)}\label{alg:RAL}
    \textbf{Input}: $\bm{x}_t, \theta_0, \eta, \rho^+, \rho^-$\\
    \uIf{$ct(\bm{x}_t)<\theta_{t}$}{$p_t$ = 1, 
    obtain the reward $r_t$\\
    Update the certainty threshold $\theta_{t+1} = \min\left\{\theta_t (1+\eta\cdot(1-2^{\frac{r_t}{\rho^-}})), 1\right\}$}
    \uElse{$p_t$ = 0}
    \textbf{Output}: Acquisition probability $p_t$
\end{algorithm}

\subsection{Characterization of Agents}

To characterize the exploration and exploitation capability of the proposed agents,
the variance of the acquired samples $\mathcal{D}_t$ by one agent is selected as an appropriate metric for assessing its exploration and exploitation activity.
A higher variance suggests a learner's ability to explore the input variable space via acquiring samples in a larger region, whereas a lower variance implies a high frequency of acquisition in a small region for exploitation.
To compare the variance of $\mathcal{D}_t$, the probability of a single sample being acquired by the proposed agents is examined.

We assume that the streaming data belong to a mixture of Gaussian distributions.
Denote the previously observed samples stored in the sliding window $\mathcal{W}$ before time $t$ as the set $\{\bm{x}_1, \bm{x}_2, ..., \penalty 0 \bm{x}_L\}$, where $\bm{x}_i$ belongs to the $i$-th Gaussian distribution (i.e., $\bm{x}_i\sim \mathcal{N}_q(\bm{\mu}_i, \bm{\Sigma}^{(i)})$).  
Given a streaming sample $\bm{x}_t$ at time $t$ which follows another Gaussian distribution (i.e., $\bm{x}_t\sim \mathcal{N}_q(\bm{\mu}_k, \bm{\Sigma}^{(k)})$), with the Euclidean distance $d(\bm{x}_i, \bm{x}_j)=\sqrt{\norm{\bm{x}_i-\bm{x}_j}_2^2}$,
For probability that LD-Agent acquires $\bm{x}_t$, we proved:
\begin{theorem}
If the streaming samples follow an independent multivariate Gaussian distribution (i.e., $\bm{\Sigma^{(i)}}=\sigma_i^2\bm{I}$), then there exist $\bm{M}_1, \bm{M}_2 \in \mathbb{R}^L$ such that if $\norm{\bm{\mu}_i-\bm{\mu}_k}^2 > M_{2,i}, \forall i \in \{1,...,L\}$, then the expected acquisition probability of a LD-Agent $
\mathbb{E}_{\bm{x}_i, \bm{x}_j, \bm{x}_k}[p_t]$ will exceed $1$, where $\bm{M}_{1}, \bm{M}_{2}$ satisfies:
\begin{align}
    &M_{2,i} + \erf^{-1}{(1-2\delta_L)}\cdot \sqrt{2\cdot(4(\sigma_i^2+\sigma_k^2)M_{2,i}+2q(\sigma_i^2+\sigma_k^2)^2)} +
    (\sigma_i^2+\sigma_k^2)q - M_{1,i} = 0\nonumber\\
    & M_{1,i}>\norm{\bm{\mu}_i-\bm{\mu}_j}^2 + (\sigma_i^2+\sigma_j^2)q + \sqrt{4(\sigma_i^2+\sigma_j^2)\norm{\bm{\mu}_i-\bm{\mu}_j}^2+2q(\sigma_i^2+\sigma_j^2)^2} \nonumber\\
    &\cdot\bigg(\Phi^{-1}(1-\frac{1}{L-1})+\gamma\bigg[\Phi^{-1}(1-\frac{1}{L-1}\cdot e^{-1})-\Phi^{-1}(1-\frac{1}{L-1}) \bigg]\bigg),\forall i \in \{1,...,L\}.
\end{align}
\end{theorem}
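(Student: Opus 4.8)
The plan is to turn the statement into a per-window-point tail estimate and then pit the distance from the incoming sample $\bm{x}_t$ to a window point against the largest within-window distance. Since the LD-Agent outputs $p_t = lsf(\bm{x}_t)/(L\delta_L)$ with $lsf(\bm{x}_t)=\sum_{j=1}^{L}\mathbb{I}\{MaxDist(\bm{x}_j,\mathcal{W})<d(\bm{x}_t,\bm{x}_j)\}$, linearity of expectation gives $\mathbb{E}[p_t]=\frac{1}{L\delta_L}\sum_{j=1}^{L}\Pr[MaxDist(\bm{x}_j,\mathcal{W})<d(\bm{x}_t,\bm{x}_j)]$, so it suffices to push the sum of these $L$ probabilities above $L\delta_L$. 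For a fixed $j$ I would use $M_{1,j}$ as a separating level: the inclusion $\{d(\bm{x}_t,\bm{x}_j)^2>M_{1,j}\}\cap\{\max_{l\neq j}d(\bm{x}_j,\bm{x}_l)^2<M_{1,j}\}\subseteq\{MaxDist(\bm{x}_j,\mathcal{W})<d(\bm{x}_t,\bm{x}_j)\}$, together with a union bound on the complement, yields
\[
\Pr[MaxDist(\bm{x}_j,\mathcal{W})<d(\bm{x}_t,\bm{x}_j)]\ \ge\ \Pr[d(\bm{x}_t,\bm{x}_j)^2>M_{1,j}]-\sum_{l\neq j}\Pr[d(\bm{x}_j,\bm{x}_l)^2\ge M_{1,j}].
\]

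Next I would identify the distributions. Under isotropy $\bm{\Sigma}^{(i)}=\sigma_i^2\bm{I}$ we have $\bm{x}_i-\bm{x}_l\sim\mathcal{N}_q(\bm{\mu}_i-\bm{\mu}_l,(\sigma_i^2+\sigma_l^2)\bm{I})$, so $d(\bm{x}_i,\bm{x}_l)^2/(\sigma_i^2+\sigma_l^2)$ is non-central $\chi^2_q$ with non-centrality $\norm{\bm{\mu}_i-\bm{\mu}_l}^2/(\sigma_i^2+\sigma_l^2)$; its mean is $\norm{\bm{\mu}_i-\bm{\mu}_l}^2+(\sigma_i^2+\sigma_l^2)q$ and its variance is $4(\sigma_i^2+\sigma_l^2)\norm{\bm{\mu}_i-\bm{\mu}_l}^2+2q(\sigma_i^2+\sigma_l^2)^2$ --- exactly the quantities appearing in the statement. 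I would then replace each squared distance by its Gaussian approximation (the normal with matching mean and variance, justified when $q$ and/or the non-centralities are large); using $\Phi^{-1}(1-\delta_L)=\sqrt{2}\,\erf^{-1}(1-2\delta_L)$, the first equation of the statement reads exactly as ``$M_{1,i}$ is the $(1-\delta_L)$-quantile of the normal approximation of $d(\bm{x}_i,\bm{x}_k)^2$ evaluated at mean-gap $M_{2,i}$.''

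The two tail bounds then fall out. For the first term, the hypothesis $\norm{\bm{\mu}_j-\bm{\mu}_k}^2>M_{2,j}$ inflates both the true mean and the true variance of $d(\bm{x}_t,\bm{x}_j)^2$ above the values used to define $M_{1,j}$, so the standardized threshold $\bigl(M_{1,j}-\mathbb{E}[d(\bm{x}_t,\bm{x}_j)^2]\bigr)/\mathrm{sd}$ is strictly below $\Phi^{-1}(1-\delta_L)$ (using $\delta_L<1/2$), whence $\Pr[d(\bm{x}_t,\bm{x}_j)^2>M_{1,j}]>\delta_L$, tending to $1$ as the mean-gap grows. For the second term, the constraint on $M_{1,i}$ says that $M_{1,j}$ exceeds the upper quantile of every within-window $d(\bm{x}_j,\bm{x}_l)^2$ by a margin controlled through $\gamma$, so each $\Pr[d(\bm{x}_j,\bm{x}_l)^2\ge M_{1,j}]$ is at most of order $e^{-\gamma}/(L-1)$ and the $(L-1)$-fold union bound is at most of order $e^{-\gamma}$, which can be made arbitrarily small. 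Summing the resulting per-$j$ lower bounds over $j=1,\dots,L$ then drives $\mathbb{E}[p_t]$ above $1$.

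The hard part will be twofold. First, the crude ``each summand exceeds $\delta_L$'' reduction is, in the worst case (mean-gap only marginally above $M_{2,i}$), exactly on the boundary, so the argument must instead exploit that $\Pr[d(\bm{x}_t,\bm{x}_j)^2>M_{1,j}]$ is genuinely close to $1$ for the window clusters well separated from $\bm{\mu}_k$ and combine the $L$ terms globally, or it must sharpen the event decomposition above by conditioning on $\bm{x}_j$ so that the two events become conditionally independent. Second, the non-central $\chi^2$-to-normal substitution is only an approximation; to deliver the stated strict inequality one needs either a quantitative error bound (Berry--Esseen-type, or Sankaran's cube-root transform) or an explicit largeness assumption on $q$ and the non-centralities, which I would state up front.
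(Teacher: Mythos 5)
Your skeleton matches the paper's: linearity of expectation over the $L$ window points, normal approximation of the squared distances $X_{i,j}=\norm{\bm{x}_i-\bm{x}_j}_2^2$ and $Y_{i,t}=\norm{\bm{x}_i-\bm{x}_t}_2^2$ with exactly the means and variances you list, the identity $\sqrt{2}\,\erf^{-1}(1-2\delta_L)=\Phi^{-1}(1-\delta_L)$ so that the first constraint defines $M_{1,i}$ as the $(1-\delta_L)$-quantile of (the normal approximation of) $Y_{i,t}$ at mean gap $M_{2,i}$, and monotonicity of the resulting function of $\norm{\bm{\mu}_i-\bm{\mu}_k}^2$ to produce $M_{2,i}$. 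Where you genuinely diverge is in controlling $Z_i=\max_{j\neq i}X_{i,j}$, and that is where your route breaks against the theorem's constants.

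The paper does not union-bound the maximum. It approximates $Z_i$ by a generalized extreme value distribution (Fisher--Tippett--Gnedenko applied to the $L-1$ approximately normal pairwise distances): the bracketed expression $\Phi^{-1}(1-\tfrac{1}{L-1})+\gamma\big[\Phi^{-1}(1-\tfrac{1}{L-1}e^{-1})-\Phi^{-1}(1-\tfrac{1}{L-1})\big]$ is precisely the Gumbel-mean formula $b_{L-1}+\gamma\,a_{L-1}$ for $\mathbb{E}[Z_i]$, with $\gamma$ the Euler--Mascheroni constant, not a tunable margin. The second constraint is therefore literally $M_{1,i}>\mathbb{E}[Z_i]$, and the paper then (loosely, via a Chebyshev concentration claim) asserts $Z_i<M_{1,i}$, so that $P\{Z_i<Y_{i,t}\}\geq P\{Y_{i,t}>M_{1,i}\}\geq\delta_L$ term by term, giving $\mathbb{E}[p_t]\geq 1$ after dividing by $L\delta_L$.

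Your union bound cannot close with these constants. Since $M_{1,j}$ only exceeds the expected maximum --- roughly the $(1-\tfrac{1}{L-1})$-quantile of each $X_{j,l}$ plus a fixed Gumbel correction (with $\gamma\approx 0.577$ fixed, $e^{-\gamma}$ is not small) --- each term $\Pr[X_{j,l}\geq M_{1,j}]$ is only slightly below $\tfrac{1}{L-1}$, so $\sum_{l\neq j}\Pr[X_{j,l}\geq M_{1,j}]$ is of order $1$. The gain from your first term is only of order $\delta_L$ (and $\delta_L$ is $0.005$--$0.01$ in the paper's experiments), so your lower bound $\Pr[Y_{j,t}>M_{1,j}]-\sum_{l\neq j}\Pr[X_{j,l}\geq M_{1,j}]$ is negative. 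Making a union bound work would require $M_{1,i}$ to sit above the $(1-c\,\delta_L/(L-1))$-quantile of every pairwise distance, a strictly stronger condition than the one stated. You correctly flagged the boundary case and the normal-approximation error as open issues, but the decisive gap is the misreading of $\gamma$ and hence of the second constraint: it encodes the expected maximum, i.e., an extreme-value statement about $Z_i$ as a whole, and any proof of the theorem with these constants must argue about the distribution of the maximum directly rather than summing individual tails.
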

This result illustrates that with the increasing of the distance between the center of the distribution of the observed samples and that of the incoming sample, the expected acquisition probability approaches and exceeds 1.
This ensures the acquisition of samples from a remote cluster, resulting in an increased variance and, thus, the exploration of the input variable space.

For the reinforced exploitation agent, assume that a logistic regression model is selected as the base learner and at time $t$  the base learner $f_t$ is parameterized by $\bm{\beta}_t$.
Given the labeled data pool $\mathcal{D}_t$ at time $t$, for the expectation of the probability that the RAL-Agent acquires $\bm{x}_t$, we proved:
\begin{theorem}
Given the labeled data pool $\mathcal{D}_t$ at time $t$, assume the center of the labeled samples in $\mathcal{D}_t$ is $\bm{\mu}_i\in\mathbb{R}^q$ and the incoming sample $\bm{x}_t\sim \mathcal{N}_q(\bm{\mu}_k, \sigma_k^2\bm{I})$.
With the increase of the distance between two centers $\norm{\bm{\mu}_i-\bm{\mu}_k}^2$, there does not exist $M_3 \in \mathbb{R}$ such that 
$P\{|\mathbb{E}_{\bm{x}_t}[p_{t}] - M_3|\geq\epsilon\}= 0, \forall \epsilon \in \mathbb{R}$.
\end{theorem}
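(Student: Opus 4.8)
The plan is to reduce the RAL-Agent's acquisition event to a margin condition for the logistic base learner and then show that the resulting expected acquisition probability, as $\norm{\bm{\mu}_i-\bm{\mu}_k}^2\to\infty$, is controlled by the \emph{direction} of the displacement $\bm{\mu}_k-\bm{\mu}_i$ rather than by its length, so that no single limiting value $M_3$ can exist. First I would write the logistic base learner's predictive certainty as $ct(\bm{x}_t)=\sigma\bigl(|\eta_t(\bm{x}_t)|\bigr)$, where $\eta_t(\bm{x})=\beta_{0,t}+\bm{\beta}_t^{\top}\bm{x}$ is the linear predictor and $\sigma$ the logistic sigmoid; in the meaningful regime $\theta_t\in(1/2,1)$ the threshold $\tau_t:=\sigma^{-1}(\theta_t)\in(0,\infty)$ is finite, and the agent acquires iff $|\eta_t(\bm{x}_t)|<\tau_t$. (For $c>2$ classes the same dichotomy holds with the softmax certainty, replacing $|\eta_t|$ by the gap between the two largest class logits; I would treat that case with the same argument but more bookkeeping.)

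Next, since $\bm{x}_t\sim\mathcal{N}_q(\bm{\mu}_k,\sigma_k^2\bm{I})$ we get $\eta_t(\bm{x}_t)\sim\mathcal{N}(m_k,s^2)$ with $m_k=\beta_{0,t}+\bm{\beta}_t^{\top}\bm{\mu}_k$ and $s=\sigma_k\norm{\bm{\beta}_t}_2>0$, so
\[
\mathbb{E}_{\bm{x}_t}[p_t]=\Phi\!\Bigl(\tfrac{\tau_t-m_k}{s}\Bigr)-\Phi\!\Bigl(\tfrac{-\tau_t-m_k}{s}\Bigr)=:G(m_k),
\]
where $G$ is continuous, strictly positive, non-constant, and satisfies $G(m)\to0$ as $|m|\to\infty$. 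Writing $\bm{\mu}_k=\bm{\mu}_i+\sqrt{D}\,\bm{u}$ with $\norm{\bm{u}}_2=1$ and $D=\norm{\bm{\mu}_i-\bm{\mu}_k}^2$ yields $m_k=m_i+\sqrt{D}\,(\bm{\beta}_t^{\top}\bm{u})$ with $m_i=\beta_{0,t}+\bm{\beta}_t^{\top}\bm{\mu}_i$. Then I would split into two regimes, using the mild non-degeneracy conditions $q\ge2$ and $\bm{\beta}_t\ne\bm{0}$: (i) if $\bm{u}$ lies in the orthogonal complement of $\bm{\beta}_t$, then $m_k=m_i$ for \emph{every} $D$, so $\mathbb{E}_{\bm{x}_t}[p_t]=G(m_i)$ is a strictly positive constant, independent of the separation; (ii) if $\bm{\beta}_t^{\top}\bm{u}=\delta\ne0$, then $|m_k|\to\infty$ and hence $\mathbb{E}_{\bm{x}_t}[p_t]=G(m_k)\to0$ as $D\to\infty$.

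Finally I would conclude by contradiction: if some $M_3$ satisfied $P\{|\mathbb{E}_{\bm{x}_t}[p_t]-M_3|\ge\epsilon\}=0$ for every $\epsilon>0$ (the randomness being the otherwise-unspecified placement of $\bm{\mu}_k$ on the sphere of radius $\sqrt{D}$, with $\bm{\beta}_t,\theta_t$ frozen by conditioning on $\mathcal{D}_t$), then the positive-probability set of placements in regime (i) would force $M_3=G(m_i)>0$, while the positive-probability set in regime (ii) would force $M_3=0$ — a contradiction. Hence no such $M_3$ exists, which is exactly the claimed non-convergence, and it contrasts sharply with the LD-Agent of Theorem~1, whose acquisition probability provably grows past $1$ with the separation: the RAL-Agent keeps acquiring far-away samples only when they stay near the current decision boundary (directions orthogonal to $\bm{\beta}_t$) and ignores far-away samples the base learner is confident about, i.e., it behaves as an exploitation agent. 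The main obstacle is not the two-regime computation, which is routine, but rather (a) the careful reduction of the softmax/logistic certainty to a margin condition in the general $c$-class case, and (b) pinning down the precise probabilistic reading of the statement — identifying the direction (equivalently, placement) of $\bm{\mu}_k$ as the relevant source of randomness — together with stating the non-degeneracy hypotheses ($q\ge2$, $\bm{\beta}_t\ne\bm{0}$, $\theta_t\in(1/2,1)$) under which the claim holds.
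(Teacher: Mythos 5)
Your reduction of the RAL-Agent's rule to the margin condition $|\eta_t(\bm{x}_t)|<\tau_t$ and the resulting closed form $\mathbb{E}_{\bm{x}_t}[p_t]=\Phi\bigl((\tau_t-m_k)/s\bigr)-\Phi\bigl((-\tau_t-m_k)/s\bigr)$ are correct, and your central observation---that this quantity is governed by the projection $\bm{\beta}_t^{\top}(\bm{\mu}_k-\bm{\mu}_i)$ rather than by $\norm{\bm{\mu}_i-\bm{\mu}_k}^2$ itself---is exactly the mechanism the paper relies on. (The supplement only argues this informally and numerically, stating that the exploitation agent's decision ``is determined by the base learner's uncertainty instead of the distance between the observed and incoming distribution''; no fully formal proof of this theorem appears there, so your write-up is more explicit than the paper's own treatment.) One step needs repair, however: in your concluding contradiction you invoke ``the positive-probability set of placements in regime (i)'', but under any continuous law for the direction $\bm{u}$ on the unit sphere the event $\bm{\beta}_t^{\top}\bm{u}=0$ has probability zero, so regime (i) as literally defined cannot force $M_3=G(m_i)$. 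The fix is to argue via the support of the continuous random variable $G(m_k)$ at each fixed large $D$: since $m_k=m_i+\sqrt{D}\,\bm{\beta}_t^{\top}\bm{u}$ sweeps continuously through an interval containing both $m_i$ and points with $|m_k|$ large, the law of $G(m_k)$ places positive mass both near $G(m_i)>0$ and near $0$, and no single $M_3$ can lie within $\epsilon<G(m_i)/2$ of both sets with probability one. Your flagged caveats (the probabilistic reading of $P\{\cdot\}$, the restriction to $\epsilon>0$, and the non-degeneracy hypotheses $\bm{\beta}_t\neq\bm{0}$ and $\theta_t\in(1/2,1)$) are all genuinely needed---the theorem as stated leaves every one of them implicit---so making them explicit is a strength of your argument rather than a detour.
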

Since $p_t$ belongs to $[0, 1]$ for a RAL-Agent, the  result implies that the acquisition probability of the incoming $\bm{x}_t$ will not converge with the increase of the distance between the center of the distribution of $\mathcal{D}_t$ and that of $\bm{x}_t$.
Hence, for one sample from a remote cluster, the acquisition decision made by a RAL-Agent does not necessarily lead to an increasing variance.

In summary, the theoretical analysis justifies the exploration and exploitation capability of the proposed agents.
Therefore, with the ensemble of two types of agents, the trade-off can be dynamically adjusted to the human annotation process.
We also include the theoretical justification on the EWMA mechanism where we prove it does not affect the regret bound of the \textsc{Exp4.p} solver.
The proof and numerical study can be found in the supplemental material due to the page limit.

\section{Numerical Simulation}\label{sec:sim}

\subsection{Simulation Setup}
Suppose that we have a binary classifier as the base learner that requires online updating.
Recall the third assumption that multiple clusters exist in the input variable space.
Therefore, we adopt a cluster-based classification data set generation method \citep{scikit-learn,Guyon2003DesignOE} to generate the input $\bm{X}\in\mathbb{R}^{n\times p}$ and the corresponding label $\bm{y}\in \mathbb{R}^n$, where $n$ is the sample size and $p$ is the dimension of the input variable.  
We assume that there are two clusters in each class, thus we have $2\times2=4$ clusters in total.
In brief, the centroids of 4 Gaussian clusters are first generated as the vertices of one polytope.
The input variables are then independently drawn from each Gaussian cluster with unit variance and then multiplied by a random matrix to introduce the random covariance.
Then, the samples in two of the four clusters will be assigned with the same label as $\bm{y}$.

To evaluate \cbeal comprehensively, four settings are varied to generate different online annotation scenarios: (i) training sample size $n$, which includes both the initial training set and the streaming training set; 
(ii) the percentage of samples in the positive class $pc$, which determines the balanceness of the two classes;
(\RN{3}) the percentage of disturbance $ds$; and 
(\RN{4}) the percentage of sparsity $sp$, which is defined as the percentage of insignificant input variables among total $p$ input variables.
Note that disturbances are added by flipping the labels of randomly selected samples. 
Additionally, to control the sparsity level, insignificant variables are randomly generated and concatenated to informative ones.

The data set generated for each online annotation scenario is subdivided into three subsets: the initial training set, the streaming training set, and the testing set.
The initial training set has a constant size of 20 and the testing set has a size of 500.
The number of samples in each class is balanced to be equal in the testing set to better illustrate the classification performance of the base learner.
For all simulation scenarios, the budget is set to be $10\%$ which gives the number of samples available to be labeled as $B = 10\%\cdot (n-20)$ during the streaming process.
All scenarios are replicated 10 times with a randomly generated data set in each replication.

Based on the suggestion in \citep{ienco2014high,wassermann2019ral} and grid search in simulation experiments, we set the following values for the hyperparameters in \cbeal: $p_{\text{min}}=\sqrt{\frac{\ln{N}}{KT}}, T = 2000, \delta=0.1, \lambda = 0.3, h=5, \gamma=t/T$, reward $\rho^+=1$, penalty $\rho^-=0.5$.
Meanwhile, three pairs of agents with recommended hyperparameter values are incorporated into \cbeal, forming the set of six agents in Table~\ref{tab:agents}.
Note that the hyperparmaters can be further tuned for different learning scenarios.
\begin{table}[!htb]
\centering
 \caption{Agent set adopted in \cbeal}\label{tab:agents}
\begin{tabular}{cccc}
\toprule
Pair Index & Agent Index & Agent & Hyperparameters \\\hline
\multirow{2}{*}{1} & $AG_1$ & $LD_1$ & $L=100, \delta_L = 0.01$\\
& $AG_2$ & $RAL_1$ & $\theta_0=0.95, \eta=0.005$\\
\multirow{2}{*}{2} & $AG_3$ & $LD_2$ & $L=150, \delta_L = 0.005$\\
& $AG_4$ & $RAL_2$ & $\theta_0=0.95, \eta=0.01$\\
\multirow{2}{*}{3} & $AG_5$ & $SPF_1$ & $L=60$\\
& $AG_6$ & $RAL_3$ & $\theta_0=0.90, \eta=0.01$\\\bottomrule
\end{tabular}
\end{table}



\begin{figure}[!htb]
    \centering
    \captionsetup{justification=centering}
    \subfloat[LD-Agent with final testing accuracy = 0.838, 36 acquired samples]{\includegraphics[width=0.48\textwidth]{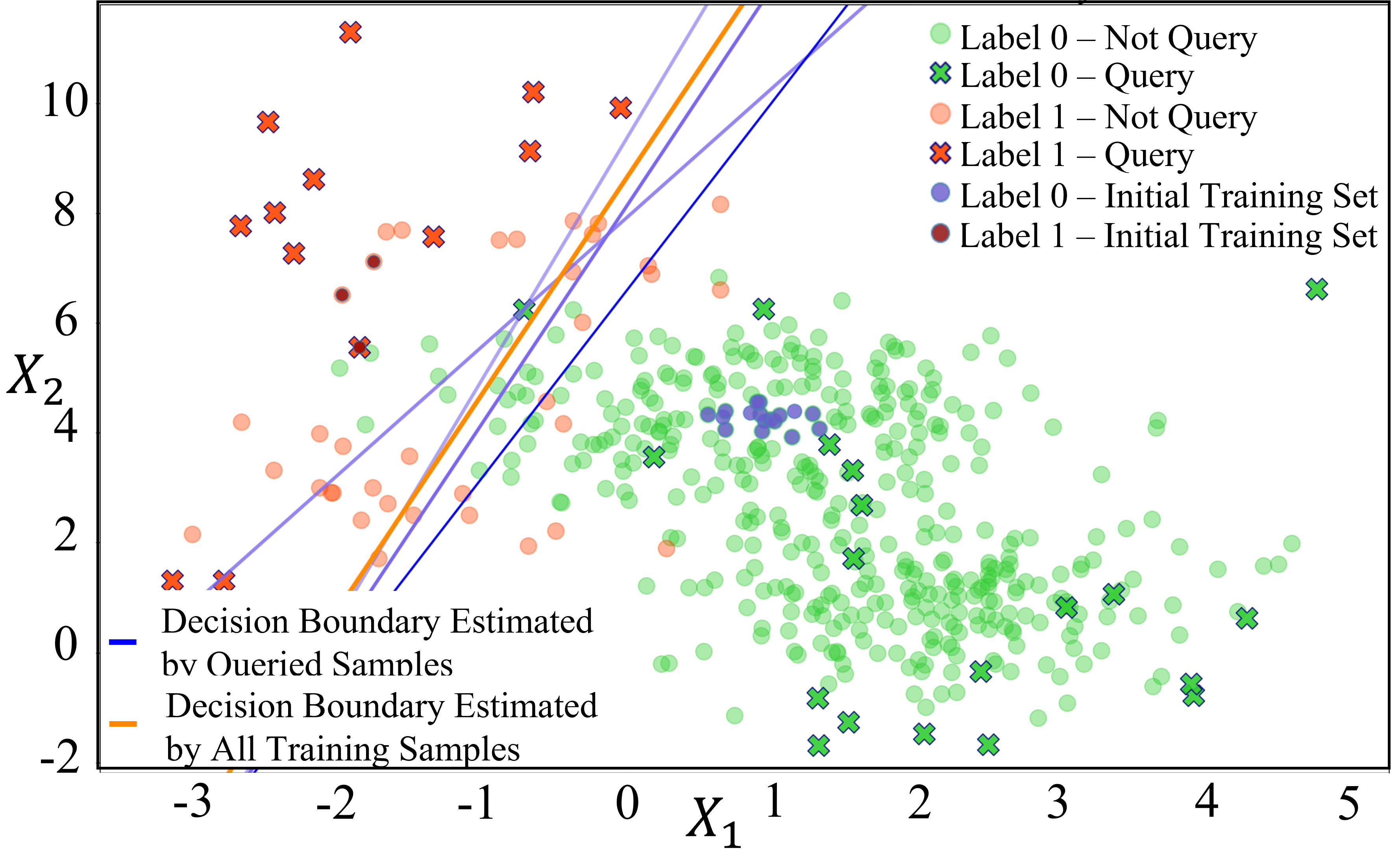}} \hfill 
    \subfloat[SPF-Agent with final testing accuracy = 0.862, 48 acquired samples ]{\includegraphics[width=0.48\textwidth]{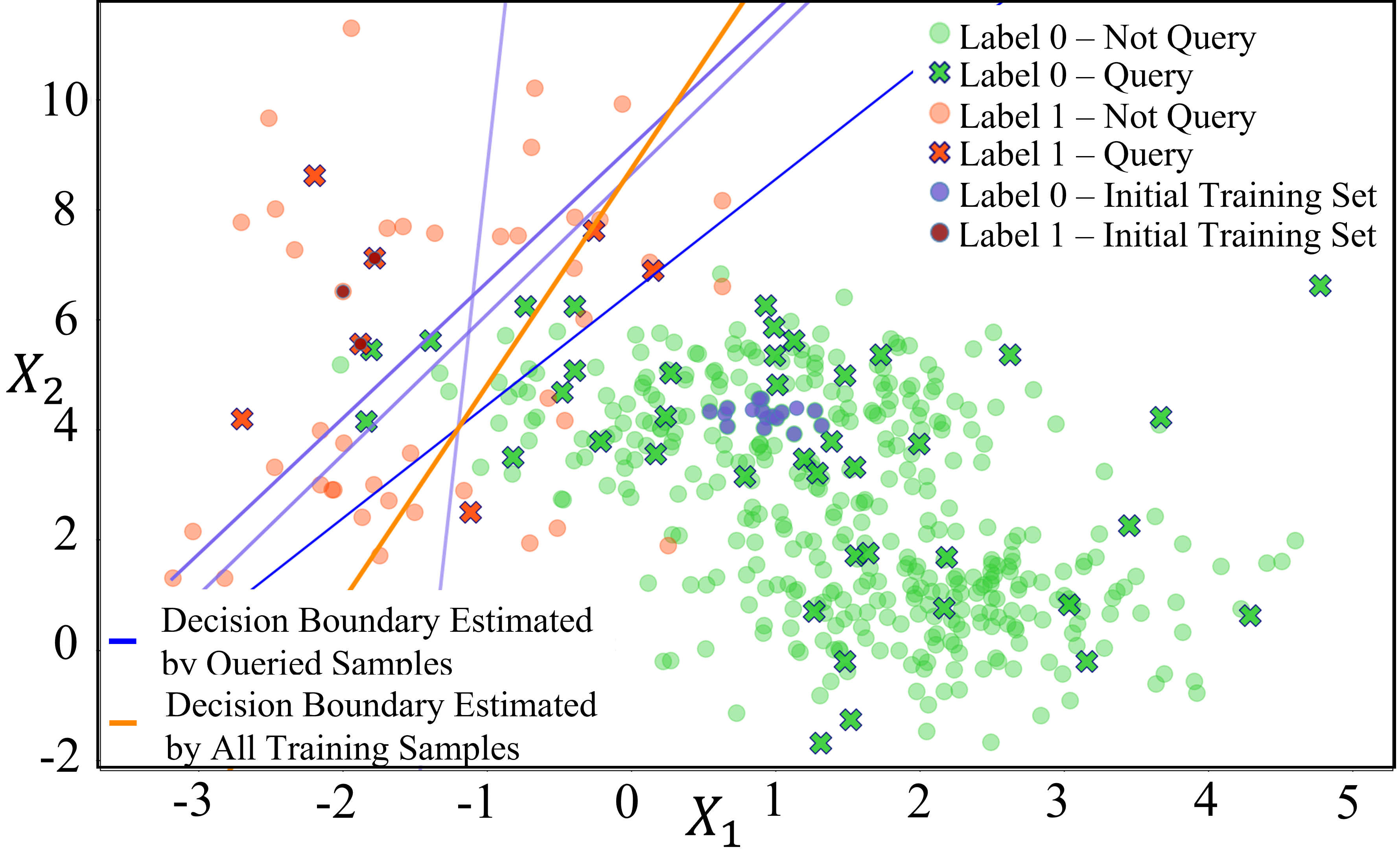}} \\
    \subfloat[RAL-Agent with final testing accuracy = 0.786, 31 acquired samples]{\includegraphics[width=0.48\textwidth]{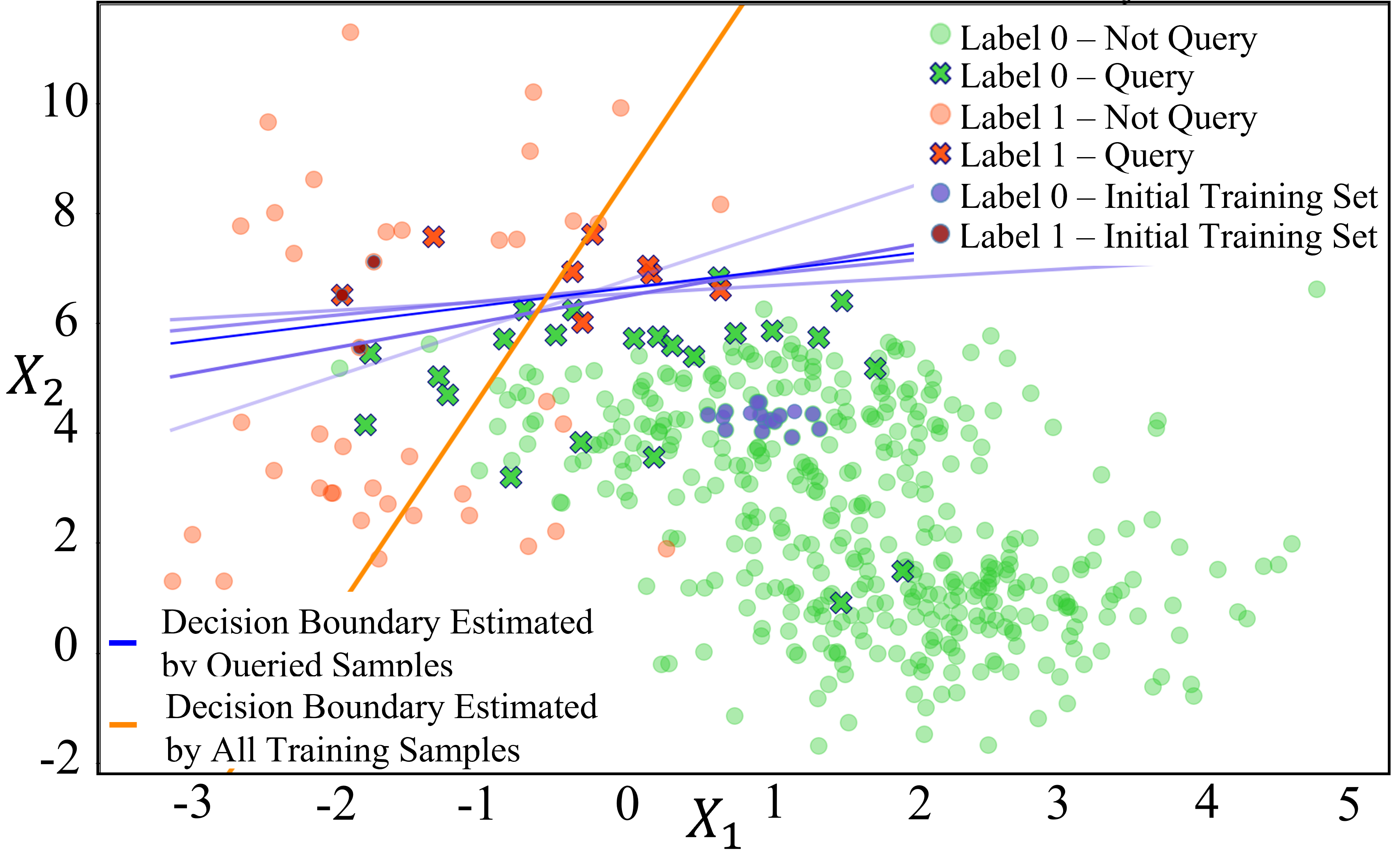}} \hfill
    \subfloat[CBEAL-2 with final testing accuracy = 0.886, 46 acquired samples]{\includegraphics[width=0.48\textwidth]{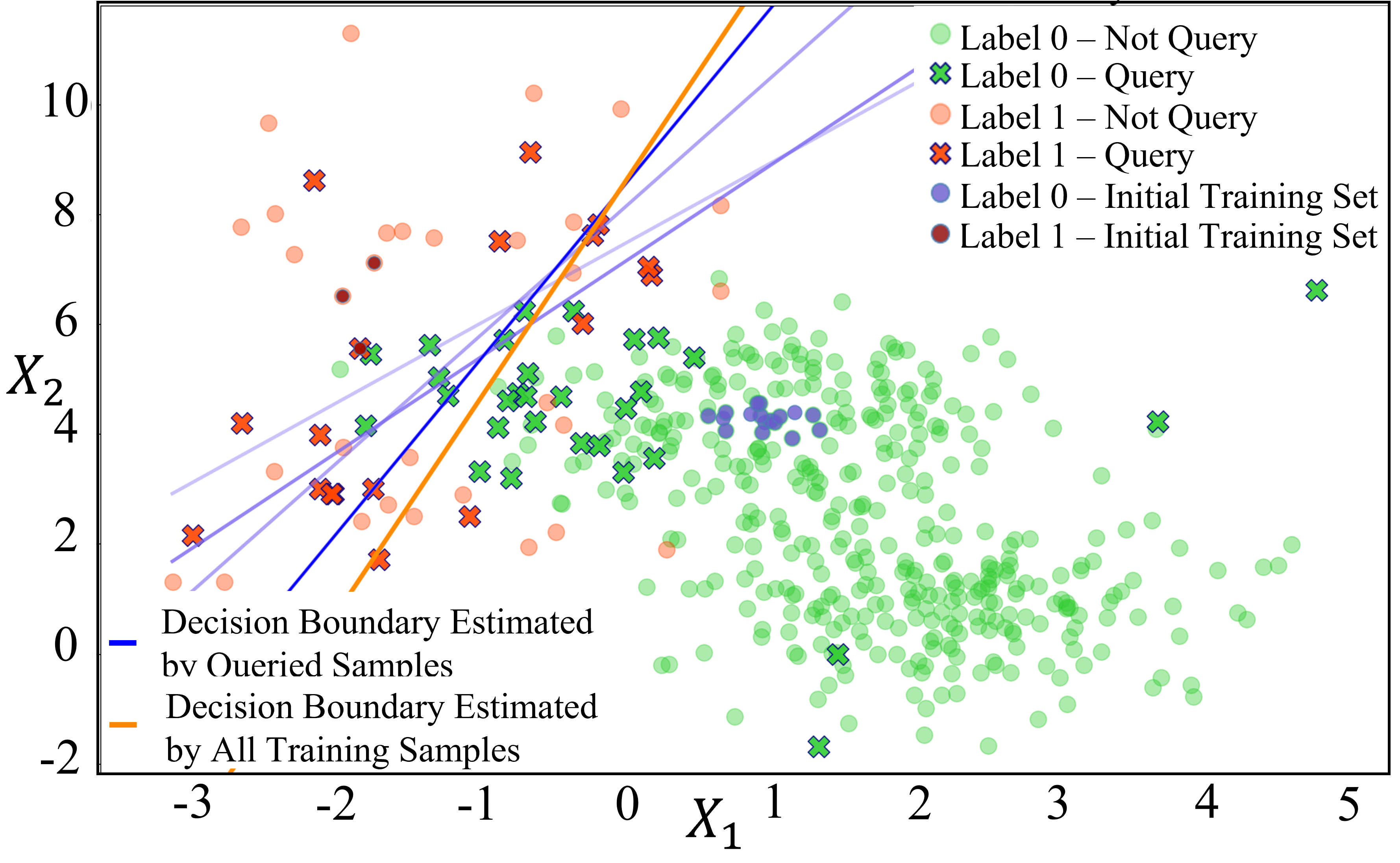}}
    \caption{Evolution of the base learner's decision boundary in the toy example: The set of blue lines represents the decision boundaries learned by the base learner every 100 time points where the color depth of the line is proportional to time $t$; The orange line is the ground-truth decision boundary.}
    \label{fig:simu_toy}
\end{figure}
To demonstrate the simulation setup, Fig.~\ref{fig:simu_toy} visualizes the generated imbalanced and clusterwise input data of a toy example with 2 input variables (i.e., $p=2, n=500, pc=10\%, sp=0\%, ds=0\%, B=48$).
The logistic regression model with default hyperparameters is selected as the base learner \citep{kleinbaum2002logistic, scikit-learn}.
The set of blue lines in Fig.~\ref{fig:simu_toy} traces the base learner's evolving decision boundary with samples acquired by the candidate agents and \cbeal. 
The line's color depth indicates time, with the darkest being the final boundary tested for accuracy. 
The decision boundary estimated with all training data is shown as orange lines.
The results of the agents with the best performance in the agent set are presented (i.e., Table~\ref{tab:agents}).
\cbeal ensembles one exploration agent and one exploitation agent with the best performance in pairs, marked as \cbeal-2.

It is clearly shown that the LD-Agent actively seeks samples around the boundary of the input variable space, whereas the samples acquired by SPF-Agent are more evenly distributed.
Both exploration-oriented agents successfully acquire the samples in both clusters of each class, but very limited samples around the ground-truth decision boundary are selected.
In regard to the RAL-Agent, although the uncertainty threshold is supposed to be updated adaptively, Fig.~\ref{fig:simu_toy} reveals that it is stuck in one cluster of the positive class while the agent keeps acquiring around the wrongly estimated decision boundary due to the lack of explicit exploration.
Combining the two strategies together with the proposed ensemble mechanism, \cbeal-2 acquires samples from both clusters in each class with a focus around the ground-truth decision boundary. 
This also validates that a well-balanced dynamic trade-off between exploration and exploitation is key to the active learning process with imbalanced and clusterwise input data distribution.


\subsection{A Comprehensive Simulation Study}
In the comprehensive simulation study, the settings are varied with the following levels: $n\in \{500, 1000, 1500\}$; $pc\in\{10\%, 5\%\}$; $ds\in\{0\%, 3\%\}$; $sp\in\{30\%, 70\%\}$.
The dimension of the input variable is set as $p=15$.
SVM is selected as the base learner with default parameters \citep{gunn1998support, scikit-learn}, which validates the effectiveness of \cbeal as a generic framework for classification models.
Note that the prediction probability $\bm{P}^f(\hat{y}_t|x_t)$ of SVM is estimated and calibrated by Platt scaling \citep{platt1999probabilistic}.
Denote \cbeal-2 as the ensemble of the first pair of agents in Table~\ref{tab:agents} (i.e., $AG_1$ and $AG_2$), \cbeal-4 as the ensemble of the first two pairs, and so on for \cbeal -6.
Specifically, \cbeal-6 is proposed as the recommended configuration due to its superior performance enhanced by the ensemble of multiple distinguished agents, which will be detailed in the scalability study.

In this study, \cbeal is firstly compared with the incorparated agents and their variants to study the effectiveness of the ensemble mechanism.
Then, we investigate the impact of the  number of agents in the ensemble to study the scalability of \cbeal.
Finally, as the recommended configuration,  \cbeal-6 is compared with four benchmark methods from literature (i.e., uncertainty sampling~\citep{lewis1994sequential} and random sampling, \textsc{DBALStream} \citep{ienco2014high} and \textsc{QBC-PYP} \citep{loy2012stream}) to test the general performance. 

In summary, nine benchmark methods are compared with \cbeal-6 where the first three are the candidate agents (i.e., \textsc{LD}, \textsc{SPF} and \textsc{RAL}-Agent), the middle two are the ensemble models with different number of agents (i.e., \cbeal-2 and \cbeal-4), and the last four are methods from the literature.
Among the benchmarks, \textsc{RAL}-Agent employs an acquisition criterion learned by multi-armed bandits as a cutting-edge AI-guided active learning method; \cbeal-2 and \cbeal-4 adopt the proposed ensemble framework; \textsc{LD}-Agent, \textsc{SPF}-Agent and Random Sampling (RS) focus on the exploration of the input variable space while Uncertainty Sampling (US)~\citep{lewis1994sequential} caters to exploitation; \textsc{DBALStream}~\citep{ienco2014high} and \textsc{QBC-PYP} \citep{loy2012stream} are two state-of-the-art composite active learning methods which integrate the objective of exploration and exploitation in their design of acquisition criteria. 
In detail, in RS, each streaming-in sample is annotated based on a probability derived from the
ratio of the budget $b$ to the training sample size $n$.
QBC-PYP and \textsc{DBALStream} are configured with default parameters. 
US employs an uncertainty threshold of 0.7, determined through cross-validation.

To demonstrate the effectiveness of \cbeal in achieving high learning performance of the base learner with limited budgets, the classification accuracy of the base learner  trained by the samples acquired by each method on the testing set is evaluated as a metric.
We further investigate the percentage of positive samples acquired during the learning process as another metric to illustrate the exploration-exploitation trade-off in the supplemental material.



\subsubsection{Compared with Individual Agents}

First, the comparison of the classification accuracy of the base learners between \cbeal, the incorporated agents, and their variants is shown in Table~\ref{tab:simu_acc_agents}.
The performance of the agents that achieve the  highest accuracy on average among the exploration- and exploitation-oriented agents in the agent set is selected to be reported as "Opt. Explor." and "Opt. Exploit." 
The results of other individual agents are omitted here for better readability. 
It is observed that in the toy example, some of the agents do not use up the budget $B$.
To validate that it is a fair comparison with different numbers of acquired samples, we create "Opt. Explor. (Full)" and "Opt. Exploit. (Full)" as two variants where random sampling is used to artificially acquire from the unselected samples after the agents finish their acquisition of the streaming data, until the budget is used up.
Besides, the $\epsilon$-greedy policy with $\epsilon=0.01$ is applied to RAL-Agents to effectively improve their learning performance to be a more competitive benchmark \citep{wassermann2019ral}.
They are also applied to \cbeal methods for a fair comparison.

\begin{table}[!htb]
\caption{The average values and standard errors (in parenthesis) of classification accuracy in the simulation study reported over 10 replications. Best results (excluding \cbeal-2 and \cbeal-4) are highlighted in \textbf{bold}.}\label{tab:simu_acc_agents}
\resizebox{\textwidth}{!}{
\begin{tabular}{ccccccccc}
\toprule
\multicolumn{2}{c}{Level} & \multirow{3}{*}{Method} & \multicolumn{3}{c}{Sparsity   = 30\%} & \multicolumn{3}{c}{Sparsity   = 70\%} \\ \cline{1-2} \cline{4-9} 
\multirow{3}{*}{Disturbance} & \multirow{3}{*}{\shortstack{Percentage \\ of Positive \\ Samples}} &  & \multicolumn{3}{c}{Training Sample Size} & \multicolumn{3}{c}{Training Sample Size} \\ \cline{4-9} 
 &  &  & \multirow{2}{*}{500} & \multirow{2}{*}{1000} & \multirow{2}{*}{1500} & \multirow{2}{*}{500} & \multirow{2}{*}{1000} & \multirow{2}{*}{1500}\\
&  & \\\hline
\multirow{14}{*}{0\%} & \multirow{7}{*}{10\%} & Opt. Explor. & 60.1\% (0.02) & 61.5\% (0.02) & 63.3\% (0.03) & 69.8\% (0.04) & 72.2\% (0.03) & 69.7\% (0.03) \\
 &  & Opt. Explor. (Full) & 60.1\% (0.02) & 61.6\% (0.02) & 63.3\% (0.03) & 69.8\% (0.04) & 72.2\% (0.03) & 69.7\% (0.03) \\
 &  & Opt. Exploit. & 58.1\% (0.02) & 70.2\% (0.02) & 70.4\% (0.02) & 67.4\% (0.03) & 72.6\% (0.03) & 73.7\% (0.01) \\
 &  & Opt. Exploit. (Full) & 58.5\% (0.03) & 70.1\% (0.03) & 70.4\% (0.02) & 68.7\% (0.02) & 72.9\% (0.03) & 74.3\% (0.01) \\
 &  & \textbf{CBEAL-2} & 58.1\% (0.02) & 70.3\% $(0.03)^{\bm{*}}$ & 71.8\% $(0.03)^{\bm{*}}$ & 67.5\% (0.03) & 75.9\% $(0.02)^{\bm{*}}$ & 74.3\% $(0.03)^{\bm{*}}$ \\
 &  & \textbf{CBEAL-4} & 61.5\% (0.03) & 67.3\% (0.03) & 74.3\% (0.02) & 69.6\% (0.03) & 77.4\% (0.02) & 73.1\% (0.03) \\
 &  & \textbf{CBEAL-6} & \textbf{61.2\% (0.02)} & \textbf{73.9\% (0.03)} & \textbf{72.5\% (0.02)} & \textbf{69.9\% (0.03)} & \textbf{73.9}\% (0.03) & \textbf{76.7\% (0.02)} \\\cline{2-9} 
 & \multirow{7}{*}{5\%} & Opt. Explor. & 52.6\% (0.01) & 60.8\% (0.03) & 60.5\% (0.03) & 59.8\% (0.02) & 64.3\% (0.03) & 66.3\% (0.03) \\
 &  & Opt. Explor. (Full) & 52.6\% (0.01) & 60.8\% (0.03) & 60.5\% (0.03) & 59.8\% (0.02) & 64.3\% (0.03) & 66.2\% (0.03) \\
 &  & Opt. Exploit. & 61.0\% (0.03) & 70.2\% (0.03) & 64.6\% (0.03) & 64.5\% (0.03) & 68.4\% (0.04) & 70.5\% (0.03) \\
 &  & Opt. Exploit. (Full) & 61.1\% (0.03) & 70.2\% (0.03) & 64.6\% (0.03) & 65.2\% (0.03) & 68.8\% (0.04) & \textbf{70.6\% (0.03)} \\
 &  & \textbf{CBEAL-2} & 60.6\% (0.02) & 70.4\% $(0.03)^{\bm{*}}$ & 65.2\% $(0.02)^{\bm{*}}$ & 63.9\% (0.03) & 68.7\% $(0.04)^{\bm{*}}$ & 67.4\% (0.03) \\
 &  & \textbf{CBEAL-4} & 60.7\% (0.03) & 67.9\% (0.03) & 64.8\% (0.02) & 63.5\% (0.03) & 67.6\% (0.03) & 66.7\% (0.03) \\
 &  & \textbf{CBEAL-6} & \textbf{65.3\% (0.02)} & \textbf{72.0\% (0.02)} & \textbf{66.7\% (0.02)} & \textbf{68.9\% (0.03)} & \textbf{69.4\% (0.03)} & 69.0\% (0.03) \\ \hline
\multirow{14}{*}{3\%} & \multirow{7}{*}{10\%} & Opt. Explor. & 60.4\% (0.02) & 60.8\% (0.02) & 62.1\% (0.03) & 62.0\% (0.02) & 62.1\% (0.02) & 71.7\% (0.03) \\
 &  & Opt. Explor. (Full) & 60.5\% (0.02) & 60.6\% (0.03) & 62.1\% (0.03) & 62.0\% (0.03) & 62.4\% (0.02) & 71.7\% (0.03) \\
 &  & Opt. Exploit. & 68.1\% (0.03) & 69.3\% (0.03) & 72.9\% (0.03) & \textbf{68.9\% (0.03)} & 69.2\% (0.02) & 75.0\% (0.04) \\
 &  & Opt. Exploit. (Full) & 60.5\% (0.03) & 69.8\% (0.03) & 73.3\% (0.02) & \textbf{68.9\% (0.03)} & 69.3\% (0.02) & 75.3\% (0.04) \\
 &  & \textbf{CBEAL-2} & 67.7\% (0.03) & 72.2\% $(0.03)^{\bm{*}}$ & 73.1\% $(0.03)^{\bm{*}}$ & 65.1\% (0.04) & 69.3\% $(0.02)^{\bm{*}}$ & 77.5\% $(0.01)^{\bm{*}}$ \\
 &  & \textbf{CBEAL-4} & 68.2\% (0.03) & 67.0\% (0.03) & 73.2\% (0.03) & 66.7\% (0.03) & 66.4\% (0.02) & 74.5\% (0.03) \\
 &  & \textbf{CBEAL-6} & \textbf{69.0\% (0.04)} & \textbf{71.6\% (0.03)} & \textbf{73.8\% (0.02)} & 68.0\% (0.03) & \textbf{70.8\% (0.02)} & \textbf{79.4\% (0.02)} \\\cline{2-9}
 & \multirow{7}{*}{5\%} & Opt. Explor. & 53.5\% (0.01) & 57.6\% (0.02) & 60.8\% (0.03) & 54.8\% (0.02) & 63.9\% (0.03) & 65.8\% (0.02) \\
 &  & Opt. Explor. (Full) & 53.5\% (0.01) & 57.5\% (0.02) & 60.8\% (0.03) & 54.8\% (0.02) & 63.9\% (0.03) & 65.8\% (0.02) \\
 &  & Opt. Exploit. & 61.0\% (0.03) & 67.4\% (0.03) & \textbf{69.2\% (0.02)} & \textbf{62.5\% (0.03)} & 70.7\% (0.03) & 77.1\% (0.02) \\
 &  & Opt. Exploit. (Full) & 62.4\% (0.03) & 67.4\% (0.03) & 69.0\% (0.02) & 62.1\% (0.03) & 71.3\% (0.04) & 78.3\% (0.02) \\
 &  & \textbf{CBEAL-2} & 61.5\% $(0.02)^{\bm{*}}$ & 65.2\% (0.04) & 65.6\% (0.03) & 60\% (0.03) & 72.3\% $(0.03)^{\bm{*}}$ & 77.9\% $(0.02)^{\bm{*}}$ \\
  &  & \textbf{CBEAL-4} & 57.7\% (0.02) & 63.2\% (0.03) & 65.5\% (0.03) & 58.7\% (0.04) & 72.7\% (0.02) & 73.5\% (0.03) \\
 &  & \textbf{CBEAL-6} & \textbf{61.9\% (0.02)} & \textbf{68.3\% (0.03)} & 64.5\% (0.03) & 58.8\% (0.03) & \textbf{74.6\% (0.03)} & \textbf{80.1\% (0.03)}\\\bottomrule
\end{tabular}}
\end{table}

Table~\ref{tab:simu_acc_agents} summarizes the averages of the classification accuracy and standard errors over 10 replications of the base learner trained by $\mathcal{D}_t$. 
It can be observed that the proposed \cbeal-6 outperforms the best individual agent in 20 of the 24 scenarios, which verifies that the ensemble of multiple agents with explicit consideration for both exploration and exploitation can effectively enhance the learning performance under a highly imbalanced class distribution.
The advantage on learning performance compared to benchmarks is more significant when there is no disturbance and the class proportion is more balanced (i.e., $ds = 0\%, pc=10\%$).
However, with a more severe imbalance (i.e., $pc=5\%$), "Opt. Exploit." and its variants sometimes achieve slightly higher accuracy. 
One possible reason is that under such scenarios, it will be more efficient to only focus on decision boundary learning since the number of positive samples is limited. 
Besides, the inferior performance of \cbeal-6 under the scenario $ds = 3\%, pc=10\%, n=500, sp=70\%$ can be caused by the disturbance.
It can be found that, in general, the learning performance of the base learner is improved with a data stream with a large size and a higher sparsity.
However, when the sparsity is low, the accuracy sometimes decreases as the training sample size increases, which can be caused by the high imbalance and the lack of degree of freedom.

Another finding is \cbeal-2 achieves comparable performance with the better of the two incorporated agents, which implies that the proposed ensemble framework enables the intelligent selection among candidate agents in an adaptive manner.
Under 14 out of 24 scenarios, it outperforms both "Opt. Explor." and "Opt. Exploit.", where the results are marked with $^{\bm{*}}$.

Comparing the "Opt. Explor." with "Opt. Explor. (Full)" and "Opt. Exploit." with "Opt. Exploit. (Full)", we find that consuming the remaining budget by random acquisition will not make a significant improvement on the learning performance under most scenarios. 
Sometimes it will select less contributive samples, which leads to a lower accuracy because of the highly imbalanced distribution.
Therefore, it validates that the agents have acquired the most informative samples based on their criteria. 
Thus, this variant will not be considered in the following analysis.

We also find that \cbeal -6 obtains a significantly better balanced labeled data set $\mathcal{D}_t$ with a higher percentage of positive samples compared to the benchmarks under most scenarios.
Detailed results can be found in Table A1 in the supplementary material.

\subsubsection{Scalability Study}
It has been observed in the previous results (i.e., Tables~\ref{tab:simu_acc_agents}) that \cbeal-6 achieves better performance than \cbeal-2 in general.
Here, we further investigate the following two questions: What will be the impact of the ensemble of varying numbers of agent pairs and how should the agents be selected. 
Rechecking the results in Table~\ref{tab:simu_acc_agents}, We observe that \cbeal-6 demonstrates a dominate superiority in the learning performance, which indicates the advantage brought by multiple agents.
However, comparing the result of \cbeal-4 with \cbeal-2, \cbeal-4 achieves better performance in fewer than half of all scenarios. 
The counterintuitive result indicates that the ensemble of more agents may not improve the performance.
To identify the reason, we investigate the acquisition decision made by each agent in the agent set and their standardized weights in \cbeal-6 under one scenario in Fig.~\ref{fig:simu_246} where \cbeal-4 shows inferior performance than \cbeal-2 but \cbeal-6 performs better. 

\begin{figure}[!htb]
    \centering
    \captionsetup{justification=centering}
    \subfloat[Acquisition decisions made by $LD_1, LD_2, SPF_1$ agents incorporated in CBEAL-6]{\includegraphics[height=4cm, width=0.40\textwidth]{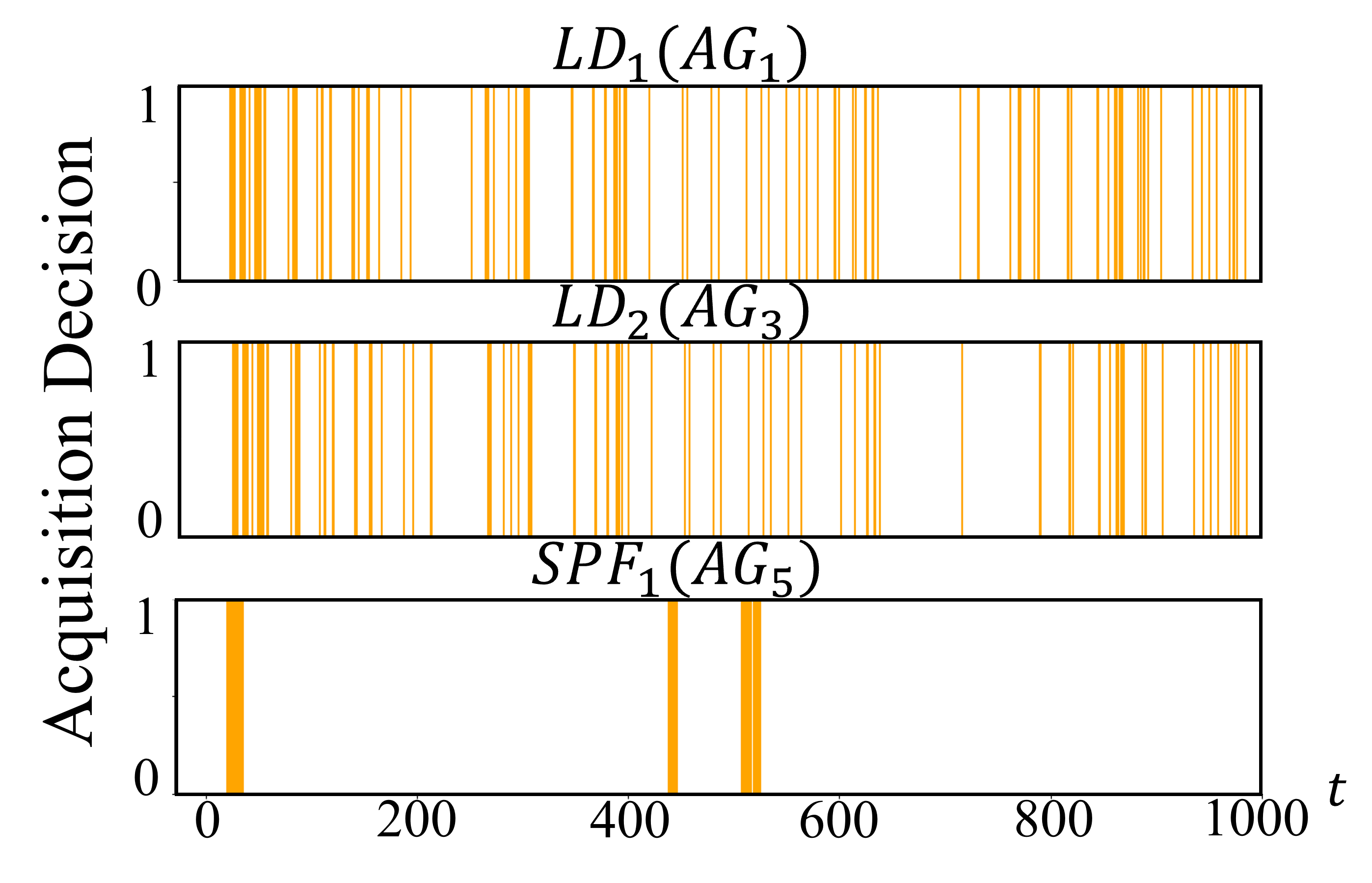}} 
    \subfloat[Acquisition decisions made by $RAL_1, RAL_2, RAL_3$ agents incorporated in CBEAL-6]{\includegraphics[height=4cm, width=0.40\textwidth]{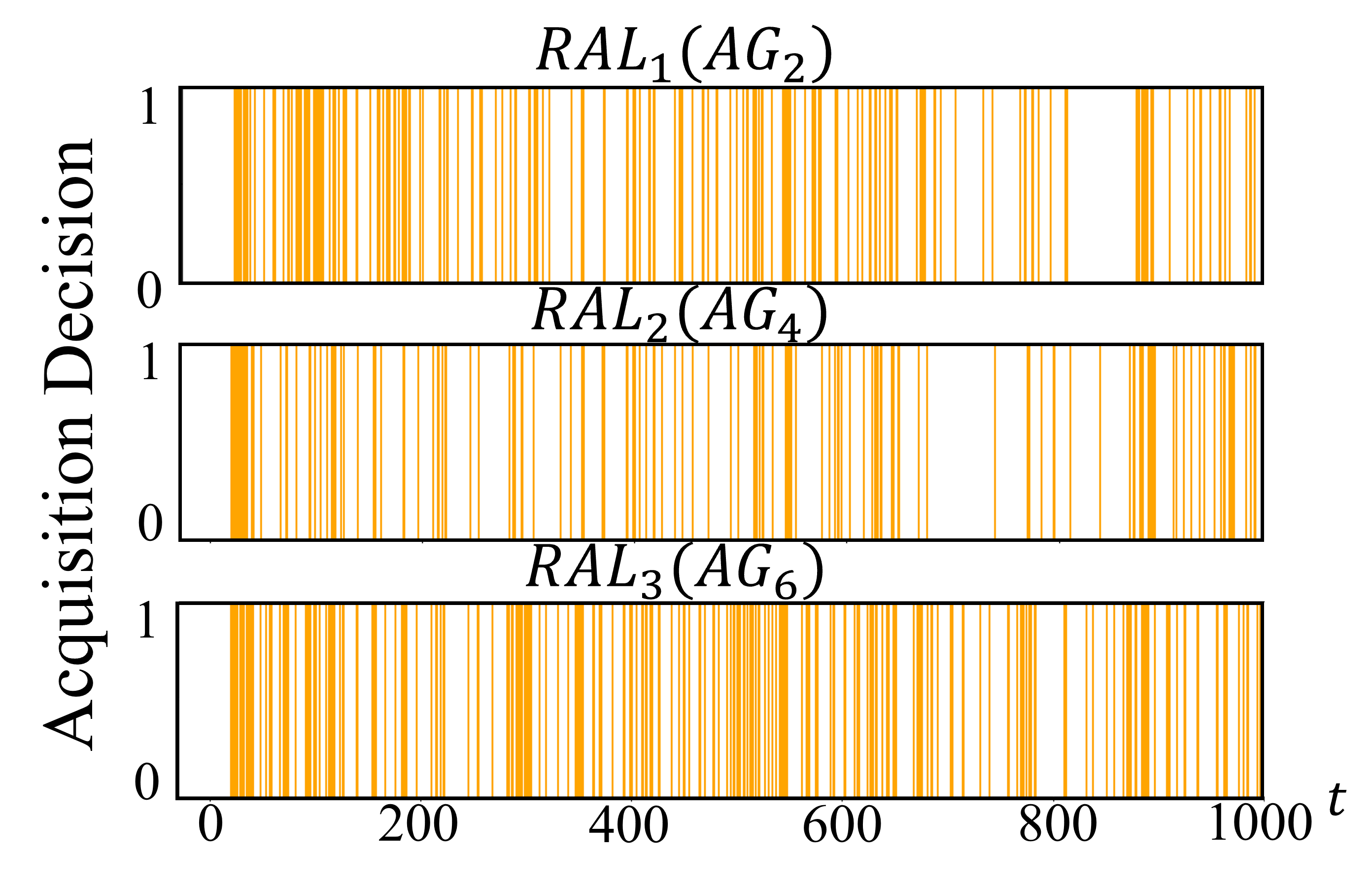}} \\
    \subfloat[Acquisition decisions made by \textsc{CBEAL-2, CBEAL-4, CBEAL-6} and their testing accuracy]{\includegraphics[height=4cm, width=0.40\textwidth]{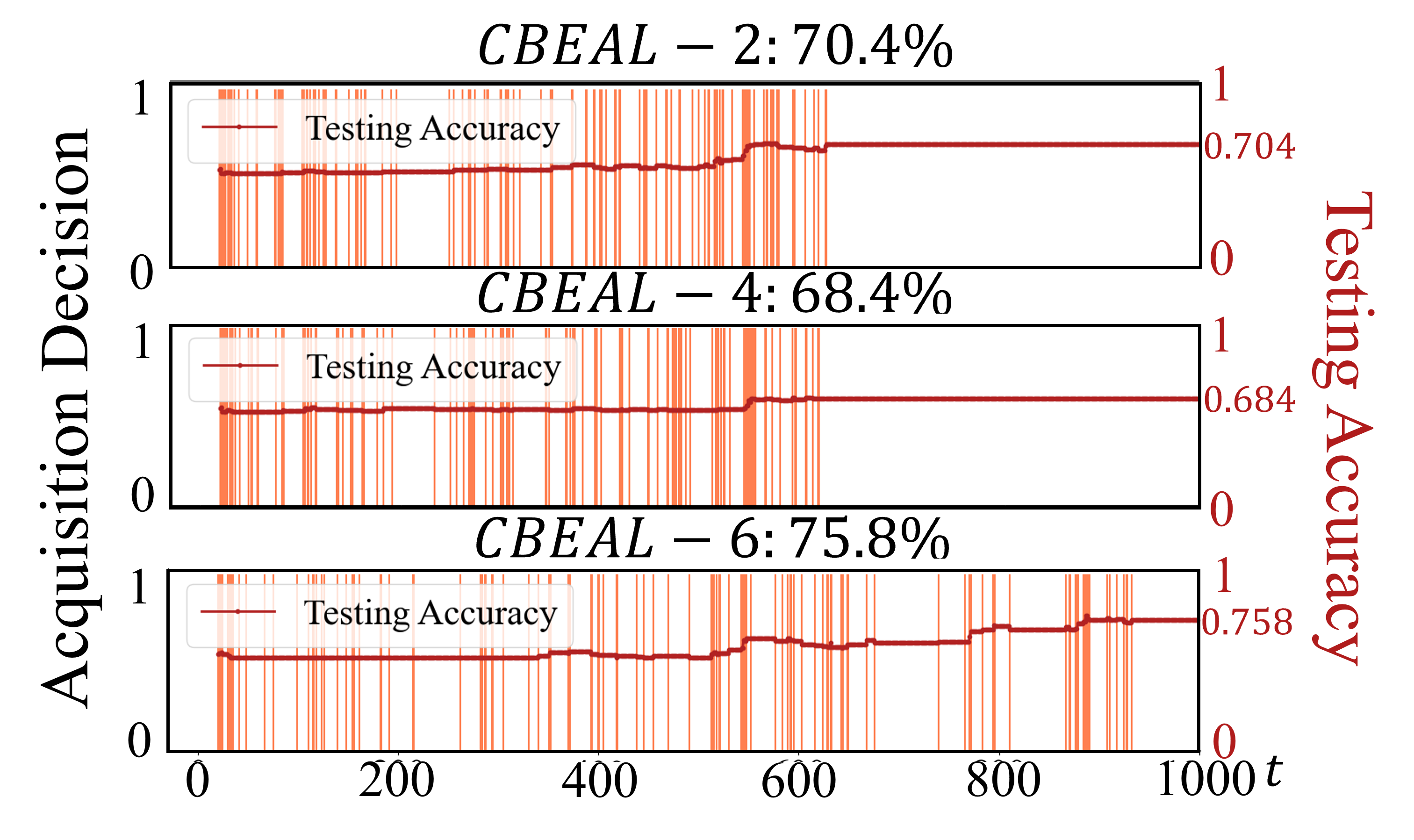}} 
    \subfloat[Standardized weight of each agents in \cbeal-6]{\includegraphics[height=4cm, width=0.40\textwidth]{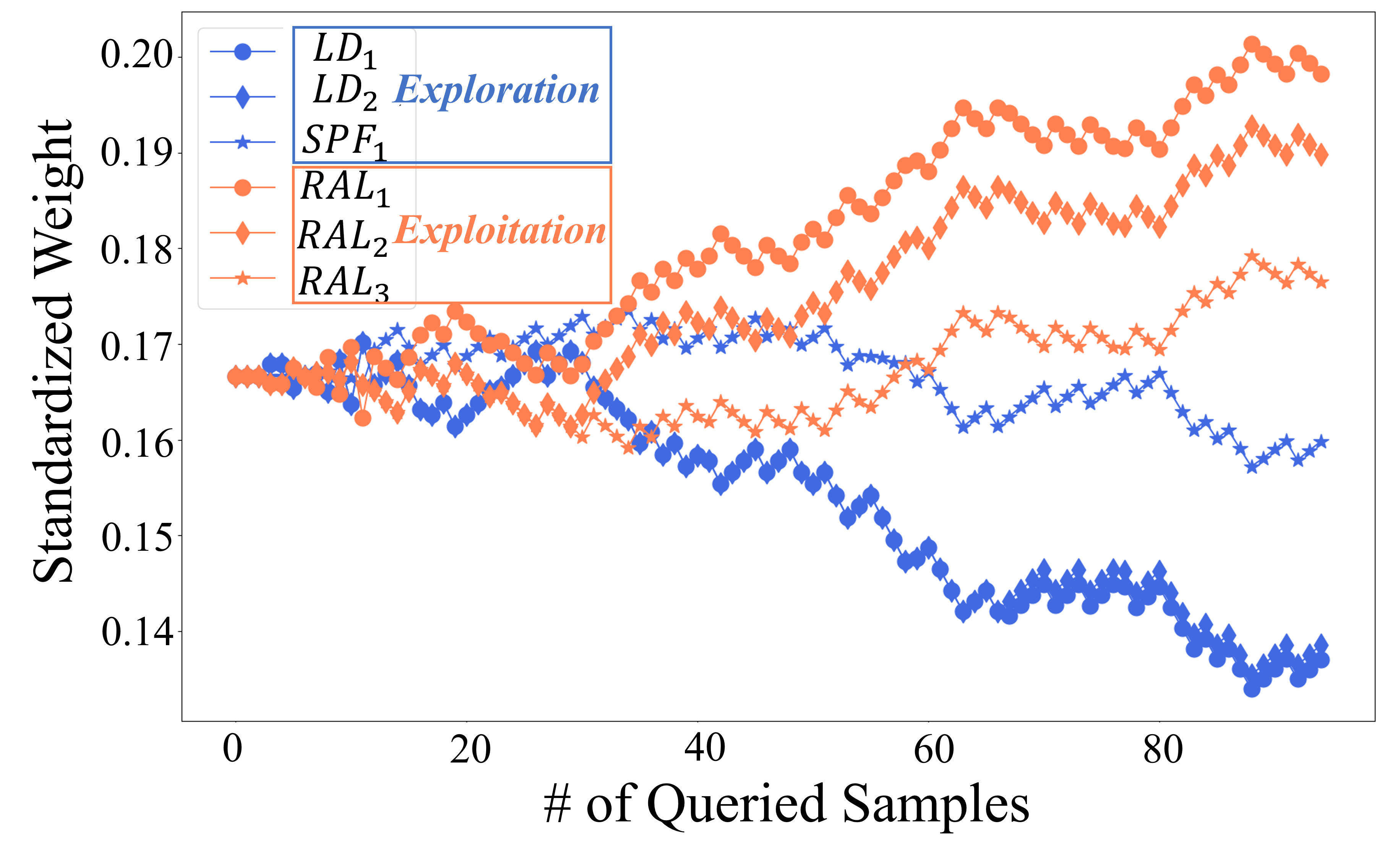}}
    \caption{Testing accuracy of \cbeal methods under the learning scenario $n=1000$, $ds=0\%$, $sp=30\%, pc=10\%$. (a)-(c): Bar charts of the acquisition decisions made by all candidate agents and \cbeal methods; (d) standardized weights of each candidate agent in \cbeal-6.}
    \label{fig:simu_246}
\end{figure}

It can be observed from the bar charts (Fig.~\ref{fig:simu_246}(a)-(c)) that in \cbeal-6, the first two pairs of agents ($LD_1$ and $LD_2$, $RAL_1$ and $RAL_2$) make similar acquisition decisions while the third pair behaves differently.
As a direct result of homogeneity, the standardized weights of the first two pairs of exploration- and exploration-oriented agents will be close and change in a similar pattern in \cbeal-6 (i.e., Fig.~\ref{fig:simu_246}(d)), which also causes a comparable performance of \cbeal-4 and \cbeal-2.
This also explains that the superior performance of \cbeal-6 lies in the heterogeneous decisions brought by the third pair of agents ($SPF_1$ and $RAL_3$).
Besides, the weights of agents in \cbeal-6 indicate that at the beginning, the exploration dominates the active learning process.  
Later, the proposed \cbeal switches its tendency to exploitation, and the exploration capability still remains adaptive to the data stream, which contributes to the effective and efficient online annotation.

In summary, the ensemble of distinguished agents provides comprehensive criteria to evaluate the informativeness of each streaming sample in terms of exploration and exploitation, thus achieving a well-balanced trade-off.
Since the acquisition behaviour of one active learning agent varies under different scenarios and there does not exist one overall winner, \cbeal-6 is recommended as a default configuration to solve these challenging learning tasks.

\subsubsection{Comparison Study with Benchmark Methods}
Finally, \cbeal is compared with other four benchmark methods (i.e., RS, US, \textsc{DBALStream} and \textsc{QBC-PYP}) and  Table~\ref{tab:simu_benchmark} summarizes the classification accuracy of the base learner trained by samples acquired by each method.

\begin{table}[!htb]
\caption{The average values and standard errors (in parenthesis) of the classification accuracy in the simulation study reported over 10 replications. Significant best results are highlighted in \textbf{bold}.}\label{tab:simu_benchmark}
\resizebox{\textwidth}{!}{
\begin{tabular}{ccccccccc}
\hline
\multicolumn{2}{c}{Level} & \multirow{4}{*}{Method} & \multicolumn{3}{c}{Sparsity   = 30\%} & \multicolumn{3}{c}{Sparsity  = 70\%} \\ \cline{1-2} \cline{4-9} 
\multirow{3}{*}{Disturbance} & \multirow{3}{*}{\shortstack{Percentage \\ of Positive \\ Samples}} &  & \multicolumn{3}{c}{Training Sample Size} & \multicolumn{3}{c}{Training Sample Size} \\ \cline{4-9} 
 &  &  & \multirow{2}{*}{500} & \multirow{2}{*}{1000} & \multirow{2}{*}{1500} & \multirow{2}{*}{500} & \multirow{2}{*}{1000} & \multirow{2}{*}{1500}\\
&  & \\\hline
\multirow{14}{*}{0\%} & \multirow{7}{*}{10\%} & Initial & 51.4\% (0.00) & 52.3\% (0.01) & 52.2\% (0.01) & 51.0\% (0.00) & 51.1\% (0.01) & 50.9\% (0.00) \\
 &  & RS & 52.0\% (0.01) & 57.3\% (0.02) & 55.6\% (0.03) & 57.4\% (0.02) & 60.6\% (0.02) & 62.1\% (0.02) \\
 &  & US & 60.6\% (0.03) & 70.3\% (0.04) & 67.9\% (0.04) & 65.6\% (0.04) & 72.8\% (0.04) & 72.4\% (0.04) \\
 &  & \textsc{DBALStream} & 54.6\% (0.01) & 60.6\% (0.02) & 58.1\% (0.02) & 56.7\% (0.01) & 60.7\% (0.02) & 59.4\% (0.02) \\
 &  & QBC-PYP & 51.2\% (0.01) & 67.8\% (0.04) & 68.1\% (0.03) & 63.5\% (0.04) & 70.3\% (0.04) & 66.1\% (0.04) \\
 &  & \textbf{CBEAL-6} & \textbf{61.2\% (0.02)} & \textbf{73.9\% (0.03)} & \textbf{72.5\% (0.02)} & \textbf{69.9\% (0.03)} & \textbf{73.9\% (0.03)} & \textbf{76.7\% (0.02)} \\
 &  & All Training Data & 76.7\% (0.01) & 78.9\% (0.02) & 81.5\% (0.01) & 77.7\% (0.01) & 81.5\% (0.01) & 80.7\% (0.01) \\\cline{2-9}
 & \multirow{7}{*}{5\%} & Initial & 52.5\% (0.01) & 52.9\% (0.01) & 51.1\% (0.00) & 51.7\% (0.01) & 52.1\% (0.01) & 51.5\% (0.01) \\
 &  & RS & 51.9\% (0.01) & 53.7\% (0.01) & 55.1\% (0.02) & 54.9\% (0.02) & 55.6\% (0.03) & 53.8\% (0.01) \\
 &  & US & 62.2\% (0.04) & 70.3\% (0.04) & 61.6\% (0.04) & \textbf{69.8\% (0.03)} & 66.6\% (0.04) & 68.3\% (0.04) \\
 &  & \textsc{DBALStream} & 54.7\% (0.01) & 56.3\% (0.00) & 54.2\% (0.01) & 54.4\% (0.01) & 55.2\% (0.01) & 54.6\% (0.01) \\
 &  & QBC-PYP & 53.4\% (0.03) & 61.9\% (0.03) & 60.2\% (0.03) & 58.0\% (0.03) & 64.0\% (0.06) & 63.8\% (0.05) \\
 &  & \textbf{CBEAL-6} & \textbf{65.3\% (0.02)} & \textbf{72.0\% (0.02)} & \textbf{66.7\% (0.02)} & 68.9\% (0.03) & \textbf{69.4\% (0.03)} & \textbf{69.0\% (0.03)} \\
 &  & All Training Data & 75.1\% (0.02) & 78.7\% (0.01) & 78.4\% (0.01) & 77.7\% (0.01) & 80.2\% (0.01) & 80.7\% (0.01) \\\hline
\multirow{14}{*}{3\%} & \multirow{7}{*}{10\%} & Initial & 51.9\% (0.01) & 51.3\% (0.01) & 50.7\% (0.00) & 51.1\% (0.00) & 50.9\% (0.00) & 51.2\% (0.00) \\
 &  & RS & 53.0\% (0.01) & 56.2\% (0.02) & 57.7\% (0.01) & 54.3\% (0.01) & 53.8\% (0.01) & 60.6\% (0.03) \\
 &  & US & 60.7\% (0.04) & 68.0\% (0.04) & 70.5\% (0.04) & 67.6\% (0.04) & 66.2\% (0.04) & 71.0\% (0.05) \\
 &  & \textsc{DBALStream} & 56.9\% (0.02) & 60.1\% (0.03) & 60.7\% (0.02) & 56.1\% (0.01) & 57.1\% (0.02) & 61.3\% (0.02) \\
 &  & QBC-PYP & 61.2\% (0.03) & 56.8\% (0.03) & 68.8\% (0.04) & 64.5\% (0.04) & 63.6\% (0.04) & 66.4\% (0.04) \\
 &  & \textbf{CBEAL-6} & \textbf{69.0\% (0.04)} & \textbf{71.6\% (0.03)} & \textbf{73.8\% (0.02)} & \textbf{68.0\% (0.03)} & \textbf{70.8\% (0.02)} & \textbf{79.4\% (0.02)} \\
 &  & All Training Data & 79.2\% (0.02) & 78.8\% (0.01) & 81.0\% (0.01) & 80.9\% (0.01) & 76.4\% (0.01) & 81.1\% (0.01) \\\cline{2-9}
 & \multirow{7}{*}{5\%} & Initial & 52.2\% (0.01) & 53.2\% (0.01) & 54.1\% (0.01) & 51.5\% (0.01) & 51.2\% (0.01) & 51.5\% (0.00) \\
 &  & RS & 52.9\% (0.01) & 54.4\% (0.01) & 52.8\% (0.02) & 53.2\% (0.02) & 54.5\% (0.02) & 52.6\% (0.01) \\
 &  & US & 59.4\% (0.03) & 67.9\% (0.04) & \textbf{68.0\% (0.03)} & 59.9\% (0.04) & 62.6\% (0.04) & 73.9\% (0.04) \\
 &  & \textsc{DBALStream} & 54.3\% (0.01) & 56.0\% (0.01) & 54.7\% (0.01) & 55.6\% (0.01) & 55.1\% (0.01) & 57.6\% (0.02) \\
 &  & QBC-PYP & \textbf{62.3\% (0.04)} & 58.1\% (0.04) & 62.5\% (0.04) & \textbf{60.2\% (0.04)} & 64.2\% (0.04) & 61.3\% (0.04) \\
 &  & \textbf{CBEAL-6} & 61.9\% (0.02) & \textbf{68.3\% (0.03)} & 64.5\% (0.03) & 58.8\% (0.03) & \textbf{74.6\% (0.03)} & \textbf{80.1\% (0.03)} \\
 &  & All Training Data & 73.4\% (0.02) & 78.0\% (0.01) & 76.6\% (0.01) & 73.1\% (0.02) & 78.1\% (0.01) & 80.5\% (0.01)\\\hline
\end{tabular}}
\end{table}

By investigating the results in Table~\ref{tab:simu_benchmark}, it is concluded that \cbeal-6 achieves significantly better performance compared to the benchmarks under most scenarios.
With a limited budget, \cbeal-6 can achieve high classification accuracy close to that of using all the training data with a data stream of larger size and higher sparsity (i.e., $n=1500, sp=70\%$).

Considering the benchmark methods, \textsc{US} demonstrates its competitive performance compared to other benchmarks, but with higher standard errors. 
This indicates the importance of exploitation for the online annotation scenarios, and this also explains the superiority of RAL-Agents in Table~\ref{tab:simu_acc_agents}.
However, the lack of adaptiveness to the data stream causes its inferior performance compared to \cbeal-6.
The inferior performance of \textsc{DBALStream} might attribute to its concentration on samples with both high local density and large margin, which does not perform effective input variable space exploration. 
On the contrary, the proposed LD-Agent is able to complete this exploration task.
\textsc{QBC-PYP} underperforms in comparison to US, primarily due to its inadequate exploitation capability under a highly imbalanced class distribution. During the streaming-in process, it takes a mixture of Gaussians to quantify the ambiguity of the class membership of one sample. The lack of connection between its acquisition decision and the evolving performance of the base learner results in the lack of effective exploitation.


Additional results of other benchmarks on the toy example are included in the supplementary material, where \textsc{DBALStream} and \textsc{QBC-PYP} outperform \textsc{US} and \textsc{RS} in terms of learning performance. 
The result implies that \textsc{US} will achieve poor performance if the base learner is confident about its classification result at the beginning with the initial training data.

Overall, the results validate that the proposed method can acquire samples effectively and efficiently in an adaptive manner under various circumstances, confirming the benefits of the ensemble of designed exploration-oriented and exploitation-oriented agents.

\section{Case Study}\label{sec:case}
The proposed \cbeal method is applied to a FDM process for online quality modeling and inspection \citep{chen2016variation}, which is introduced as the motivation example in Section~\ref{Sec:Intro}.
During the printing process, various \textit{in situ} process variables (i.e., vibration, nozzle temperature, etc.) are collected in the ICPS to monitor the process and predict the quality of the FDM part \citep{sun2016logistic}.
Here, we focus on the layerwise surface roughness as a binary quality indicator, which is judged and annotated as conforming/nonconforming by domain experts. 
Fig.~\ref{fig:cs_part} shows the example normal and rough surfaces of the printed FDM part.
To enable real-time quality prediction and online modeling, the  \textit{in situ} measurements are registered and divided into $10$-second windows as samples.
The online updating of the quality model requires experts to consistently observe and examine the surface roughness during the printing process for the window-wise annotation, which is labor-intensive and time-consuming.
Therefore, \cbeal is employed to develop an accurate quality model with less labeling efforts and high-quality training data through wisely selecting the samples for annotation.

\begin{figure}[!htb]
    \centering
    \captionsetup{justification=centering}
    \subfloat[FDM part modified from NAS 979 part ]{\includegraphics[width=0.25\textwidth]{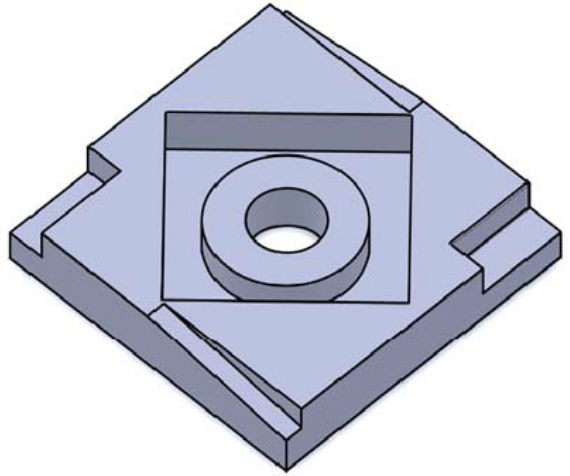}} \hfill
    \subfloat[Normal surface]{\includegraphics[width=0.25\textwidth]{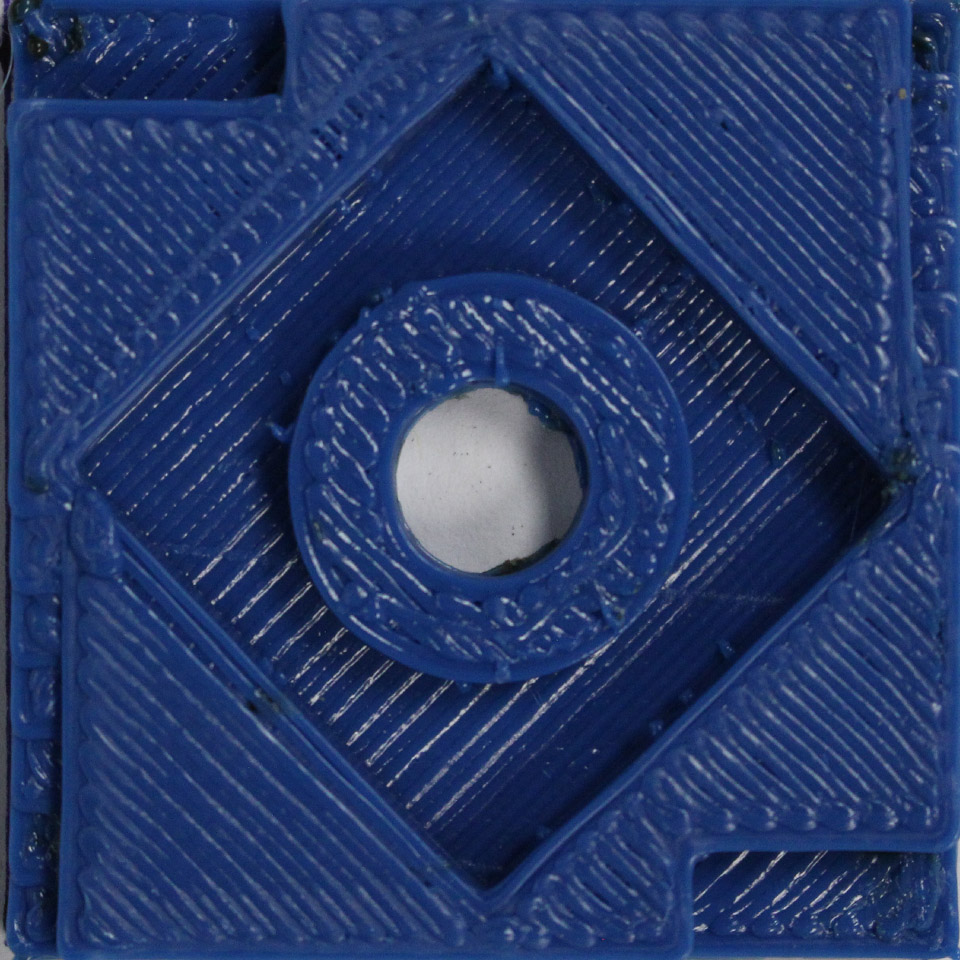}} \hfill
    \subfloat[Rough surface]{\includegraphics[width=0.25\textwidth]{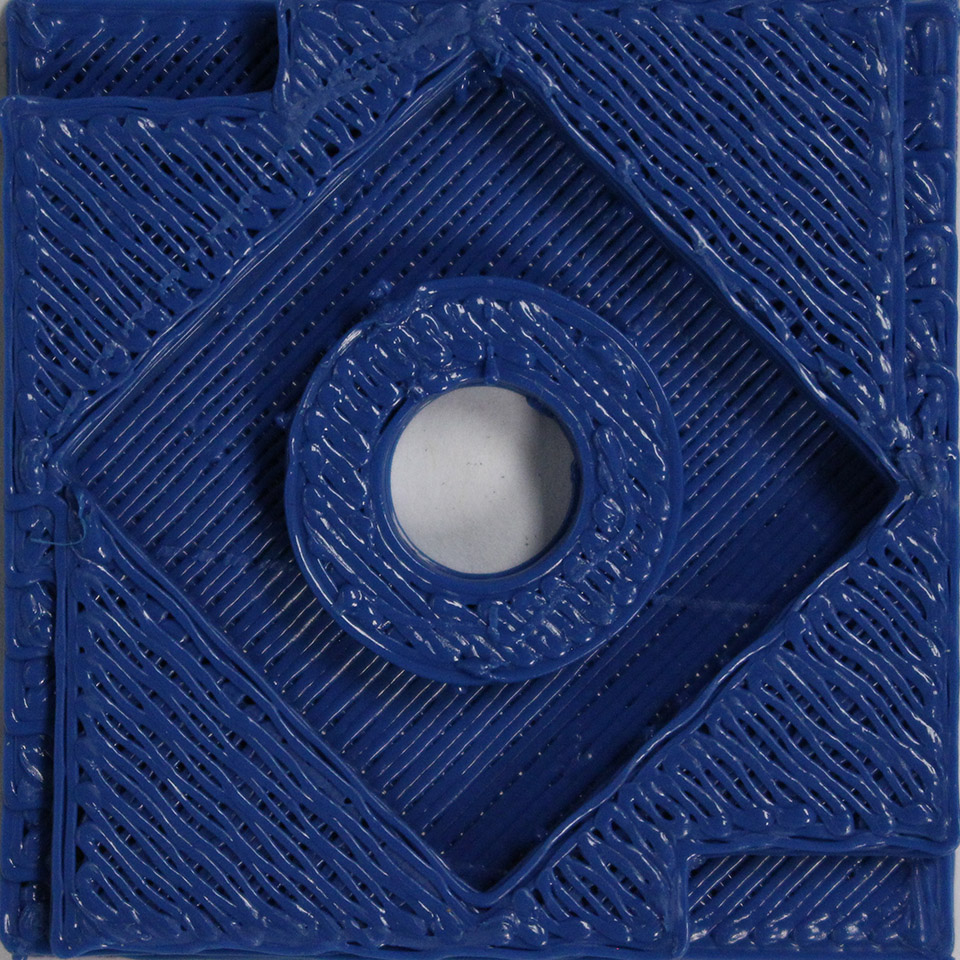}}
    \caption{Printed part in the case study and the example surfaces, (a) is modified from \citet{huang2020detecting} with authors' permission}
    \label{fig:cs_part}
\end{figure}

We refer \cite{chen2016variation} for the details of the data collection.
During the process, the \textit{in situ} extruder vibration, table vibration, nozzle temperature and table temperature are measured and collected in a functional data format.
Considering the wavelet analysis applied to the funcational measures, the process setting variables (i.e., feed/flow ratio and layer thickness) and summary statistics for each functional measurement (i.e., mean, standard deviation, skewness, and kurtosis), 519 features are obtained in total as the model input.
48 FDM parts are printed successfully in total with 1588 samples (i.e., windows of measurements).
Label the nonconforming samples by 1 and the conforming samples by 0.
As a common scenario in quality inspection, we find there are 180 nonconforming samples in total, which implies a highly imbalanced class distribution.
We also notice that the positive labels appear in succession during the process (i.e., $\dots0000011100\dots$) because the malfunction of the printer in a period of time will affect the quality of several consecutive layers. 
Therefore, we maintain the original order of samples in sequence when we separate the training and testing set.
A logistic regression model with $L1$ penalty is adopted as the base learner for the quality online modeling.
In brief, the highly imbalanced data stream with patterns in sequence and an underlying multimodal distribution (i.e., Fig.\ref{fig:motivation}) brings a challenging online annotation scenario.

We evaluate the classification accuracy of the base learner of the proposed \cbeal-6 under different level of budgets (i.e., $\{3\%, 5\%, 10\%, 15\%, 20\%\}$) with 10 replications.
In each replication, 1/3 samples are extracted with a random starting point from the whole data set in time order as the testing data set, with the remaining for training.
The first 10 samples in the training set will be used for the model pretraining as $D_0$.
The best performed individual agents in Table~\ref{tab:agents} and the other four benchmarks in Section~\ref{sec:sim} are employed for the comparison.

\begin{figure}[!htb]
    \centering
    \captionsetup{justification=centering}
    \subfloat[Averaged training classification accuracy]{\includegraphics[width=0.48\textwidth]{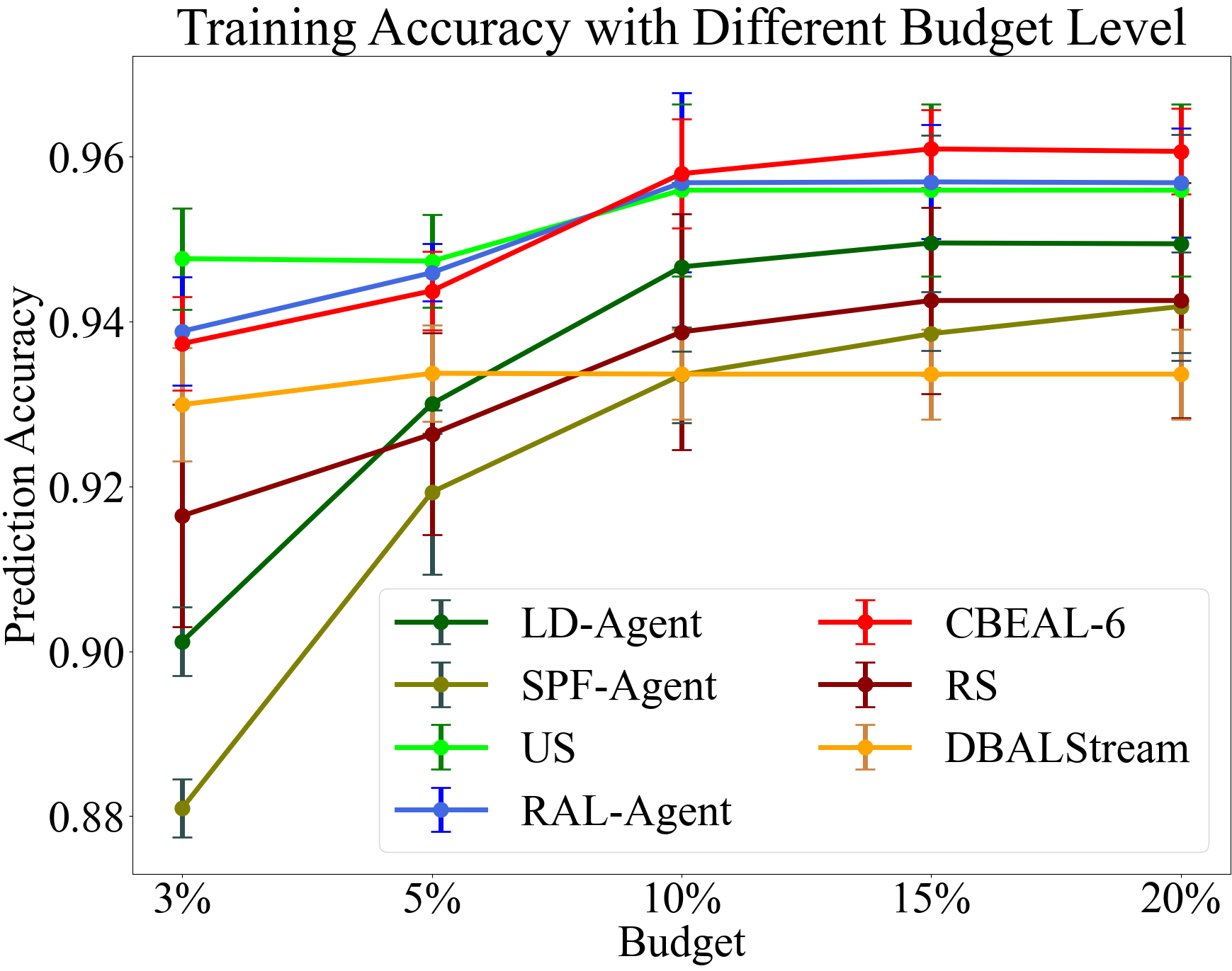}}
    \subfloat[Averaged testing classification accuracy]{\includegraphics[width=0.48\textwidth]{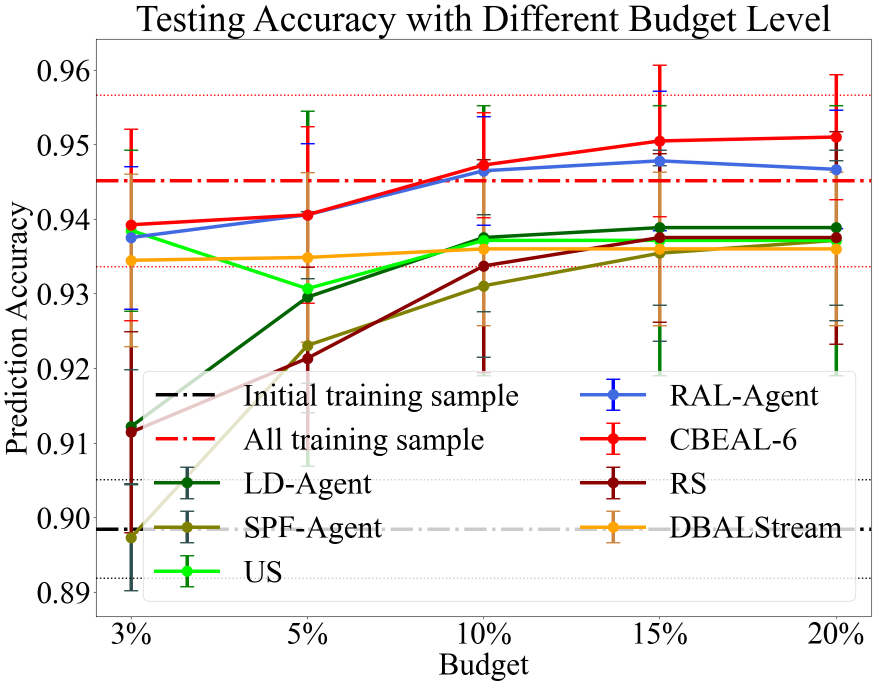}} 
    \caption{The average values of training and testing classification accuracy of \cbeal and benchmark methods for the case study reported over 10 replications}
    \label{fig:cs_acc}
\end{figure}

Fig.~\ref{fig:cs_acc} demonstrates the averages of the training and testing classification accuracy of the base learners, where the error bars represent the standard errors over 10 replication.
The dash-dotted red line represents the testing accuracy of the base learner trained by all training data and the dotted red line is the standard error.
Correspondingly, the testing accuracy of the base learner 
trained by the initial training data is marked with black lines in a similar way.
Here, the result of \textsc{QBC-PYP} is not included since it consistently refuses to acquire samples during the streaming process, which might be caused by the pattern of positive labels in the sequence that requires the method to adapt its exploration-exploitation tendency to the data stream.

By investigating the results, it can be observed that the proposed method consistently outperforms its incorporated agents and the rest of benchmark methods in testing accuracy under different levels of budgets.
The testing classification accuracy of most benchmark methods has an increasing trend as the budget increases from $3\%$ to $10\%$ and then the trend goes smoother, which indicates the samples acquired in the $10\%$ budget make the most of the contribution to the quality modeling. 
However, the testing accuracy of \cbeal-6 keeps increasing with a higher budget, which validates the continuous acquisition of informative samples. 
Notably, the testing accuracy of \cbeal-6 exceeds the accuracy of the base learner trained by all available samples when the budget is higher than $10\%$.
This implies, despite the high dimension, a training set with a smaller sample size but better balanced samples has better quality, thus improving the modeling accuracy. 
Therefore, the proposed method not only reduces the labelling efforts but also contributes to the online modeling performance via acquiring high-quality samples.

In conclusion, the case study verifies that \cbeal achieves a well-balanced exploration-exploitation trade-off during the streaming process in an adaptive manner, which enables the highly accurate online quality modeling with limited labelling efforts for the FDM quality inspection.

\section{Conclusions}\label{sec:conclusion}
While the high-speed, high-volume streaming data brought by the ICPS enhance the data-driven decision-making by AI models, the quality of online data may hamper the modeling performance for manufacturing.
To provide resilient AI modeling performance, informative samples need to be acquired from the streaming data to provide a high-quality training data set as well as reducing the human annotation efforts required for AI incubation in an online manner.
Existing active learning methods cannot balance the exploration-exploitation trade-off in the challenging online annotation scenario.
In this work, we propose an ensemble of exploration- and exploitation-oriented
active learning agents as \cbeal. 
With the ensemble of agents considering each objective explicitly and the proposed \textsc{Exp4.P-EWMA solver}, \cbeal adjusts the exploration and exploitation tendency adaptive to different online annotation
scenarios. 
Simulation studies and the case study in FDM processes demonstrate the advantage of CBEAL over benchmarks
in the learning accuracy with a limited acquisition budget.

We notice some limitations of the proposed method. 
First, \cbeal shows its superiority under learning scenarios with a complex input data distribution.
Under other generic scenarios without a demanding exploration-exploitation trade-off, \cbeal may lose its advantage due to the encoded explicit exploration objective.
In this case, \cbeal can enhance its concentration on exploitation by adding exploitation-oriented agents or removing exploration-oriented agents.
Second, the hyperparameters for the agents are optimized with the grid search. 
We will formulate the hyperparameter tuning as a meta-learning problem for different base learners as future work \citep{thrun2012learning}. 

The work leaves us with several future research directions.
Firstly, the reward function in \cbeal can be adjusted to quantify regression model performance, such as the root mean squared error of importance-weighted training samples \citep{beygelzimer2009importance}. 
Concurrently, the exploitation-oriented agent's reward function could be set as either the base learner's bootstrap uncertainty \citep{endo2015confidence} or the expected model change \citep{cai2013maximizing}. 
These adjustments extend the framework to generic supervised models.
Second, we will investigate formulating the budget as a hard constraint  to maintain strict control over the acquisition cost. 
To realize this, we propose modifying our \textsc{Exp4.P-EWMA} algorithm and integrating it with the knapsack bandit algorithm \citep{badanidiyuru2018bandits} to effectively address this hard constraint. 
Furthermore, we consider the ensemble of multiple modalities as the second level actions in \cbeal, enabling the learner to decide not only whether to acquire a sample but also its data source, inspired by \citep{wang2021moss}.

\newpage

\bibliographystyle{plainnat}
\bibliography{ref.bib}

\newpage
\onecolumn

\appendix


\section{Proofs}

\section{Characterization of Agents}
To validate the exploration and exploitation capability of the proposed agents, theoretical justification is provided for the designed acquisition criteria.
The variance of the acquired samples $\mathcal{D}_t$ by one agent serves as an appropriate metric for assessing its exploration and exploitation activity during the online annotation process.
A higher variance suggests a learner's ability to explore the input variable space via acquiring samples in a larger region, whereas a lower variance implies a high frequency of acquisition in a small region for exploitation.
To compare the variance of $\mathcal{D}_t$, we examine the probability of a single sample being acquired by the proposed agents.

\subsection{Exploration-oriented Agent}
We assume the streaming data belong to a mixture of Gaussian distributions.
Denote the previously observed samples stored in the sliding window $\mathcal{W}$ before time $t$ as the set $\{\bm{x}_1, \bm{x}_2, ..., \penalty 0 \bm{x}_L\}$, where $\bm{x}_i$ belongs to the $i$-th Gaussian distribution (i.e., $\bm{x}_i\sim \mathcal{N}_q(\bm{\mu}_i, \bm{\Sigma}^{(i)})$).  
Given a streaming sample $\bm{x}_t$ at time $t$ which follows another Gaussian distribution (i.e., $\bm{x}_t\sim \mathcal{N}_q(\bm{\mu}_k, \bm{\Sigma}^{(k)})$),
the acquisition probability of the proposed low-density based exploration agent is:
 \begin{align}
    p_t & = \frac{lsf(\bm{x}_t))}{L\delta_L} = \frac{\sum_{i=1}^{L}\mathbb{I}\{\max d(\bm{x}_i, \bm{x}_j), j\neq i, j\in\{1, ..., L\}<d(\bm{x}_i, \bm{x}_t)\}}{L\delta_L}.
\end{align}
 With the distance $d(\bm{x}_i, \bm{x}_j)=\sqrt{\norm{\bm{x}_i-\bm{x}_j}_2^2}$, the acquisition probability can be rewritten as: 
 \begin{align}
    p_t & = \sum_{i=1}^L{P\{\max_{j\neq i, j\in\{1,...,L\}}{\norm{\bm{x}_i-\bm{x}_j}_2^2<\norm{\bm{x}_i-\bm{x}_t}_2^2}\}}/(L\cdot\delta_L).
\end{align}

To evaluate the acquisition decision, we calculate the expectation of the acquisition probability:
\begin{align}\label{eq:7.3}
        \mathbb{E}_{\bm{x}_i, \bm{x}_j, \bm{x}_k}[p_t] & = \mathbb{E}\bigg[\sum_{i=1}^L{P\{\max_{j\neq i, j\in\{1,...,L\}}{\norm{\bm{x}_i-\bm{x}_j}_2^2<\norm{\bm{x}_i-\bm{x}_t}_2^2}\}}/(L\cdot\delta_L)\bigg]\nonumber\\
        & = \sum_{i=1}^L\mathbb{E}[P\{\max_{j\neq i, j\in\{1,...,L\}}{\norm{\bm{x}_i-\bm{x}_j}_2^2<\norm{\bm{x}_i-\bm{x}_t}_2^2}\}]/(L\cdot\delta_L).
\end{align}

Denote $X_{i,j}$ and $Y_{i,t}$ as the squared Euclidean distance between $\bm{x}_i, \bm{x}_j$ and $\bm{x}_i, \bm{x}_t$, (i.e., $X_{i,j}=\norm{\bm{x}_i-\bm{x}_j}_2^2\in \mathbb{R}$ where $j\neq i, i,j\in\{1, ..., L\}$, and $Y_{i,t}=\norm{\bm{x}_i-\bm{x}_t}_2^2 \in \mathbb{R},i\in\{1, ..., L\}$).
Let $Z_i = \max_{j\neq i, j\in\{1,...,L\}}{\norm{\bm{x}_i-\bm{x}_j}_2^2}$, $ F_Z(z) = P(X_{i,1}<z,X_{i,2}<z, \dots, X_{i,L}<z) = \Pi_{j=1}^{L}F_{X_{i,j}}(z).$
\begin{align}
    \mathbb{E}_{\bm{x}_i, \bm{x}_j, \bm{x}_k}[p_t] & = \sum_{i=1}^L \mathbb{E}[\mathbb{E}[P\{Z_i<Y_{i,t}\}|Y_{i,t}]]/(L\cdot\delta_L)\nonumber\\
    & = \sum_{i=1}^L \mathbb{E}[F_Z(Y_{i,t})]/(L\cdot\delta_L)\nonumber\\
     & = \sum_{i=1}^L \int_{0}^{\infty}\bigg[\Pi_{j=1, j\neq i}^{L-1}F_{X_{i,j}}(y))\bigg]f_{Y_{i,t}}(y)dy/(L\cdot\delta_L),
\end{align}
where $F_{X_{i,j}}(\cdot)$ is the cumulative density function for $X_{i,j}$ and $f_{Y_{i,t}}$ is the probability density function for $Y_{i,t}$.

Since the samples in $\mathcal{W}$ and the incoming sample $\bm{x}_t$ can be considered as independent draws from different Gaussian distributions, we have $\bm{x}_i-\bm{x}_j\sim \mathcal{N}(\bm{x}_i-\bm{x}_j, \bm{\Sigma}^{(i)}+\bm{\Sigma}^{(j)})$, and $\bm{x}_i-\bm{x}_t\sim \mathcal{N}(\bm{x}_i-\bm{x}_t, \bm{\Sigma}^{(i)}+\bm{\Sigma}^{(k)})$.
Therefore, $X_{i,j}$ and $Y_{i,t}$ are quadratic forms of random normal variables and follow a generalized chi-square distribution.
The probability density function for $X_{i,j}$ is \citep{mathai1992quadratic}:
\begin{align}
    f_{X_{i,j}=y} = \sum_{k=1}^{\infty}(-1)^kc_k\frac{y^{\frac{q}{2}+k-1}}{\Gamma(\frac{q}{2}+k)}, 0\leq y\leq \infty.
\end{align}
 The cumulative density function is:
 \begin{align}
    F(X_{i,j}<y) = \sum_{k=0}^{\infty}(-1)^kc_k\frac{y^{\frac{q}{2}+k}}{\Gamma(\frac{q}{2}+k+1)}, 0<y<\infty,
\end{align}
where $c_0$ and $c_k$ are defined by:
\begin{align}
    &H^T(\bm{\Sigma}^{(i)}+\bm{\Sigma}^{(j)})H=\diag (\lambda_1,...,\lambda_q)\\
    & H^TH  = I,\\
    & b = H^T(\bm{\Sigma}^{(i)}+\bm{\Sigma}^{(j)})^{-\frac{1}{2}}(\bm{\mu}_1-\bm{\mu}_2),\\
    & c_0 = \exp{(-\frac{1}{2}\sum_{j=1}^qb_j^2)\Pi_{j=1}^q(2\lambda_j)^{-\frac{1}{2}}}\\
    & d_k = \frac{1}{2}\sum_{j=1}^{q}(1-kb_j^2)(2\lambda_j)^{-k}, k\geq 1\\
    & c_k = \frac{1}{k}\sum_{r=0}^{k-1}d_{k-r}c_r, k\geq 1
\end{align}
 
For tractability of the computation, we assume the input variables in the Gaussian distributions to be independent (i.e., $\bm{\Sigma}^{(i)}=\sigma_i^2\bm{I})$, then the distance can be simplified and approximated by a normal distribution when $q$ is large based on the central limit theorem:
\begin{align}
X_{i,j}& \sim \mathcal{N}_q\bigg(\norm{\bm{\mu}_i-\bm{\mu}_j}^2 + (\sigma_i^2+\sigma_j^2)q, 4(\sigma_i^2+\sigma_j^2)\norm{\bm{\mu}_i-\bm{\mu}_j}^2+2q(\sigma_i^2+\sigma_j^2)^2\bigg)\\
Y_{i,t}& \sim \mathcal{N}_q\bigg(\norm{\bm{\mu}_i-\bm{\mu}_k}^2 + (\sigma_i^2+\sigma_k^2)q, 4(\sigma_i^2+\sigma_k^2)\norm{\bm{\mu}_i-\bm{\mu}_k}^2+2q(\sigma_i^2+\sigma_k^2)^2\bigg).
\end{align}

This gives us the expectation of acquisition probability as:
\begin{align}\label{eq:ld_prob}
    \mathbb{E}_{\bm{x}_i, \bm{x}_j, \bm{x}_k}[p_t] & = \sum_{i=1}^L \Bigg\{\int_0^{\infty}\Bigg[\Pi_{j=1, j\neq i}^{L-1} \Phi\Bigg(\frac{y_i-\norm{\bm{\mu}_i-\bm{\mu}_j}^2-(\sigma_i^2+\sigma_j^2)q}{(\sqrt{4(\sigma_i^2+\sigma_j^2)\norm{\bm{\mu}_i-\bm{\mu}_j}_2^2+ 2q(\sigma_i^2+\sigma_j^2)^2}}\Bigg)\Bigg]\cdot\nonumber\\
    &\phi\bigg(\frac{y_i-\norm{\bm{\mu}_i-\bm{\mu}_k}^2-(\sigma_i^2+\sigma_k^2)q}{\sqrt{4(\sigma_i^2+\sigma_k^2)\norm{\bm{\mu}_i-\bm{\mu}_k}_2^2+2q(\sigma_i^2+\sigma_k^2)^2}}\bigg)dy_i\Bigg\}/(L\cdot\delta_L),
\end{align}
where $\Phi(\cdot)$ is the cumulative distribution function (CDF) for the standard normal distribution and $\phi(\cdot)$ is the probability density function (PDF) for the standard normal distribution.

\begin{theorem}
If the streaming samples follow an independent multivariate Gaussian distribution (i.e., $\bm{\Sigma^{(i)}}=\sigma_i^2\bm{I}$), then there exist $\bm{M}_1, \bm{M}_2 \in \mathbb{R}^L$ such that if $\norm{\bm{\mu}_i-\bm{\mu}_k}^2 > M_{2,i}, \forall i \in \{1,...,L\}$, then the expected acquisition probability of a LD-Agent $
\mathbb{E}_{\bm{x}_i, \bm{x}_j, \bm{x}_k}[p_t]$ will exceed $1$, where $\bm{M}_{1}, \bm{M}_{2}$ satisfies:
\begin{align}
    &M_{2,i} + \erf^{-1}{(1-2\delta_L)}\cdot \sqrt{2\cdot\big(4(\sigma_i^2+\sigma_k^2)M_{2,i}+2q(\sigma_i^2+\sigma_k^2)^2\big)} +
    (\sigma_i^2+\sigma_k^2)q - M_{1,i} = 0\nonumber\\
    & M_{1,i}>\norm{\bm{\mu}_i-\bm{\mu}_j}^2 + (\sigma_i^2+\sigma_j^2)q + \sqrt{4(\sigma_i^2+\sigma_j^2)\norm{\bm{\mu}_i-\bm{\mu}_j}^2+2q(\sigma_i^2+\sigma_j^2)^2} \nonumber\\
    &\cdot\bigg(\Phi^{-1}(1-\frac{1}{L-1})+\gamma\bigg[\Phi^{-1}(1-\frac{1}{L-1}\cdot e^{-1})-\Phi^{-1}(1-\frac{1}{L-1}) \bigg]\bigg),\forall i \in \{1,...,L\}.
\end{align}
\end{theorem}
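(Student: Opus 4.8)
The plan is to work directly from the closed form \eqref{eq:ld_prob}, which already writes the quantity in question as $\mathbb{E}_{\bm x_i,\bm x_j,\bm x_k}[p_t]=\frac{1}{L\delta_L}\sum_{i=1}^{L} T_i$, where $T_i=\int_0^{\infty}F^{(i)}(y)\,f_{Y_{i,t}}(y)\,dy$, $F^{(i)}(y)=\prod_{j\neq i}\Phi\!\bigl(\frac{y-a_{ij}}{v_{ij}}\bigr)$ is the CDF of $\max_{j\neq i}X_{i,j}$ under the normal approximation, $a_{ij}=\norm{\bm\mu_i-\bm\mu_j}^2+(\sigma_i^2+\sigma_j^2)q$ and $v_{ij}^2=4(\sigma_i^2+\sigma_j^2)\norm{\bm\mu_i-\bm\mu_j}^2+2q(\sigma_i^2+\sigma_j^2)^2$ are the mean and variance of the squared distance $X_{i,j}$, and $f_{Y_{i,t}}$ is the normal density of $Y_{i,t}=\norm{\bm x_i-\bm x_t}_2^2$ with mean $a_{ik}$ and variance $v_{ik}^2$. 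The goal is to produce thresholds $\bm M_1,\bm M_2$ satisfying the two displayed relations for which $\sum_{i=1}^{L}T_i>L\delta_L$ whenever $\norm{\bm\mu_i-\bm\mu_k}^2>M_{2,i}$ for all $i$.

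First I would bound each $T_i$ from below by truncating the integral to $[M_{1,i},\infty)$ and using that $F^{(i)}$ is nondecreasing, which gives $T_i\ge F^{(i)}(M_{1,i})\cdot P(Y_{i,t}\ge M_{1,i})$, and then control the two factors separately. For $F^{(i)}(M_{1,i})$, the second displayed condition is engineered precisely so that $\frac{M_{1,i}-a_{ij}}{v_{ij}}\ge\Phi^{-1}\!\bigl(1-\tfrac{1}{L-1}\bigr)+\gamma\bigl[\Phi^{-1}\!\bigl(1-\tfrac{e^{-1}}{L-1}\bigr)-\Phi^{-1}\!\bigl(1-\tfrac{1}{L-1}\bigr)\bigr]$ for every $j\neq i$; applying $\Phi$ makes each of the $L-1$ factors at least $1-\tfrac{c}{L-1}$ for some $c\in[e^{-1},1]$ (the parameter $\gamma$ moving $c$ toward $e^{-1}$), so $F^{(i)}(M_{1,i})\ge\bigl(1-\tfrac{c}{L-1}\bigr)^{L-1}$, which stays bounded away from $0$ by the estimate $(1-x/n)^n\to e^{-x}$ together with its finite-$n$ refinement. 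For $P(Y_{i,t}\ge M_{1,i})$ I would invoke the identity $\sqrt2\,\erf^{-1}(1-2\delta_L)=\Phi^{-1}(1-\delta_L)$, under which the first displayed \emph{equality} says exactly that at the boundary value $\norm{\bm\mu_i-\bm\mu_k}^2=M_{2,i}$ the point $M_{1,i}$ equals the $(1-\delta_L)$-quantile of $Y_{i,t}$, i.e.\ $P(Y_{i,t}\ge M_{1,i})=\delta_L$ there.

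I would then show that the standardized threshold $g(s):=\frac{M_{1,i}-s-(\sigma_i^2+\sigma_k^2)q}{\sqrt{4(\sigma_i^2+\sigma_k^2)s+2q(\sigma_i^2+\sigma_k^2)^2}}$, as a function of $s=\norm{\bm\mu_i-\bm\mu_k}^2$, is strictly decreasing: the numerator is affine-decreasing, the denominator increasing and positive, and a one-line quotient-rule computation yields $g'(s)<0$ provided $M_{1,i}\ge(\sigma_i^2+\sigma_k^2)q$, which holds because $M_{1,i}$ is large. Hence for $\norm{\bm\mu_i-\bm\mu_k}^2>M_{2,i}$ one has $P(Y_{i,t}\ge M_{1,i})=1-\Phi(g(s))>\delta_L$, and $g(s)\to-\infty$ so this probability $\to1$ as the inter-cluster distance grows. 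Combining, each $T_i$ is at least a fixed positive multiple of $\delta_L$ under the hypothesis, and both of its factors tend to $1$ as the distances increase, so $T_i\to1$ and $\mathbb{E}[p_t]=\frac{1}{L\delta_L}\sum_iT_i\to\frac{1}{\delta_L}>1$; choosing $M_{1,i}$ sufficiently large within its admissible range (which also enlarges $M_{2,i}$ via the first relation) pushes every $T_i$ past $\delta_L$, i.e.\ $\sum_iT_i>L\delta_L$, which is the asserted inequality. This also gives the intended interpretation: once a remote cluster appears, the LD-Agent acquires its samples with expected probability saturating at the cap, so $\mathcal D_t$ picks up far-cluster points and its empirical variance rises, realizing the exploration property.

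The step I expect to be the main obstacle is this last quantitative one — upgrading the qualitative ``$T_i\to1$ as the distance grows'' to the sharp claim that the \emph{specific} $M_{2,i}$ coming out of the two displayed relations (rather than some larger threshold) already forces $\mathbb{E}[p_t]>1$ and not merely $\ge1-o(1)$. The delicacy lies in two pieces of bookkeeping: the finite-$L$ lower bound on $\bigl(1-\tfrac{c}{L-1}\bigr)^{L-1}$, where the precise role of $\gamma$ must be pinned down, and the need to show that the product of the ``almost-$1$'' factor $F^{(i)}(M_{1,i})$ and the ``just-above-$\delta_L$'' factor $P(Y_{i,t}\ge M_{1,i})$ still exceeds $\delta_L$. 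The cleanest remedies are either to keep (rather than discard) the mass of the integral on $[0,M_{1,i})$, or to use the monotonicity of $g$ established above to trade $M_{2,i}$ for a slightly larger but still explicit threshold.
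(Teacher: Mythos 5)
Your proposal follows essentially the same skeleton as the paper's proof: the same integral representation $\mathbb{E}[p_t]=\frac{1}{L\delta_L}\sum_i T_i$, the same reading of the first displayed relation (via $\sqrt{2}\,\erf^{-1}(1-2\delta_L)=\Phi^{-1}(1-\delta_L)$, so that $M_{1,i}$ is the $(1-\delta_L)$-quantile of $Y_{i,t}$ at the boundary distance $M_{2,i}$), and the same monotonicity argument in $s=\norm{\bm{\mu}_i-\bm{\mu}_k}^2$. Where you diverge is in handling the maximum pairwise distance $Z_i=\max_{j\neq i}X_{i,j}$. The paper recognizes the bracketed expression $\Phi^{-1}(1-\frac{1}{L-1})+\gamma[\cdots]$ as the Gumbel/GEV approximation of $\mathbb{E}[Z_i]$ with $\gamma$ the \emph{Euler--Mascheroni constant} (not a tunable parameter, as you treat it), and then uses a Chebyshev concentration step to treat $Z_i<M_{1,i}$ as holding for every realization; conditioning on that event, $1-F_{Y_{i,t}}(Z_i)\ge 1-F_{Y_{i,t}}(M_{1,i})\ge\delta_L$ gives $T_i\ge\delta_L$ directly, with no multiplicative loss. (The paper's Chebyshev step is itself loose --- it yields a probability bound, not an almost-sure one --- but that is the route it takes.)

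Your replacement of that step by the honest product bound $T_i\ge F^{(i)}(M_{1,i})\cdot P(Y_{i,t}\ge M_{1,i})$ is exactly where the argument fails to close, and you correctly flag it: since the second displayed condition only guarantees each of the $L-1$ Gaussian CDF factors is about $1-\frac{c}{L-1}$, the first factor is bounded below only by roughly $e^{-c}<1$, and it does \emph{not} tend to $1$ as $\norm{\bm{\mu}_i-\bm{\mu}_k}^2$ grows (it depends only on the within-window distances and the fixed $M_{1,i}$). Hence for $s$ just above $M_{2,i}$ your bound gives $T_i\gtrsim e^{-c}\delta_L<\delta_L$, which does not establish $\mathbb{E}[p_t]>1$ at the specific threshold in the theorem statement. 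To recover the paper's conclusion along your route you would need either to retain the integral's mass on $[0,M_{1,i})$ with a nontrivial lower bound on $F^{(i)}$ there, or to strengthen the second displayed condition so that $F^{(i)}(M_{1,i})$ is within a factor $(1+o(1))$ of $1$ rather than of $e^{-c}$ --- i.e., to do what the paper does implicitly by pushing $M_{1,i}$ past a high quantile of $Z_i$ rather than past its mean. As written, the gap you identify is genuine and is precisely the point where the two proofs part ways.
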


\begin{proof}
Based on ~\eqref{eq:7.3}, if 
$$[1-F_{Y_{i,t}}(\max_{j\neq i, j\in\{1,...,L\}}{\norm{\bm{x}_i-\bm{x}_j}_2^2)}]/\delta_L\geq 1 \ \forall i \in\{1,...,L\},$$ then the expectation of the acquisition probability
$$\mathbb{E}_{\bm{x}_i, \bm{x}_j, \bm{x}_k}[p_t]  = \sum_{i=1}^{L}[1-F_{Y_{i,t}}(\max_{j\neq i, j\in\{1,...,L\}}{\norm{\bm{x}_i-\bm{x}_j}_2^2)}]/(\delta_L\cdot L) \geq 1.
$$
Thus, to ensure $\mathbb{E}_{\bm{x}_i, \bm{x}_j, \bm{x}_k}[p_t]\geq 1$, we have:
\begin{align}
    & F^{-1}_{Y_{i,t}}(1-\delta_L) \geq \max_{j\neq i j\in\{1,...,L\}}{\norm{\bm{x}_i-\bm{x}_j}_2^2}=Z_i, \forall i \in \{1,...,L\}\\
    & \frac{Z_i - \big(\norm{\bm{\mu}_i-\bm{\mu}_k}^2 + (\sigma_i^2+\sigma_k^2)q \big)}{\sqrt{2\cdot(4(\sigma_i^2+\sigma_k^2)\norm{\bm{\mu}_i-\bm{\mu}_k}^2+2q(\sigma_i^2+\sigma_k^2)^2)}}\leq \erf^{-1}{(1-2\delta_L)}\nonumber\\
    \to & g(\norm{\bm{\mu}_i-\bm{\mu}_k}^2) = \norm{\bm{\mu}_i-\bm{\mu}_k}^2 + \erf^{-1}{(1-2\delta_L)}\cdot \sqrt{2}\cdot\nonumber\\
    & \sqrt{4(\sigma_i^2+\sigma_k^2)\norm{\bm{\mu}_i-\bm{\mu}_k}^2+2q(\sigma_i^2+\sigma_k^2)^2} +
    (\sigma_i^2+\sigma_k^2)q - Z_i \geq 0.
\end{align}

By Fisher–Tippett–Gnedenko theorem \citep{frechet1927loi} and the independent assumption on the pairwise distances, $Z_i$ can be approximated by generalized extreme value (GEV) distribution where the expectation of $Z_i$ can be estimated with:
\begin{align}
     \mathbb{E}[Z_i] & \approx \norm{\bm{\mu}_i-\bm{\mu}_j}^2 + (\sigma_i^2+\sigma_j^2)q + \sqrt{4(\sigma_i^2+\sigma_j^2))\norm{\bm{\mu}_i-\bm{\mu}_j}^2+2q(\sigma_i^2+\sigma_j^2)^2} \nonumber\\
     &\cdot\bigg(\Phi^{-1}(1-\frac{1}{L-1})+\gamma\bigg[\Phi^{-1}(1-\frac{1}{L-1}\cdot e^{-1})-\Phi^{-1}(1-\frac{1}{L-1}) \bigg]\bigg),
     \forall i, j \in \{1,...,L\}, i \neq j,
\end{align}
where $\gamma$ is the Euler–Mascheroni constant.
Accordingly, the variance is:
\begin{align}
    \mathbb{V}[Z_i] & \approx \sqrt{4(\sigma_i^2+\sigma_j^2)\norm{\bm{\mu}_i-\bm{\mu}_j}^2+2q(\sigma_i^2+\sigma_j^2)^2}\cdot \nonumber\\
    &\bigg[\Phi^{-1}(1-\frac{1}{L-1}\cdot e^{-1})-\Phi^{-1}(1-\frac{1}{L-1}) \bigg], \forall i, j \in \{1,...,L\}, i \neq j.
\end{align}

By Chebyshev's inequality, we have:
\begin{align}
    P(|Z_{i, n}-\mathbb{E}[Z_i]|\geq\epsilon)\leq \frac{\mathbb{V}[Z_i]}{n\epsilon^2}.
\end{align}
Therefore, there exist $M_{1,i}>\mathbb{E}[Z_i]$ such that $Z_{i,n} < M_{1,i}, \forall n\in \mathbb{Z}$.
Since $0<\delta_L<1$, $\erf^{-1}{(1-2\delta_L)}>0$, $g(\norm{\bm{\mu}_i-\bm{\mu}_k}^2)$ is monotonic with $\norm{\bm{\mu}_i-\bm{\mu}_k}^2$.
Hence, there exist $\bm{M}_{1}, \bm{M}_{2} \in \mathbb{R}^L$ such that if $\norm{\bm{\mu}_i-\bm{\mu}_k}^2 > M_{2,i} \ \forall i \in \{1,...,L\}$, $
\mathbb{E}_{\bm{x}_i, \bm{x}_j, \bm{x}_k}[p_t] \geq 1$, where $\bm{M}_{1}, \bm{M}_{2}$ satisfies:
\begin{align}
    &M_{2,i} + \erf^{-1}{(1-2\delta_L)}\cdot \sqrt{2\cdot \big(4(\sigma_i^2+\sigma_k^2)M_{2,i}+2q(\sigma_i^2+\sigma_k^2)^2\big)} +
    (\sigma_i^2+\sigma_k^2)q - M_{1,i} = 0\nonumber\\
    & M_{1,i}>\mathbb{E}[Z_i],\forall i, j \in \{1,...,L\}, i \neq j.
\end{align}
\end{proof}

This result illustrates that with the increasing of the distance between the center of the distribution of the observed samples and that of the incoming sample, the expected acquisition probability approaches and exceeds 1.

Numerical analysis is further investigated for the LD-Agent.
To be more intuitive, we set the samples in $\mathcal{W}$ all belonging to the same Gaussian distribution while $\bm{x}_t$ belongs to another (i.e., $\bm{\mu}_i=\bm{\mu}_j=\bm{\mu}_1$, $\sigma_i=\sigma_j=\sigma_1$, $\sigma_k = \sigma_2$).
Set $q=15$ and $\delta_L=0.05$, the expectation of the acquisition probability calculated by \eqref{eq:ld_prob} is shown as follows:
\begin{figure}[H]
    \centering
    \captionsetup{justification=centering}
    \subfloat[Set $\sigma_1=\sigma_2 = 1$, change number of points in $\mathcal{W}$ and $\norm{\bm{\mu}_1-\bm{\mu}_2}^2$. ]{\includegraphics[width=0.32\textwidth]{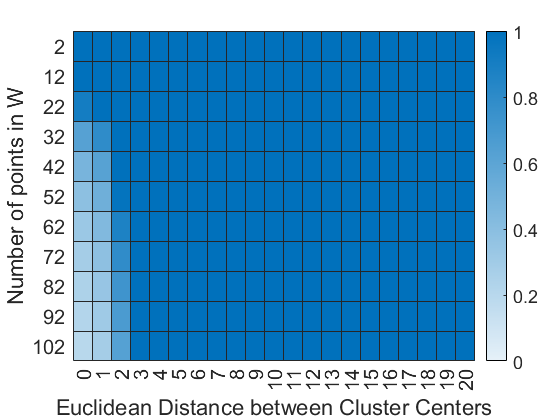}}
    \subfloat[Set $\sigma_1 = 1, \norm{\bm{\mu_i}-\bm{\mu_j}}_2^2=4$, change number of points in $\mathcal{W}$ and $\sigma_2/\sigma_1 $ ratio.]{\includegraphics[width=0.32\textwidth]{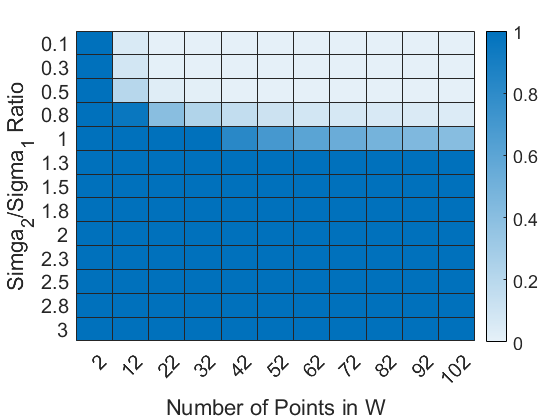}}
    \subfloat[Set $\sigma_1=1$, change number of points in $\mathcal{W}$ and $\norm{\bm{\mu}_1-\bm{\mu}_2}^2. $]{\includegraphics[width=0.32\textwidth]{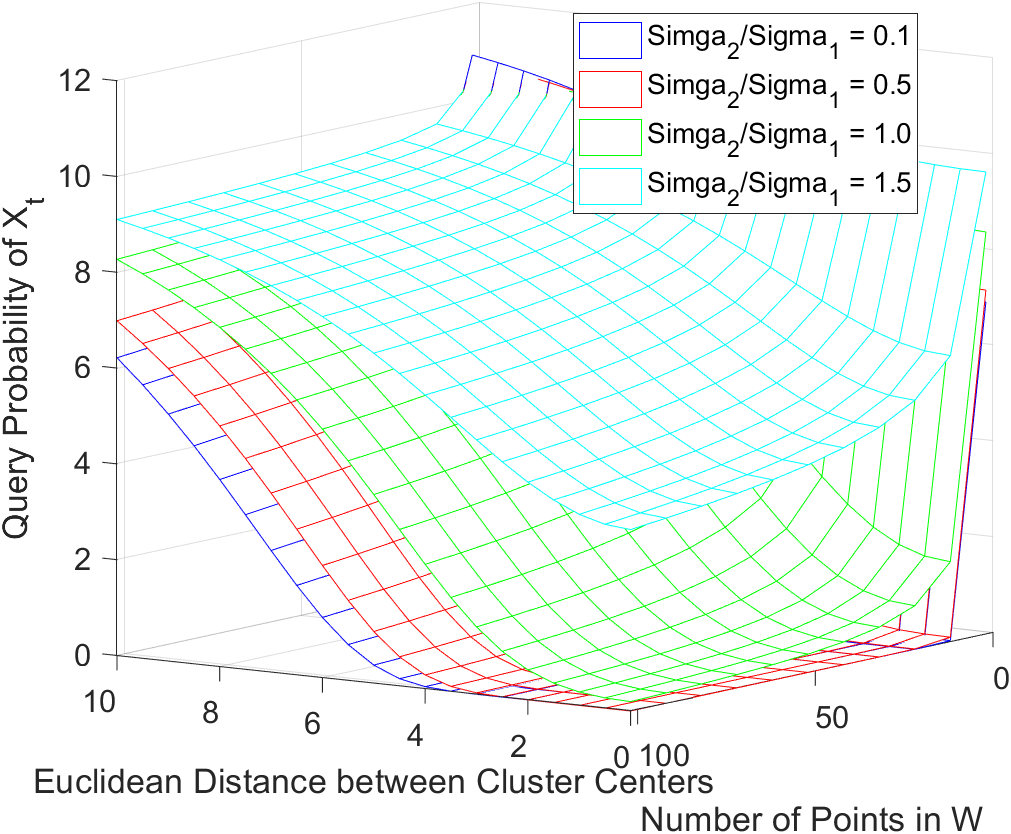}}
    \caption{Expectation of the probability of acquiring $\bm{x}_t$ by LD-Agent with $\sigma_1=1, q=15$ with different $\norm{\bm{\mu_i}-\bm{\mu_j}}_2^2$ and  $\sigma_2/\sigma_1 $ ratio }
    \label{fig:LD_char}
\end{figure}
As shown in Fig.\ref{fig:LD_char}, with a larger distance between two centers of the clusters, with less samples in the sliding window $\mathcal{W}$ and with higher $\frac{\sigma_2}{\sigma_1}$ ratio, the expectation of the acquisition probability will increase and approach to 1.
This ensures the acquisition decision of samples from a remote cluster, resulting in an increased variance and thus, exploration of the input space.

\subsection{Numerical Comparison}
For further comparison between exploration-oriented agents and exploitation oriented agents, we assume $\mathcal{W}=\mathcal{D}_t$ at time $t$, and the samples in $\mathcal{W}$ and $\mathcal{D}_t$ all  belong to the same Gaussian distribution (i.e., $\bm{x_1}\sim\mathcal{N}_q(\bm{\mu_1},\sigma_{1}^2\bm{I})$). 
Set $\sigma_1=\sigma_2=1, q=15, \delta_L=0.5$, we have the following result:

\begin{figure}[H]
    \centering
    \captionsetup{justification=centering}
    \subfloat[Set $\sigma_1=\sigma_2 = 1$, change number of points in $\mathcal{W}$ and $\norm{\bm{\mu}_1-\bm{\mu}_2}^2$. ]{\includegraphics[width=0.32\textwidth]{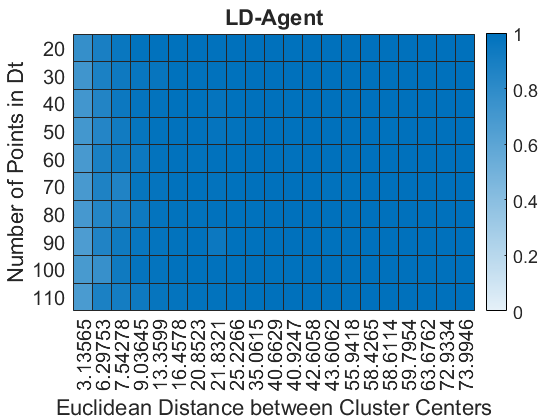}}
    \subfloat[Set $\sigma_1 = 1, \norm{\bm{\mu_i}-\bm{\mu_j}}_2^2=4$, change number of points in $\mathcal{W}$ and $\sigma_2/\sigma_1 $ ratio.]{\includegraphics[width=0.32\textwidth]{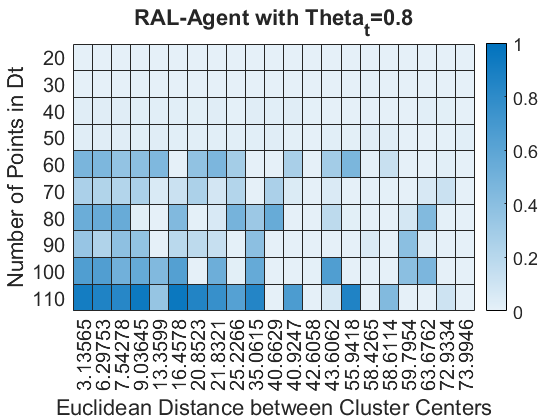}}
    \subfloat[Set $\sigma_1=1$, change number of points in $\mathcal{W}$ and $\norm{\bm{\mu}_1-\bm{\mu}_2}^2. $]{\includegraphics[width=0.32\textwidth]{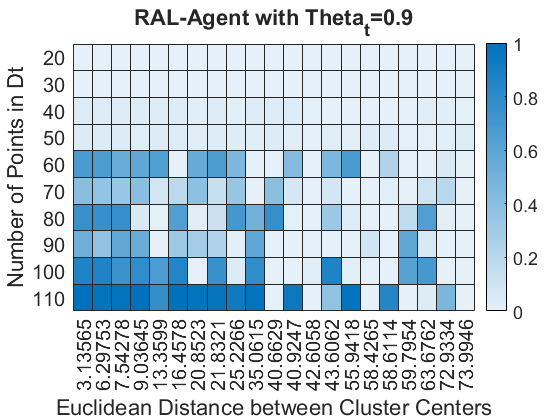}}
    \caption{Expectation of the probability of acquiring $\bm{x_t}$ of the exploration-oriented agent and exploitation-oriented agent with $\sigma_1=\sigma_2=1, q=15$, vertical axis as $||\bm{\mu_1}-\bm{\mu_2}||^2$, and horizontal axis as the window size $L$}
    \label{fig:LD_RAL_comp}
\end{figure}

In summary, the theoretical analysis and numerical results illustrate that the proposed exploration-oriented agent is more likely to acquire a sample from a distribution distinct from the observed one, thus increasing the variance of the labelled samples. 
On the other hand, the acquisition decision of exploitation-oriented agent is determined by the base learner's uncertainty instead of the distance between the observed and incoming distribution, resulting in a relative smaller variance. 
This  justifies the exploration and exploitation capability of the proposed agents.
Therefore, with the ensemble of two types of agents, the exploration and exploitation can be dynamically adjusted to balance the trade-off during the human annotation process.
To this end, in the proposed \cbeal, one RAL-agent will be paired with one SPF-Agent or LD-Agent to balance the effort spent on exploration and exploitation.

\section{Theoretical Justification on \textsc{Exp4.p-EWMA}}
To justify applying the Exponentially Weighted Moving Average (EWMA) chart to monitor the weight of each agent in \cbeal, we follow the derivation in \textsc{Exp4.P} paper \citep{beygelzimer2011contextual} to investigate its impact on the regret bound of the adversarial bandits solver.

As EWMA tracks the moving average of all previous sample means—specifically, the means of the weight $\alpha_{i, t}$ for each active learning agent ($i=1, \dots, N, t=1,2, \dots$)—it creates fluctuating control limits. Consequently, deriving a specific bound for the monitored weight $\alpha_{i, t}$ is not straightforward. 
To resolve this, we equate the effect of applying EWMA to establishing a bound for each weight, i.e., 
\begin{align}
    e < \alpha_{i, t} < f, \ 0<e<f.
\end{align}
Note that the elements in decision vector are bounded by $(0, 1)$, i.e., $0<\xi^{i}_{j}<1, \forall i=1,\dots, N, \forall a = 1, \dots, K$.
Based on Algorithm 2 (i.e., \textsc{Exp4.P-EWMA solver}), the bounds for $\hat{v}_{i,t}$ can be derived as:
\begin{align}
    0 < \hat{v}_{i,t} < \frac{K}{(1-Kp_{min})\cdot \frac{f}{e} + p_{min}},
\end{align}
where $i = 1,\dots,N, t=1, 2,\dots$. 
We found that this will only affect the bound of $\hat{\sigma}_i$ in Lemma 3 in \textsc{Exp4.P} \citep{beygelzimer2011contextual}, where $\hat{\sigma}_i \doteq \sqrt{K T}+\frac{1}{\sqrt{K T}} \sum_{t=1}^T \hat{v}_{i,t}$.
However, this will not affect the results of Lemma 3 or other theorems in the paper \citep{beygelzimer2011contextual}.
Thus, the proposed \textsc{Exp4.P-EWMA} will enjoy the same regret bound as \textsc{Exp4.P}, theoretically.

Considering the integrity, we summarize Theorem 2, Lemma 3, and Lemma 4 in \citep{beygelzimer2011contextual} in this section. 
The main result Theorem 2 is proved by Lemma 3, and Lemma 4:\\

\textbf{Theorem 2.} 
\textit{
Assume that $\ln (N / \delta) \leq K T$, and that the set of experts includes one which, on each round, selects an action uniformly at random. Then, with probability at least $1-\delta$,
$$
G_{\text {Exp4.P }} \geq G_{\max }-6 \sqrt{K T \ln (N / \delta)},
$$
where $G$ is the cumulative reward of the solver.
}
\textbf{Lemma 3.} 
\textit{
Under the conditions of Theorem 2,
$$
\operatorname{Pr}\left[\exists i: G_i \geq \hat{G}_i+\sqrt{\ln (N / \delta)} \hat{\sigma}_i\right] \leq \delta .
$$
}
\textbf{Lemma 4.} 
\textit{Under the conditions of Theorem 2,
$$
\begin{aligned}
G_{\text {Exp4.P }} \geq & \left(1-2 \sqrt{\frac{K \ln N}{T}}\right) \hat{U}-2 \sqrt{K T \ln (N / \delta)}S \\
& =\sqrt{K T \ln N}=\ln (N / \delta),
\end{aligned}
$$
where $
\hat{U}=\max _i\left(\hat{G}_i+\hat{\sigma}_i \cdot \sqrt{\ln (N / \delta)}\right)
$.
}

\section{Empirical Justification on \textsc{Exp4.p-EWMA}}
To provide empirical justification for adding the EWMA-based flipping mechanism, we visualize the standardized weight of each agent in \cbeal without the flipping mechanism (i.e., solved by \textsc{Exp4.P}) under the learning scenario $n=1000, ds = 0\%, sp = 30\%, PC = 10\%$ in Fig.\ref{fig:exp4_weight}.

\begin{figure}[H]
    \centering
    \captionsetup{justification=centering}
    \includegraphics[width=0.6\textwidth]{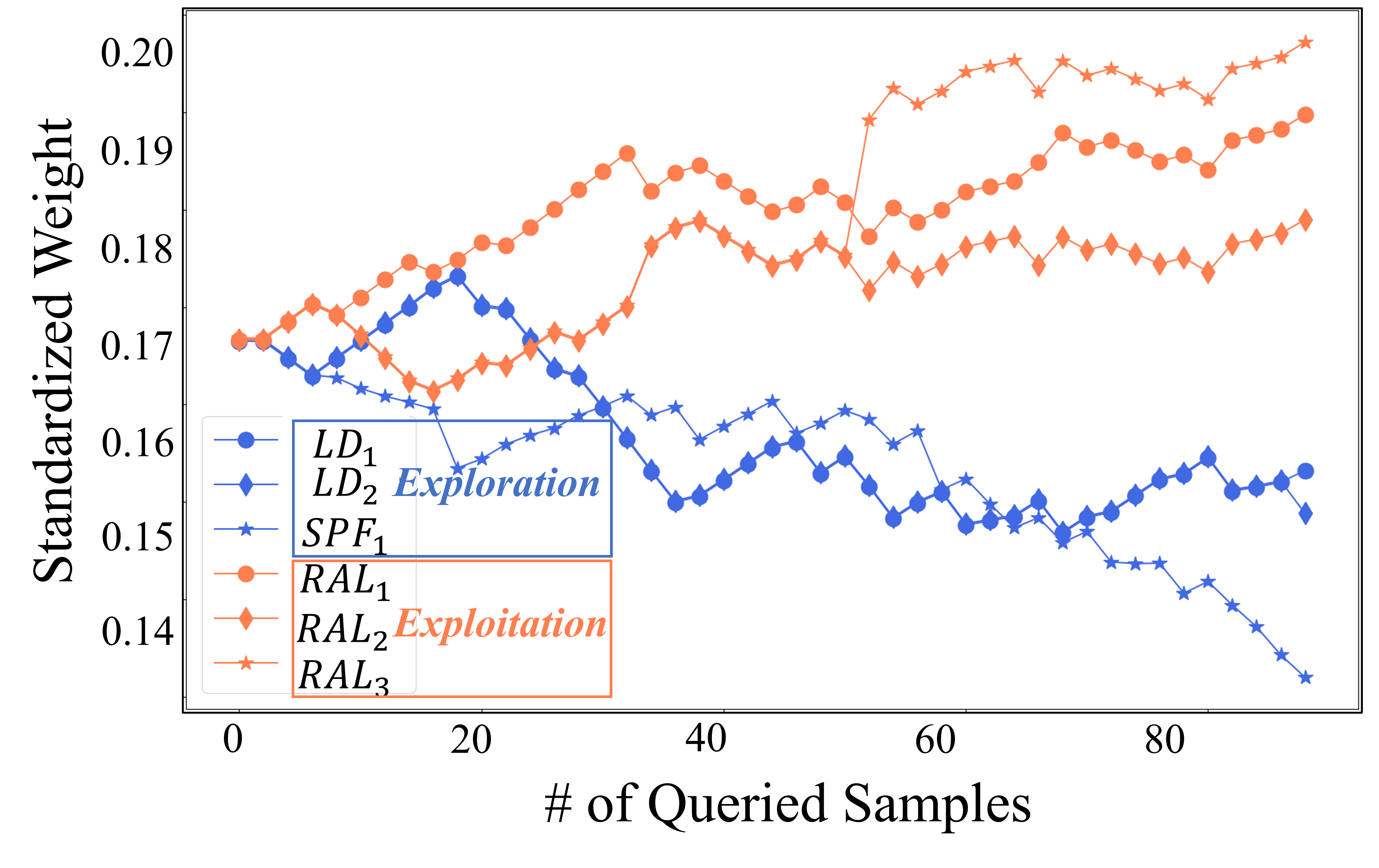}
    \caption{Standardized weight of each agents in \textsc{CbeAL-6} under the learning scenario $n = 1000, ds = 0\%, sp = 30\%, pc = 10\%$}
    \label{fig:exp4_weight}
\end{figure}

By comparing Fig.4(d) (i.e., \textsc{Exp4.p-EWMA}) and Fig.\ref{fig:exp4_weight} (i.e., \textsc{Exp4.p}), it shows that without the flipping mechanism, the decision power of most exploration-oriented agents keeps decreasing since the early stage.
Conversely, exploitation-oriented agents experience a continual increase in decision-making power. This pattern is attributed to the less effective acquisitions made by exploration-oriented agents at the start of the learning process. However, this scenario eventually results in exploitation-oriented agents dominating the active learning process, leading to insufficient exploration in later stages, particularly in the presence of shifting distributions. 
Consequently, the final testing accuracy under this scenario is considerably reduced to 0.64, as opposed to the 0.704 achieved with \textsc{Exp4.p-EWMA}. 
These results validate the effectiveness of the EWMA monitoring and flipping mechanism in the proposed \textsc{Exp4.p-EWMA}. It prevents early convergence of decision power and better balances the exploration-exploitation trade-off throughout the process.

\section{Additional Experimental Results}
\subsection{Benchmark Results in Toy Example}
To provide more concrete insights, we applied Uncertainty Sampling (US), Random Sampling (RS), DBALStream, and QBC-PYP to the two-dimensional toy example and visualized the acquired decisions, as well as the evolution of the base learners’ decision boundary in Fig.\ref{fig:simu_toy_app}.

\begin{figure}[!htb]
    \centering
    \captionsetup{justification=centering}
    \subfloat[DBALStream with final testing accuracy = 0.85, 36 acquired samples]{\includegraphics[width=0.48\textwidth]{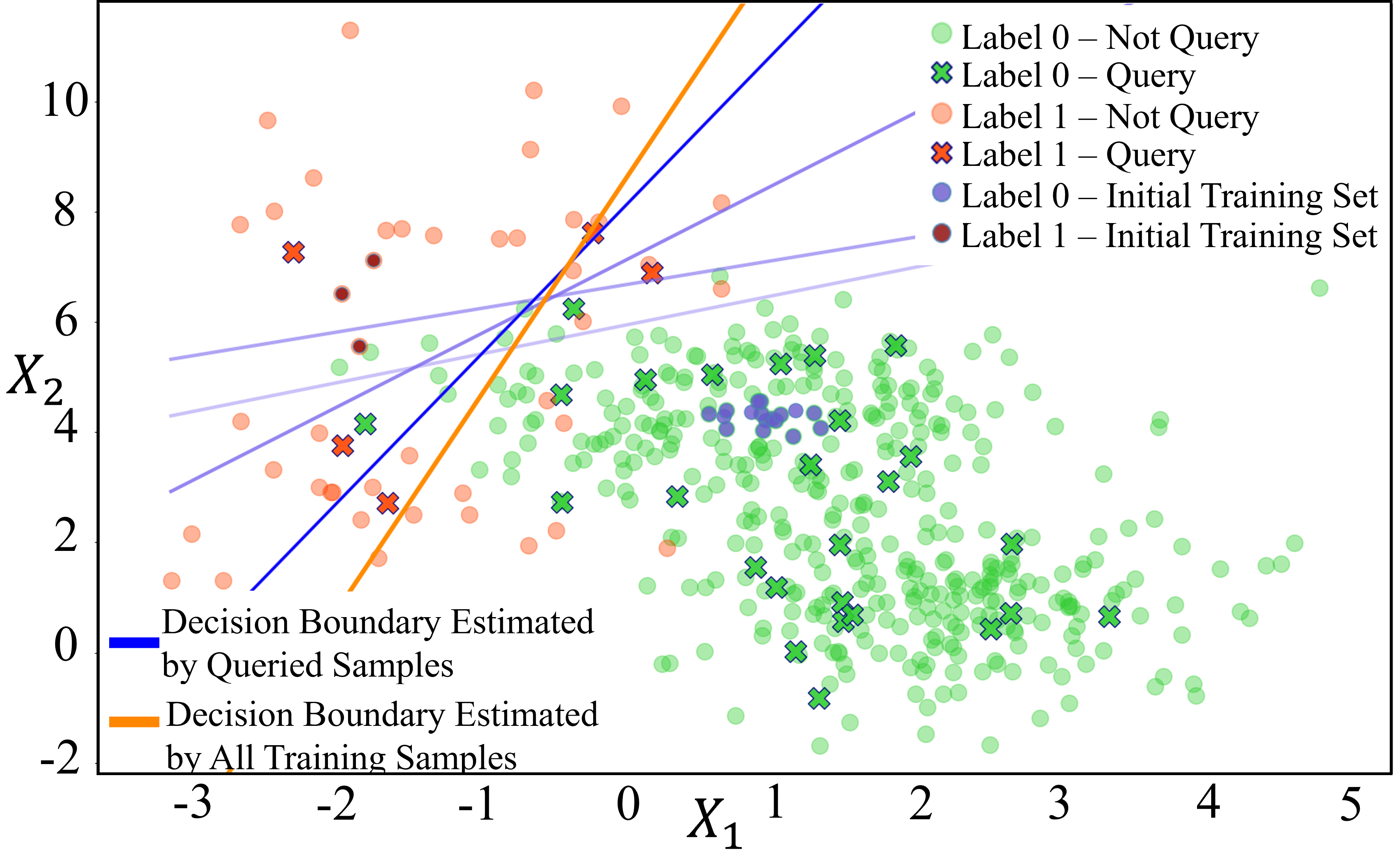}} 
    \subfloat[QBC-PYP with final testing accuracy = 0.85, 34 acquired samples ]{\includegraphics[width=0.48\textwidth]{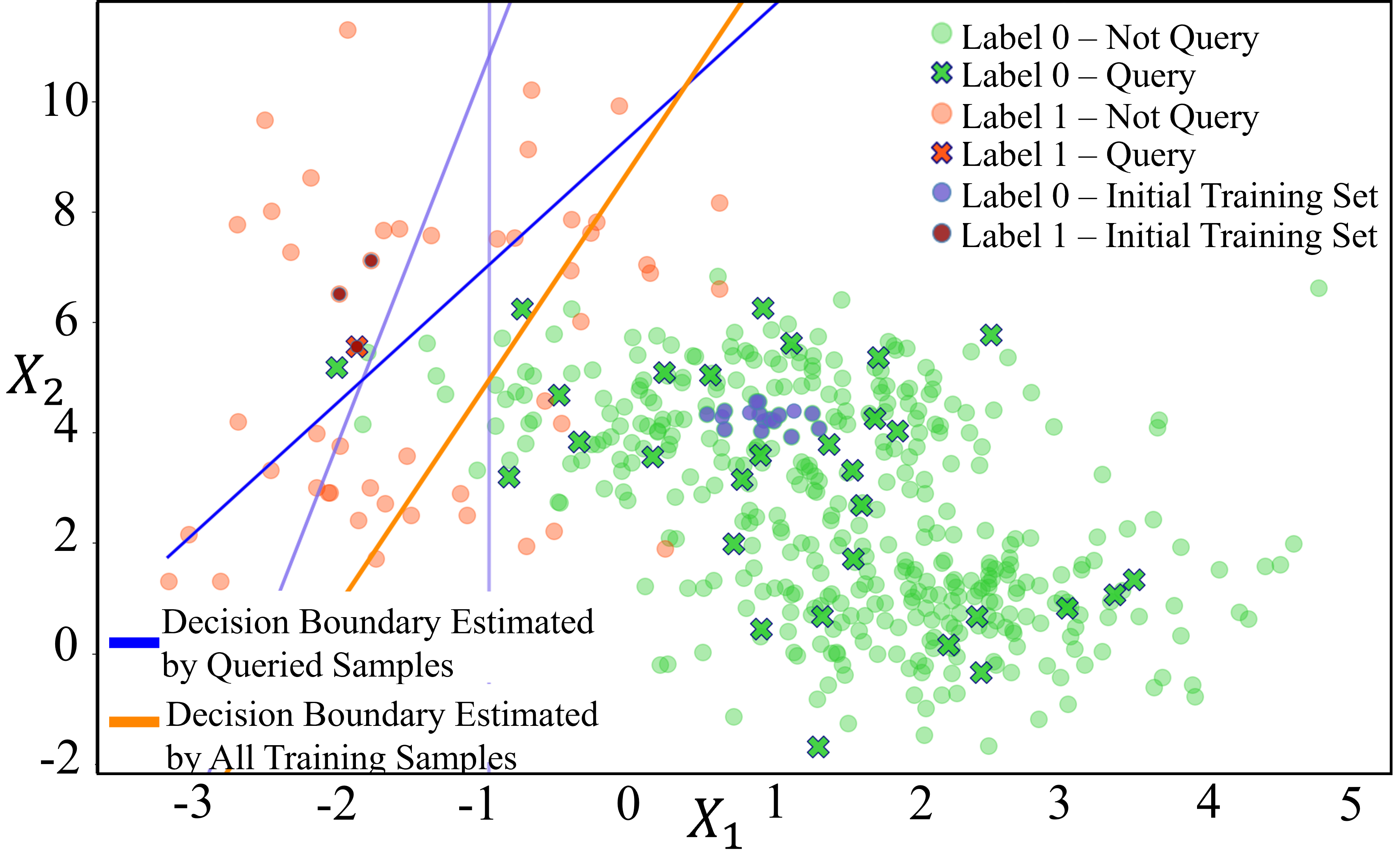}} \\
    \subfloat[US with final testing accuracy = 0.808, 18 acquired samples]{\includegraphics[width=0.48\textwidth]{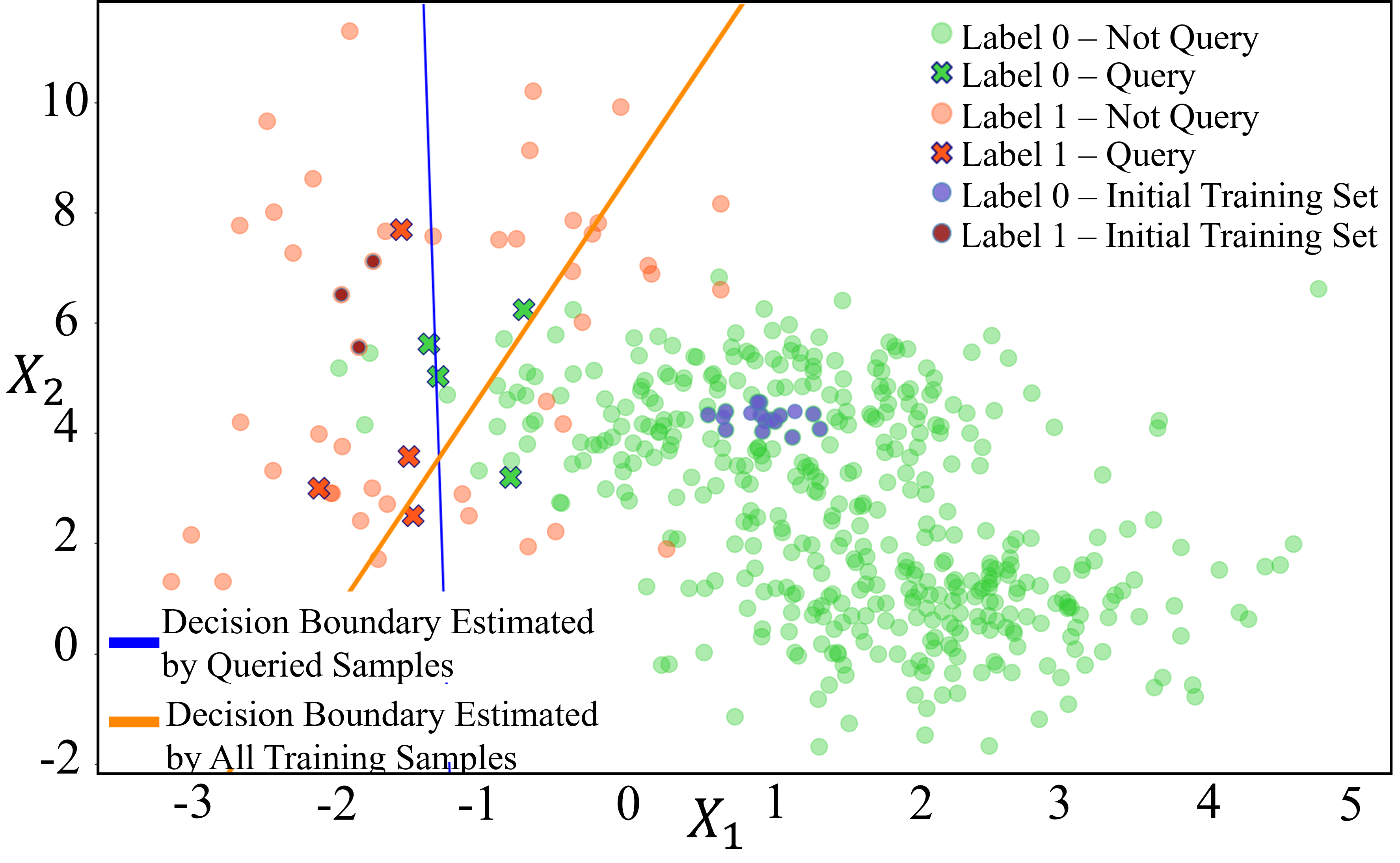}} 
    \subfloat[RS with final testing accuracy = 0.796, 48 acquired samples]{\includegraphics[width=0.48\textwidth]{Image/RAL.png}}
    \caption{Evolution of the base learner's decision boundary in the toy example: The set of blue lines represents the decision boundaries learned by the base learner every 100 time points where the color depth of the line is proportional to time $t$; The oragne line is the ground-truth decision boundary}
    \label{fig:simu_toy_app}
\end{figure}
Firstly, the results demonstrate that both DBALStream and QBC-PYP outperform US and RS in terms of learning performance. The samples procured by DBALStream are located near the classification boundary and exhibit high local density. 
However, comparing Fig.\ref{fig:simu_toy_app}(a) and Fig.3(a), it's evident that LD-agent is more effective in acquiring samples located near the boundary of the input variable space compared to DBALStream, thus facilitating more effective exploration.

Secondly, by checking \ref{fig:simu_toy_app}(b), QBC-PYP does not perform good exploitation since it acquires considerably more normal samples than abnormal samples. Upon investigating the annotation decision process, we observed that QBC-PYP often acquires sample annotations during the initial stage but seldom requests labels post the early stages, a trend that persists across both toy examples and comprehensive simulation studies. This behavior can be attributed to the disconnection between the acquisition decision and the base learner's performance, which inhibits effective learning.

Finally, US achieve inferior performance compared to DBALStream, QBC-PYP in this top example. By investigating the prediction uncertainty, it is found that the small number of acquired samples is caused by the low uncertainty of the base learner. Therefore, if the base learner is confident about its classification result at the beginning with the initial training data, then the US will achieve poor performance. As per the case study depicted in Fig.5, due to the high initial training accuracy, US exhibits poorer performance relative to other benchmarks. However, in the comprehensive simulation study shown in Table 6, the low initial training accuracy prompts US to acquire more samples with high prediction uncertainty, thereby enhancing performance.

\subsection{Scalability Study}
The classification accuracy of the base learners of \cbeal-2, \cbeal-4, \cbeal-4*, \cbeal-6, \cbeal-8 are compared to study the scalability of \cbeal.
Here, \cbeal-4* incorporate pair 1 and pair 3 in Table 1. 
We further added a new pair with the setting in Table~\ref{tab:agents4} to be incorporated into \cbeal-8.
\begin{table}[!htb]
\centering
 \caption{Agent Set Adopted in \cbeal}\label{tab:agents4}
\begin{tabular}{cccc}
\toprule
Pair Index & Agent Index & Agent & Hyperparameters \\\hline
\multirow{2}{*}{4} & $AG_7$ & $LD_3$ & $L=100, \delta_L = 0.005$\\
& $AG_8$ & $RAL_4$ & $\theta_0=0.90, \eta=0.005$\\\bottomrule
\end{tabular}
\end{table}

The results of all the variants of \cbeal in the comprehensive simulation study are individually presented in Table~\ref{tab:simu_pair} for better readability.

\begin{table}[!ht]
\caption{The average values and standard errors (in parenthesis) of the classification accuracy in the simulation study over 10 replications. Significant best results are highlighted in \textbf{bold}.}\label{tab:simu_pair}
\resizebox{\textwidth}{!}{
\begin{tabular}{ccccccccc}
\toprule
\multicolumn{2}{c}{Level} & \multirow{3}{*}{Method} & \multicolumn{3}{c}{Sparsity   = 30\%} & \multicolumn{3}{c}{Sparsity   = 70\%} \\\cline{1-2} \cline{4-9} 
\multirow{3}{*}{Disturbance} & \multirow{3}{*}{\shortstack{Percentage \\ of Positive \\ Samples}} &  & \multicolumn{3}{c}{Training Sample Size} & \multicolumn{3}{c}{Training Sample Size} \\ \cline{4-9} 
 &  &  & \multirow{2}{*}{500} & \multirow{2}{*}{1000} & \multirow{2}{*}{1500} & \multirow{2}{*}{500} & \multirow{2}{*}{1000} & \multirow{2}{*}{1500}\\
&  & \\\hline
\multirow{10}{*}{0\%}        & \multirow{5}{*}{5\%}                                                                                          & CBEAL-2                      & 58.1\% (0.02)                           & 70.3\% (0.03)                           & 71.8\% (0.03)                           & 67.5\% (0.03)                           & 75.9\% (0.02)                           & 74.3\% (0.03)                           \\
                             &                                                                                                               & CBEAL-4                      & \textbf{61.5\% (0.03)} & 67.3\% (0.03)                           & \textbf{74.3\% (0.02)} & 69.6\% (0.03)                           & \textbf{77.4\% (0.02)} & 73.1\% (0.03)                           \\
                             &                                                                                                               & CBEAL-6                      & 61.2\% (0.02)                           & \textbf{73.9\% (0.03)} & 72.5\% (0.02)                           & \textbf{69.9\% (0.03)} & 73.9\% (0.03)                           & \textbf{76.7\% (0.02)} \\
                             &                                                                                                               & \multicolumn{1}{l}{CBEAL-4*} & 59.6\% (0.03)                           & 71.2\% (0.03)                           & 71.9\% (0.02)                           & 68.4\% (0.03)                           & 75.3\% (0.02)                           & 76.5\% (0.03)                           \\
                             &                                                                                                               & \multicolumn{1}{l}{CBEAL-8}  & 58.6\% (0.02)                           & 65.2\% (0.03)                           & 70.2\% (0.02)                           & 69.2\% (0.03)                           & 76.2\% (0.03)                           & \textbf{76.7\% (0.03)} \\ \cline{2-9} 
                             & \multirow{5}{*}{10\%}                                                                                         & CBEAL-2                      & 60.6\% (0.02)                           & 70.4\% (0.03)                           & 65.2\% (0.03)                           & 63.9\% (0.03)                           & 68.7\% (0.04)                           & 67.4\% (0.03)                           \\
                             &                                                                                                               & CBEAL-4                      & 60.7\% (0.03)                           & 67.9\% (0.03)                           & 64.8\% (0.02)                           & 63.5\% (0.03)                           & 67.6\% (0.03)                           & 66.7\% (0.03)                           \\
                             &                                                                                                               & CBEAL-6                      & \textbf{65.3\% (0.02)} & \textbf{72.0\% (0.02)} & \textbf{66.7\% (0.02)} & 68.9\% (0.03)                           & \textbf{69.4\% (0.03)} & 69.0\% (0.03)                           \\
                             &                                                                                                               & \multicolumn{1}{l}{CBEAL-4*} & 60.9\% (0.03)                           & 70.1\% (0.03)                           & 66.4\% (0.03)                           & 60.1\% (0.03)                           & 69.5\% (0.03)                           & 65.7\% (0.03)                           \\
                             &                                                                                                               & \multicolumn{1}{l}{CBEAL-8}  & 63.4\% (0.03)                           & 70.2\% (0.03)                           & 64.9\% (0.02)                           & \textbf{69.2\% (0.03)} & 68.6\% (0.03)                           & \textbf{75.0\% (0.03)} \\ \hline
\multirow{10}{*}{3\%}        & \multirow{5}{*}{5\%}                                                                                          & CBEAL-2                      & 67.7\% (0.03)                           & \textbf{72.2\% (0.03)} & 73.1\% (0.03)                           & 65.1\% (0.04)                           & 69.3\% (0.02)                           & 77.5\% (0.01)                           \\
                             &                                                                                                               & CBEAL-4                      & 68.2\% (0.03)                           & 67.0\% (0.03)                           & 73.2\% (0.03)                           & 66.7\% (0.03)                           & 66.4\% (0.02)                           & 74.5\% (0.03)                           \\
                             &                                                                                                               & CBEAL-6                      & \textbf{69.0\% (0.04)} & 71.6\% (0.03)                           & \textbf{73.8\% (0.02)} & \textbf{68.0\% (0.03)} & \textbf{70.8\% (0.02)} & \textbf{79.4\% (0.02)} \\
                             &                                                                                                               & \multicolumn{1}{l}{CBEAL-4*} & 68.7\% (0.03)                           & \textbf{72.2\% (0.03)} & 70.2\% (0.03)                           & 66.8\% (0.03)                           & 64.2\% (0.02)                           & 74.8\% (0.03)                           \\
                             &                                                                                                               & \multicolumn{1}{l}{CBEAL-8}  & 66.7\% (0.03)                           & 71.4\% (0.03)                           & 69.3\% (0.03)                           & 67.7\% (0.03)                           & 66.8\% (0.03)                           & 75.5\% (0.03)                           \\ \cline{2-9} 
                             & \multirow{5}{*}{10\%}                                                                                         & CBEAL-2                      & 61.5\% (0.02)                           & 65.2\% (0.04)                           & \textbf{65.6\% (0.03)} & \textbf{60.0\% (0.03)} & 72.3\% (0.03)                           & 77.9\% (0.02)                           \\
                             &                                                                                                               & CBEAL-4                      & 57.7\% (0.02)                           & 63.2\% (0.03)                           & 65.5\% (0.03)                           & 58.7\% (0.04)                           & 72.7\% (0.02)                           & 73.5\% (0.03)                           \\
                             &                                                                                                               & CBEAL-6                      & \textbf{61.9\% (0.02)} & \textbf{68.3\% (0.03)} & 64.5\% (0.03)                           & 58.8\% (0.03)                           & \textbf{74.6\% (0.03)} & \textbf{80.1\% (0.03)} \\
                             &                                                                                                               & \multicolumn{1}{l}{CBEAL-4*} & 60.2\% (0.02)                           & 64.3\% (0.03)                           & 65.2\% (0.03)                           & 56.8\% (0.03)                           & 71.2\% (0.02)                           & 74.2\% (0.03)                           \\
                             &                                                                                                               & \multicolumn{1}{l}{CBEAL-8}  & 60.5\% (0.03)                           & 64.5\% (0.03)                           & 64.5\% (0.03)                           & 55.8\% (0.03)                           & 73.2\% (0.02)                           & 77.5\% (0.03)\\\bottomrule
\end{tabular}}

\end{table}
Comparing \cbeal-4 and \cbeal-4*, the latter generally outperforms the former. As detailed in Section 3.2.2, this performance difference is anticipated given the distinct acquisition decisions of agents pair 1 and pair 3.
Therefore, the incorporation of pair a and pair 3 (i.e., \cbeal-4*) achieves better exploration-exploitation trade-off during the learning process.
However, we will find \cbeal-6 still achieves better results, which gains more heterogeneous decisions by the incorporation of three pairs of agents.
However, this does not imply that simply adding more active learning agents invariably enhances performance. For instance, \cbeal-8 excels only in 2 of the 24 scenarios, even if its average performance marginally surpasses \cbeal-4. This is attributed to the inefficiency of \textsc{Exp4.P} when managing a vast agent pool. 
In such cases, \textsc{Exp4.P} will become inefficient in learning since it requires keeping explicit weights over the agents \citep{besbes2014stochastic}.
Therefore, \cbeal-6 is recommended as a default setting while the authors are encouraged to tune the settings based on the specific application scenario.

\subsection{Number of Acquired Samples}
We show the result of the average number of the acquired samples by the proposed method and the candidate agents in the simulation study in Table~\ref{tab:simu_numb}.

\begin{table}[H]
\caption{The average values and standard errors (in parenthesis) of the number of the acquired samples over 10 replications. The smallest numbers are highlighted in \textbf{bold}.}
\label{tab:simu_numb}
\resizebox{\textwidth}{!}{
\begin{tabular}{ccccccccc}
\toprule
\multicolumn{2}{c}{Level} & \multirow{3}{*}{Method} & \multicolumn{3}{c}{Sparsity   = 30\%} & \multicolumn{3}{c}{Sparsity   = 70\%} \\ \cline{1-2} \cline{4-9} 
\multirow{3}{*}{Disturbance} & \multirow{3}{*}{\shortstack{Percentage \\ of Positive \\ Samples}} &  & \multicolumn{3}{c}{Size of Data Stream} & \multicolumn{3}{c}{Size of Data Stream} \\ \cline{4-9} 
 &  &  & \multirow{2}{*}{500} & \multirow{2}{*}{1000} & \multirow{2}{*}{1500} & \multirow{2}{*}{500} & \multirow{2}{*}{1000} & \multirow{2}{*}{1500}\\
&  & \\\hline
\multirow{8}{*}{0\%} & \multirow{4}{*}{10\%} & Opt. Explor. & 48.00   (0.00) & 97.80 (0.19) & 148.00 (0.00) & 48.00 (0.00) & 98.00 (0.00) & 148.00 (0.00) \\
 &  & Opt. Exploit. & \textbf{37.20 (3.30)} & \textbf{81.00 (3.18)} & \textbf{108.00 (4.06)} & \textbf{44.00 (1.05)} & \textbf{79.60 (6.47)} & \textbf{114.70 (2.55)} \\
 &  & \textbf{CBEAL-2} & 45.60 (2.28) & 91.90 (2.18) & 119.20 (9.57) & 47.60 (0.38) & 98.00 (0.00) & 123.20 (9.53) \\
 &  & \textbf{CBEAL-6} & 45.80 (1.42) & 89.20 (3.25) & 132.30 (2.42) & 48.00 (0.00) & 88.20 (4.31) & 124.00 (5.78) \\\cline{2-9}
 & \multirow{4}{*}{5\%} & Opt. Explor. & 48.00 (0.00) & 98.00 (0.00) & 147.40 (0.57) & 48.00 (0.00) & 98.00 (0.00) & 147.00 (0.95) \\
 &  & Opt. Exploit. & \textbf{30.90 (2.68)} & \textbf{58.00 (3.14)} & \textbf{71.80 (5.26)} & \textbf{37.50 (2.38)} & \textbf{59.20 (3.55)} & \textbf{80.10 (3.51)} \\
 &  & \textbf{CBEAL-2} & 40.30 (2.83) & 68.70 (2.48) & 78.20 (6.11) & 43.60 (1.86) & 70.60 (3.80) & 83.70 (5.56) \\
 &  & \textbf{CBEAL-6} & 44.10 (1.78) & 68.10 (2.73) & 83.40 (2.78) & 43.60 (1.02) & 68.70 (4.35) & 83.20 (4.19) \\\hline
\multirow{8}{*}{3\%} & \multirow{4}{*}{10\%} & Opt. Explor. & 47.40 (0.57) & 97.70 (0.29) & 148.00 (0.00) & 48.00 (0.00) & 97.80 (0.19) & 148.00 (0.00) \\
 &  & Opt. Exploit. & \textbf{41.50 (3.28)} & \textbf{71.90 (3.86)} & \textbf{109.10 (6.25)} & 44.80 (0.99) & \textbf{81.40 (6.74)} & \textbf{102.50 (9.60)} \\
 &  & \textbf{CBEAL-2} & 48.00 (0.00) & 90.70 (0.95) & 120.60 (9.60) & 48.00 (0.00) & 97.00 (0.63) & 127.60 (5.84) \\
 &  & \textbf{CBEAL-6} & 47.20 (0.76) & 87.80 (6.37) & 136.90 (3.50) & \textbf{43.70 (1.91)} & 92.40 (3.30) & 129.20 (6.65) \\ \cline{2-9} 
 & \multirow{4}{*}{5\%} & Opt. Explor. & 47.70 (0.20) & 97.00 (0.95) & 148.00 (0.00) & 47.70 (0.20) & 98.00 (0.00) & 148.00 (0.00) \\
 &  & Opt. Exploit. & \textbf{30.10 (1.78)} & \textbf{51.80 (4.29)} & 84.10 (2.72) & \textbf{30.10 (1.78)} & \textbf{56.90 (4.48)} & \textbf{86.20 (4.10)} \\
 &  & \textbf{CBEAL-2} & 40.90 (1.57) & 64.60 (4.18) & 89.80 (6.87) & 40.90 (1.57) & 73.10 (3.97) & 101.60 (2.91) \\
 &  & \textbf{CBEAL-6} & 38.70 (2.87) & 61.60 (4.10) & \textbf{83.00 (5.99)} & 38.70 (2.81) & 65.60 (4.52) & 87.60 (4.56) \\ \bottomrule
\end{tabular}}
\end{table}
It is shown that the exploitation-oriented agent tends to acquire less samples under most of the scenarios whereas \cbeal (i.e., \cbeal-2 and \cbeal-6) lie in the middle of the two incorporated agents.
This is reasonable since the ensemble method encodes the exploration behaviour during the process, which will lead to a larger number of queries compared to pure exploitation. 

\subsection{Evaluation of the Bandit's Solver}

To evaluate the performance of the proposed bandit's solver, we track and summarize the cumulative reward gained by the solvers at the end of the online updating process and summarize them in Table ~\ref{tab:simun_reward}.
Here, the variants of \cbeal (i.e., \cbeal-2, \cbeal-4, and \cbeal-6) are included along with the top-performing exploitation-oriented agents achieving the highest accuracy on average (the same agent as listed in Table 2).

\begin{table}[H]
\caption{The average values and standard errors (in parenthesis) of cumulative reward over 10 replications. The smallest numbers are highlighted in \textbf{bold}.}
\label{tab:simun_reward}
\resizebox{\textwidth}{!}{
\begin{tabular}{ccccccccc}
\toprule
\multicolumn{2}{c}{Level} & \multirow{3}{*}{Method} & \multicolumn{3}{c}{Sparsity   = 30\%} & \multicolumn{3}{c}{Sparsity   = 70\%} \\ \cline{1-2} \cline{4-9} 
\multirow{3}{*}{Disturbance} & \multirow{3}{*}{\shortstack{Percentage \\ of Positive \\ Samples}} &  & \multicolumn{3}{c}{Size of Data Stream} & \multicolumn{3}{c}{Size of Data Stream} \\ \cline{4-9} 
 &  &  & \multirow{2}{*}{500} & \multirow{2}{*}{1000} & \multirow{2}{*}{1500} & \multirow{2}{*}{500} & \multirow{2}{*}{1000} & \multirow{2}{*}{1500}\\
&  & \\\hline
\multirow{8}{*}{0\%}         & \multirow{4}{*}{10\%}                                                                                         & \textbf{CBEAL-2} & \textbf{11.15 (0.59)}            & 14.05 (1.39)             & 20.85 (1.54)             & 10.30 (1.07)            & 12.05 (1.64)             & 22.25 (2.00)             \\
                             &                                                                                                               & \textbf{CBEAL-4} & 11.50 (0.98)            & 15.00 (0.79)             & \textbf{23.60 (2.06)}             & 11.65 (0.84)            & 10.50 (2.07)             & \textbf{26.75 (1.63)}            \\
                             &                                                                                                               & \textbf{CBEAL-6} & 9.70 (0.78)             & \textbf{15.30 (1.36)}             & 22.05 (1.40)             & \textbf{13.35 (1.41)}            & \textbf{13.65 (2.51)}             & 25.00 (2.66)             \\
                             &                                                                                                               & Opt. Exploit.                   & 7.80 (0.97)             & 11.35 (1.45)             & 19.30 (2.03)             & 10.50 (1.52)            & 11.25 (2.20)             & 18.65 (2.14)             \\ \cline{2-9} 
                             & \multirow{4}{*}{5\%}                                                                                          & \textbf{CBEAL-2} & 8.55 (0.83)             & 9.95 (1.80)              & 18.90 (1.04)             & 8.10 (1.17)             & 9.30 (0.99)              & 19.40 (1.99)             \\
                             &                                                                                                               & \textbf{CBEAL-4} & 8.80 (0.47)             & 9.35 (1.41)              & 20.35 (1.24)             & 5.80 (0.75)             & 9.85 (1.00)              & 20.05 (1.34)             \\
                             &                                                                                                               & \textbf{CBEAL-6} & \textbf{9.45 (0.53)}             & \textbf{10.15 (1.38)}             & \textbf{22.95 (1.46)}             & \textbf{8.35 (0.92)}             & \textbf{10.90 (1.24)}             & \textbf{21.60 (1.15)}             \\
                             &                                                                                                               & Opt. Exploit.                   & 7.96 (0.68)             & 9.15 (1.28)              & 19.65 (1.65)             & 8.12 (0.96)             & 9.85 (1.13)              & 20.56 (1.27)             \\ \hline
\multirow{8}{*}{3\%}         & \multirow{4}{*}{10\%}                                                                                         & \textbf{CBEAL-2} & 13.65 (1.03)            & 15.35 (1.44)             & 22.40 (1.37)             & 10.85 (0.91)            & 15.55 (0.83)             & 26.80 (2.50)             \\
                             &                                                                                                               & \textbf{CBEAL-4} & 14.10 (0.67)            & \textbf{17.95 (1.56)}             & 20.70 (2.06)             & 11.70 (0.88)            & 14.75 (1.22)             & 26.40 (1.83)             \\
                             &                                                                                                               & \textbf{CBEAL-6} & \textbf{14.40 (0.90)}            & 15.15 (0.94)             & \textbf{24.35 (1.11)}             & 12.25 (0.59)            & \textbf{16.70 (1.29)}             & \textbf{29.10 (2.03)}             \\
                             &                                                                                                               & Opt. Exploit.                   & 10.25 (0.79)            & 15.05 (1.36)             & 22.56 (1.26)             & \textbf{12.80 (0.96)}            & 16.40 (1.37)             & 25.60 (2.25)             \\ \cline{2-9} 
                             & \multirow{4}{*}{5\%}                                                                                          & \textbf{CBEAL-2} & 7.95 (0.77)             & 11.80 (1.31)             & 12.45 (1.12)             & 8.15 (0.95)             & 10.90 (1.10)             & 17.70 (1.82)             \\
                             &                                                                                                               & \textbf{CBEAL-4} & 8.25 (0.91)             & 14.40 (1.82)             & 11.24 (2.05)             & 8.20 (0.71)             & 11.45 (1.32)             & 16.50 (2.23)             \\
                             &                                                                                                               & \textbf{CBEAL-6} & \textbf{8.90 (0.81)}             & \textbf{14.45 (1.59)}             & \textbf{14.85 (1.31)}             & 8.80 (0.81)             & 12.10 (1.12)             & \textbf{19.80 (2.02)}             \\
                             &                                                                                                               & Opt. Exploit.                  & 6.85 (1.03)             & 13.65 (1.70)             & 14.65 (1.75)             & \textbf{9.25 (0.85)}             & \textbf{12.90 (1.27)}             & 16.40 (2.22)   \\ \bottomrule         
\end{tabular}}
\end{table}

By investigating the results, it shows that under most scenarios, \cbeal-6 will gain the highest cumulative reward. However, it may result in higher standard error in the cumulative reward, which is caused by the ensemble of more agents compared to \cbeal-2 and \cbeal-4. 
In general, the bandit's solver for \cbeal and its variants are effectively learning to gain higher rewards during the online acquisition process based on the proposed contextual information and reward function.
While the designed reward is not representing the learning performance of the base learner in a straightforward manner (i.e., a higher cumulative reward does not necessarily lead to a better learning performance), we still observe the positive correlation between the cumulative reward and the learning accuracy (Table 2). 
This also indicates the effectiveness of the designed reward function.
Additionally, the cumulative rewards attained by \cbeal and its variants consistently surpass those by the RAL-agent in most scenarios. 
This implies the superior performance of the proposed bandit's solver compared to the simple reinforced mechanism in RAL.

We further depict the trend of the cumulative reward of four solvers in Fig~\ref{fig:Reward}.
During the early stage, most solvers get more penalties compared to reward while in the later stage, \cbeal-2, \cbeal-4, and \cbeal-6 efficiently gain the higher reward compared to the benchmark. 
The optimal RAL agent does not make sufficient acquisitions due to the overestimated confidence on the new samples. 
Among its variants, \cbeal-6 achieves the best performance in this scenario by making most efficient and informative acquisitions.
\begin{figure}[H]
    \centering
    \captionsetup{justification=centering}
    \includegraphics[width=0.6\textwidth]{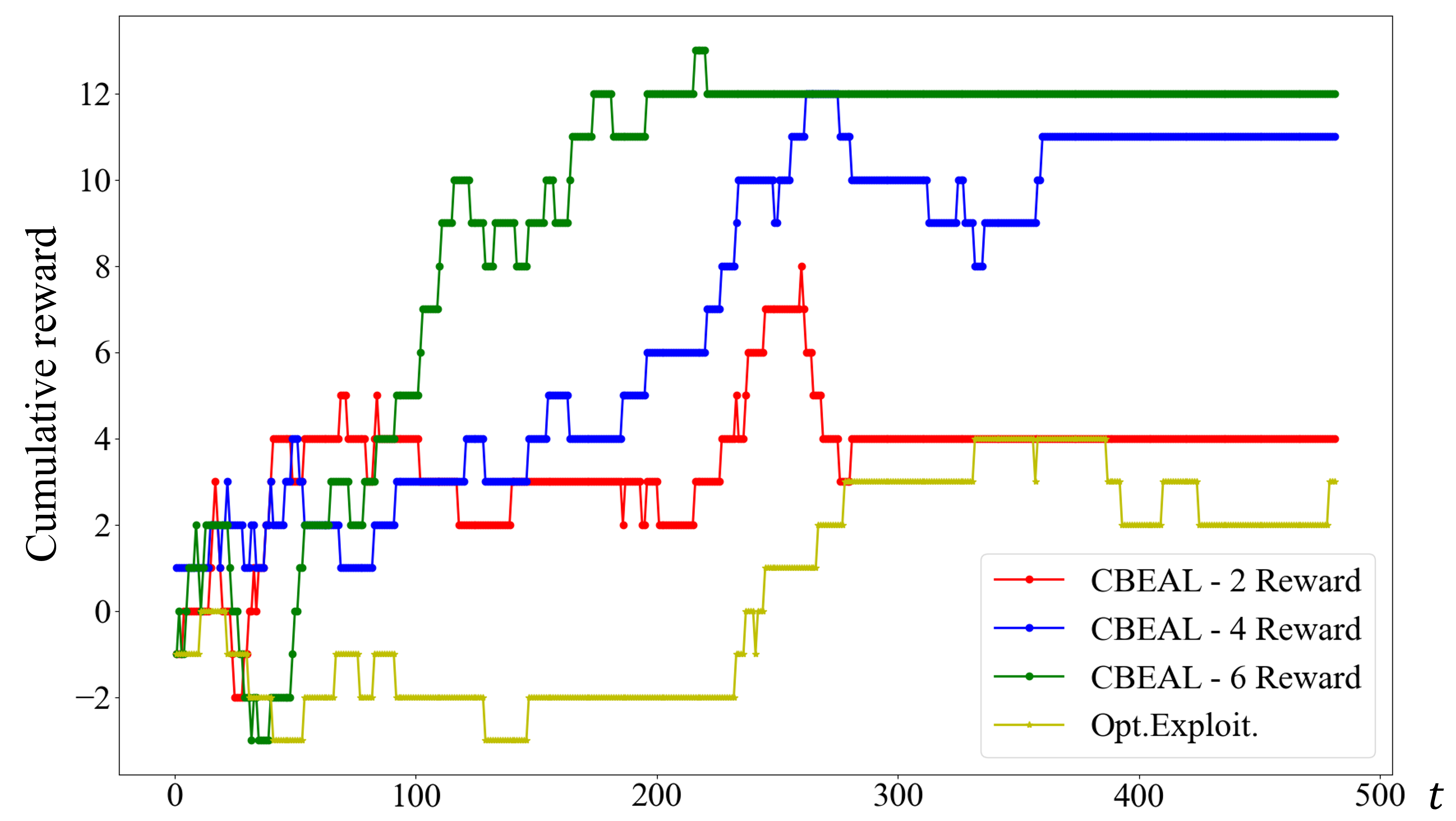}
    \caption{Cumulative reward gained during the learning scenario $n=400, ds = 0\%, sp = 30\%, PC = 10\%$}
    \label{fig:Reward}
\end{figure}

\subsection{Computational Time}
We measure the computational time of the proposed method in one replication of all simulation scenarios.
The simulation is implemented in Python 3.7.6 on a workstation with 3.70 GHz AMD Ryzen 5 5600X 6-Core Processor, 16.0 GB RAM and Windows 10.
It takes on average of 0.086 seconds to make an acquisition decision and train the \cbeal-6 model.
This guarantees the practical implementation of the proposed method in manufacturing processes.

\section{Assumption on Input Data Distribution}
\begin{figure}[H]
    \centering
    \captionsetup{justification=centering}
    \includegraphics[width=0.6\textwidth]{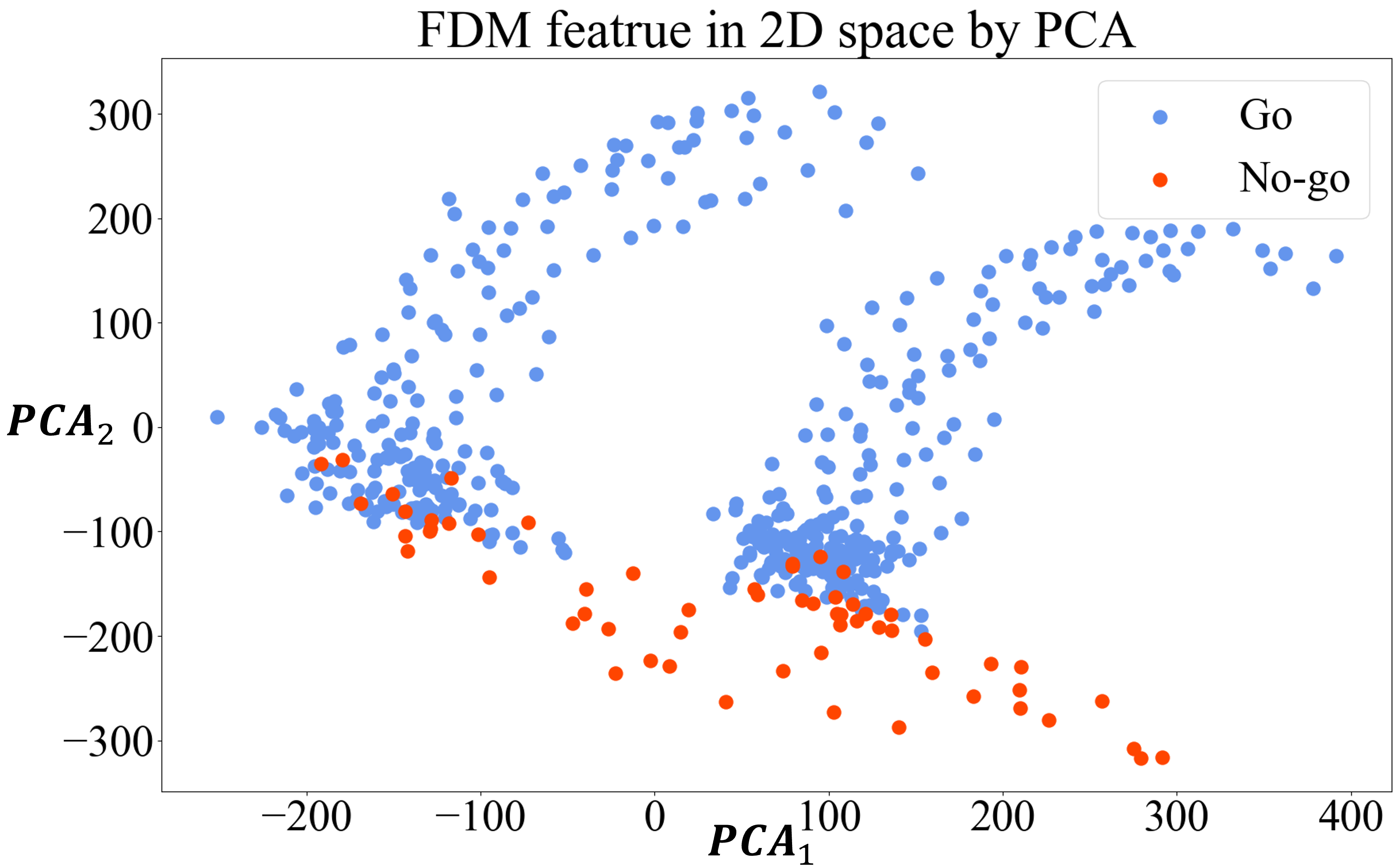}
    \caption{Distribution of the first 500 samples with reduced dimensions by PCA in case study }
    \label{fig:FDM_2d}
\end{figure}
To validate the third assumption made on the input data distribution, the input $\bm{X}$ of the first 500 samples in the FDM process is reduced to the two-dimensional space by principal component analysis (PCA) and visualized in Fig.\ref{fig:FDM_2d}.
It can be clearly observed that there exist multiple clusters in the class of good samples, which validates the assumption on the online data stream in the case study.

\end{document}